\renewcommand{\eqref}[1]{Eq.~(\ref{#1})}
\renewcommand{\P}{\mathbb{P}}
\newcommand{\ind}[1]{\ 1\hspace{-2.3mm}{1}{\left\{#1\right\}}}
\newcommand{\h}{\ensuremath{\mathbf{h}}}
\newcommand{\x}{\ensuremath{x}}
\newcommand{\hDelta}{\widehat{\Delta}}
\newcommand{\cH}{\mathcal{H}}
\newcommand{\cX}{\mathcal{X}}
\newcommand{\cY}{\mathcal{Y}}
\newcommand{\cD}{\mathcal{D}}
\newcommand{\E}{\mathbb{E}}
\newcommand{\Ecal}{\mathcal{E}}
\newcommand{\y}{\mathbf{y}}
\newtheorem{lemma}{Lemma}
\newtheorem{theorem}{Theorem}
\renewcommand{\vec}[1]{\mbox{\boldmath $#1$}}
\newcommand{\pp}{{\vec{p}}}
\newcommand{\sfrac}[2]{\mbox{$\frac{#1}{#2}$}}
\newcommand{\field}[1]{\mathbb{#1}}
\newcommand{\Reals}{\field{R}}
\newcommand{\R}{\Reals}
\newcommand{\baselearners}{\mathcal M}
\DeclareMathOperator{\TrainNN}{TrainNN}
\title{Neural Active Learning with Performance Guarantees}
\author{%
  Pranjal Awasthi \\
  Google Research NY
  \And
  Christoph Dann \\
  Google Research NY
  \And
  Claudio Gentile \\
  Google Research NY
  \And
  Ayush Sekhari \\
  Cornell University
  \And
  Zhilei Wang\\
  New York University
}
\begin{document}

\maketitle

\begin{abstract}
We investigate the problem of active learning in the streaming setting
in non-parametric regimes, where the labels are stochastically generated from a class of functions on which we make no assumptions whatsoever. We rely on recently proposed Neural Tangent Kernel (NTK) approximation tools to construct a suitable neural embedding that determines the feature space the algorithm operates on and the learned model computed atop.
Since the shape of the label requesting threshold is tightly related to the complexity of the function to be learned, which is a-priori unknown, we also derive a version of the algorithm which is agnostic to any prior knowledge. This algorithm relies on a regret balancing scheme to solve the resulting online model selection problem, and is computationally efficient.
We prove joint guarantees on the cumulative regret and number of requested labels which depend on the complexity of the labeling function at hand. 
In the linear case, these guarantees recover known minimax results of the generalization error as a function of the label complexity in a standard statistical learning setting.
\end{abstract}

\section{Introduction}\label{s:intro}

Supervised learning is a fundamental paradigm in machine learning and is at the core of modern breakthroughs in deep learning \cite{krizhevsky2012imagenet}. A machine learning system trained via supervised learning requires access to labeled data collected via recruiting human experts, crowdsourcing, or running expensive experiments. Furthermore, as the complexity of current deep learning architectures grows, their requirement for labeled data increases significantly. The area of {\em active learning} aims to reduce this data requirement by studying the design of algorithms that can learn and generalize from a small carefully chosen subset of the training data \cite{cohn1994improving, settles2009active}.

The two common formulations of active learning are {\em pool based} active learning, and {\em sequential (or streaming)} active learning. In the pool based setting \cite{lewis1994sequential}, the learning algorithm has access to a large unlabeled set of data points, and the algorithm can ask for a subset of the data to be labeled. In contrast, in the sequential setting, data points arrive in a streaming manner, either adversarially or drawn i.i.d. from a distribution, and the algorithm must decide whether to query the label of a given point or not \cite{dagan1995committee}.

From a theoretical perspective, active learning has typically been studied under models inspired by the probably approximately correct~(PAC) model of learning \cite{valiant1984theory}. Here one assumes that there is a pre-specified class $\cH$ of functions such that the {\em target} function mapping examples to their labels either lies in $\cH$ or has a good approximation inside the class. Given access to unlabeled samples generated i.i.d. from the distribution, the goal is to query for a small number of labels and produce a hypothesis of low error. 

In the {\em parametric} setting, namely, when 
the class of functions 
$\cH$ has finite VC-dimension (or finite disagreement coefficient) \cite{hanneke2014theory}, the rate of convergence of active learning, i.e., the rate of decay of the error as a function of the number of label queries ($N$), is of the form $\nu\,N^{-1/2} + e^{-N}$, where $\nu$ is the population loss of the best function in class $\cH$. This simple finding shows that active learning behaves like passive learning when $\nu > 0$, while very fast rates can only be achieved under low noise ($\nu \approx 0$) conditions.
This has been worked out in, e.g., \cite{ha07,dhm07,bhw08,bbl09,bdl09,rr11}.

While the parametric setting comes with 
methodological advantages, the above shows that in order to unleash the true power of active learning, two properties are desirable: (1) A better interplay between the input distribution and the label noise and, (2) a departure from the parametric setting leading us to consider wider classes of functions (so as to reduce the approximation error $\nu$ to close to 0). To address the above, there has also been considerable theoretical work in recent years on non-parametric active learning \cite{cn08, mi12, pmlr-v65-locatelli}. However, these approaches suffer from the curse of dimensionality and do not lead to computationally efficient algorithms. A popular approach that has been explored empirically in recent works is to use Deep Neural Networks (DNNs) to perform active learning (e.g., \cite{pop2018deep, kirsch2019batchbald, sener2017active, ash2019deep, zhdanov2019diverse}). While these works empirically demonstrate the power of the DNN-based approach to active learning, they do not come with provable guarantees. The above discussion raises the following question: {\em Is provable and computationally efficient active learning possible in non-parametric settings?}

We answer the above question in the affirmative by providing the first, to the best of our knowledge, computationally efficient algorithm for active learning based on Deep Neural Networks. Similar to non-parametric active learning, we avoid fixing a function class a-priori. However, in order to achieve computational efficiency, we instead propose to use over-parameterized DNNs, where the amount of over-parameterization depends on the input data at hand. We work in the sequential setting, and propose a simple active learning algorithm that forms an uncertainty estimate for the current data point based on the output of a DNN, followed by a gradient descent step to update the network parameters if the data point is queried. We show that under standard low-noise assumptions \cite{mt99} our proposed algorithm achieves fast rates of convergence. 

In order to analyze our 
algorithm, we use tools from the theory of Neural Tangent Kernel~(NTK) approximation \cite{jacot+18, arora+19, GD_on_two-layer_network} that allows us to analyze the dynamics of gradient descent by considering a linearization of the network around random initialization. Since we study the non-parametric regime, the convergence rates of our algorithm depend on a data-dependent complexity term that is expected to be small in practical settings, but could be very large in worst-case scenarios. Furthermore, the algorithm itself needs an estimate of complexity term in order to form accurate uncertainty  estimates. We show that one can automatically adapt to the magnitude of the unknown complexity term by designing a novel model selection algorithm inspired by recent works in model selection in multi-armed bandit settings \cite{pacchiano+20,pdgb20}. Yet, several new insights are needed to ensure that the model selection algorithm can simultaneously achieve low generalization error without 
spending a significant amount of budget on label queries. 




\section{Preliminaries and Notation}\label{s:prel}
%
%
Let $\cX$ denote the input space, $\cY$ the output space, and $\cD$ an unknown distribution over $\cX \times \cY$. We denote the corresponding random variables by $x$ and $y$. We also denote by $\cD_\cX$ the marginal distribution of $\cD$ over $\cX$, and by  $\cD_{\cY|x_0}$ the conditional distribution of random variable $y$ given $x = x_0$.
Moreover, given a function $f$ (sometimes called a hypothesis or a model) mapping $\cX$ to $\cY$, the conditional {\em population loss} (often referred to as conditional {\em risk}) of $f$ is denoted by $L(f\,|\,x)$, and defined as $L(f\,|\,x) = \E_{y \sim \cD_{\cY|x}}[\ell(f(x), y)\,|\,x]$, where $\ell\, \colon\,\cY \times \cY \to [0, 1]$ is a {\em loss} function.
For ease of presentation, we restrict to a binary classification setting with 0-1 loss, whence $\cY =\{-1,+1\}$, and $\ell(a, y) = \ind{a \neq y} \in \{0,1\}$, $ \ind{\cdot}$ being the indicator function of the predicate at argument. 
When clear from the surrounding context, we will omit subscripts like ``$y \sim \cD_{\cY|x}$"
from probabilities and expectations.


We investigate a {\em non-parametric} setting of active learning where the conditional distribution of $y$ given $x$ is defined through an unknown function $h\,:\, \cX^2 \rightarrow [0,1]$ such that 
\begin{equation}\label{e:bayes}
\P(y=1\,|\,x) = h((x,0))\qquad \P(y=-1\,|\,x) = h((0,x))~,
\end{equation}
where $0 \in \cX$, $(x_1,x_2)$ denotes the concatenation (or pairing) of the two instances $x_1$ and $x_2$ (so that $(x,0)$ and $(0,x)$ are in $\cX^2$) and, for all $x \in \cX$ we have $h((x,0)) + h((0,x)) = 1$. We make no explicit assumptions on $h$, other than its well-behavedness w.r.t. the data $\{x_t\}_{t=1}^T$ at hand through the formalism of Neural Tangent Kernels (NTK) -- see below. 
As a simple example, in the linear case, $\cX$ is the $d$-dimensional unit ball, $h(\cdot,\cdot)$ is parametrized by an unknown unit vector $\theta \in \R^d$, and 
\(
h((x_1,x_2))=\frac{1+\langle (\theta, -\theta), (x_1,x_2)\rangle}{2}~,
\)
so that 
\(
h((x,0))= \frac{1+\langle\theta, x\rangle}{2}
\)
and~
\(h((0,x))= \frac{1-\langle\theta, x\rangle}{2},
\)
where $\langle\cdot,\cdot\rangle$ is the usual dot product in $\R^d$. 

We consider a streaming setting of active learning where, at each round $t \in [T] = \{1, \ldots, T\}$, a pair $(x_t,y_t) \in \cX\times\cY$ is drawn i.i.d.\ from $\cD$. The learning algorithm receives as input only $x_t$, and is compelled to both issue a prediction $a_t$ for $y_t$ and, at the same time, decide on-the-fly whether or not to observe $y_t$. These decisions can only be based on past observations.
Let $\E_t$ denote the conditional expectation 
$
\E[\cdot\,| (x_1,y_1)\ldots,  (x_{t-1},y_{t-1}), x_t ],
$
and we introduce the shorthand
\[
x_{t,a} = 
\begin{cases}
(x_t,0) &{\mbox{if $a=1$}}\\
(0,x_t) &{\mbox{if $a=-1$}}~.
\end{cases}
\] 
Notice that with this notation $\E[\ell(a,y_t)\,|\,x_t] = 1-h(x_{t,a})$, for all $a \in \cY$.
We quantify the accuracy of the learner's predictions through its (pseudo) \emph{regret},
defined as 
\[
R_T  
~=~ \sum_{t=1}^T \Bigl( \E_t[\ell(a_t,y_t)\,|\,x_t] - \E[\ell(a^*_t,y_t)\,|\,x_t]\Bigl) 
~=~ \sum_{t=1}^T \left( h(x_{t,a^*_t})-h(x_{t,a_t})\right) ~,
\]
where $a_t^*$ is the Bayesian-optimal classifier on instance $x_t$, that is, $a_t^* =\arg\max_{a\in\cY} h(x_{t,a})$. Additionally, we are interested in bounding the number of labels $N_T$ the algorithm decides to request. 
Our goal is to {\em simultaneously} bound $R_T$ and $N_T$ with high probability over the generation of the sample $\{(x_t,y_t)\}_{t=1,\ldots,T}$~.

Throughout this work, we consider the following common low-noise condition on the marginal distribution $\cD_{\cX}$ (Mammen-Tsybakov low noise condition~\cite{mt99}):
There exist absolute constants $c > 0$, 
and $\alpha \geq 0$ such that
for all $\epsilon \in (0,1/2)$
we have
\(
 \P\bigl(|h((x,0))-\frac{1}{2}| < \epsilon \bigr) \le c\,\epsilon^\alpha.
\)
In particular,
$\alpha = \infty$ gives the so-called {\em hard margin} condition 
\(
 \P\bigl(|h((x,0))-\frac{1}{2}| < \epsilon \bigr) =0.
\)
while, at the opposite extreme, exponent $\alpha = 0$ (and $c=1$) results in {\em no assumptions whatsoever} on $\cD_{\cX}$. 
For simplicity, we shall assume throughout that the above low-noise condition holds for\footnote
{
A more general formulation requires the above to hold only for $\epsilon \leq \epsilon_0$, where $\epsilon_0 \in (0,1/2)$ is a third parameter. We shall omit this extra parameter from our presentation.
}
$c=1$.

Our techniques are inspired by the recent work \cite{zhou2020neuralucb} from which we also borrow some notation. We are learning the class of functions $\{h\}$
by means of fully connected neural networks
\[
f(x, {\theta}) =\sqrt{m} W_n\sigma(...\sigma(W_1 x))~,
\]
where $\sigma$ is a ReLU activation function $\sigma(x) = \max\{0,x\}$, $m$ is the width of the network 
and $n\geq 2$ is its depth. In the above, $\theta\in\R^p$ collectively denotes the set of weights $\{W_1,W_2,\ldots, W_n\}$ of the network, where $p=m+2md+m^2(n-2)$ is their number, and the input $x$ at training time should be thought of as some $x_{t,a} \in \cX^2$.

With any depth-$n$ network and data points $\{x_{t,a}\}_{t=1,\ldots,T,\,a=\pm 1}$ we associate a depth-$n$ NTK matrix as follows~\cite{jacot+18}.
First, rename $\{x_{t,a}\}_{t=1,\ldots,T,\,a=\pm 1}$ as $\{x^{(i)}\}_{i=1,\ldots,2T}$. Then define matrices
\[
{\widetilde H^{(1)}} = \left[H^{(1)}_{i,j}\right]_{i,j=1}^{2T\times 2T}
\qquad 
\Sigma^{(1)} = \left[\Sigma^{(1)}_{i,j}\right]_{i,j=1}^{2T\times 2T}\qquad {\mbox{with}}\qquad
H^{(1)}_{i,j} = \Sigma^{(1)}_{i,j} = \langle x^{(i)},x^{(j)} \rangle~,
\]
and then, for any $k \leq n$ and $i,j = 1,\ldots, 2T$, introduce the bivariate covariance matrix
\(
A^{(k)}_{i,j} 
= 
\begin{bmatrix}
\Sigma^{(k)}_{i,i} & \Sigma^{(k)}_{i,j}\\
\Sigma^{(k)}_{i,j} & \Sigma^{(k)}_{j,j}
\end{bmatrix}
\)
by which we recursively define
\(
\Sigma^{(k+1)}_{i,j} = 2\E_{(u,v)\sim N(0,A^{(k)}_{i,j})} [\sigma(u)\sigma(v)]
\)
and 
\(
{\widetilde H}^{(k+1)}_{i,j} = 2{\widetilde H}^{(k)}_{i,j}\E_{(u,v)\sim N(0,A^{(k)}_{i,j})} [\ind{u\geq 0}\ind{v\geq 0}] + \Sigma^{(k+1)}_{i,j}~.
\)
The $2T\times 2T$-dimensional matrix $H = \frac{1}{2}({\widetilde H}^{(n)}+\Sigma^{(n)})$ is called the Neural Tangent Kernel (NTK) matrix of depth $n$ (and infinite width) over the set of points $\{x_{t,a}\}_{t=1,\ldots,T,\,a=\pm 1}$. The reader is referred to \cite{jacot+18} for more details on NTK.

In order to avoid heavy notation, we assume $||x_t|| = 1$ for all $t$. Matrix $H$ is positive semi-definite by construction but, as is customary in the NTK literature (e.g., \cite{arora+19,cg19,du+19}), we assume it is actually positive definite (hence invertible) with smallest eigenvalue $\lambda_0 > 0$. This is a mild assumption that can be shown to hold if no two vectors $x_t$ are aligned to each other. 

We measure the complexity of the function $h$ at hand in a way similar to \cite{zhou2020neuralucb}. Using the same rearrangement of $\{x_{t,a}\}_{t=1,\ldots,T,\,a=\pm 1}$ into $\{x^{(i)}\}_{i=1,\ldots,2T}$ as above, let $\h$ be the $2T$-dimensional (column) vector whose $i$-th component is $h(x^{(i)})$.
Then, we define the complexity $S_{T,n}(h)$ of $h$ over $\{x_{t,a}\}_{t=1,\ldots,T,\,a=\pm 1}$ w.r.t. an NTK of depth $n$ as
\(
S_{T,n}(h) = \sqrt{\h^\top H^{-1} \h}~.
\)
Notice that this notion of (data-dependent) complexity is consistent with the theoretical findings of \cite{arora+19}, who showed that for a two-layer network the bound on the generalization performance is dominated by $\y^\top H^{-1} \y$, where $\y$ is the vector of labels. Hence if $\y$ is aligned with the top eigenvectors of $H$ the learning problem becomes easier. In our case, vector $\h$ plays the role of vector $\y$.
Also observe that $S^2_{T,n}(h)$ can in general be as big as linear in $T$ (in which case learning becomes hopeless with our machinery). In the special case where $h$ belongs to the RKHS induced by the NTK, one can upper bound $S_{T,n}(h)$ by the norm of $h$ in the RKHS. The complexity term $S_{T,n}(h)$ is typically {\em unknown} to the learning algorithm, and it plays a central role in both regret and label complexity guarantees. Hence the algorithm needs to {\em learn} this value as well during its online functioning. Apparently, this aspect of the problem has been completely overlooked by \cite{zhou2020neuralucb} (as well as by earlier references on contextual bandits in RKHS, like \cite{chowgop17}), where a (tight) upper bound on $S_{T,n}(h)$ is assumed to be available in advance. We will cast the above as a {\em model selection} problem in active learning, where we adapt and largely generalize to active learning the regret balancing technique from \cite{pacchiano+20,pdgb20}. In what follows, we use the short-hand 
\(
g(x;\theta)=\nabla_{\theta} f(x, \theta)~
\)
and, for a vector $g \in \R^p$ and matrix $Z \in \R^{p\times p}$, we often write $\sqrt{g^\top Z g}$ as $||g||_{Z}$, so that $S_{T,n}(h) = ||\h||_{H^{-1}}$.

\subsection{Related work}\label{ss:related}
The main effort in theoretical works in active learning is to obtain rates of convergence of the population loss of the hypothesis returned by the algorithm as a function of the number $N$ of requested labels. We emphasize that most of these works, that heavily rely on approximation theory, are {\em not} readily comparable to ours, since our goal here is not to approximate $h$ through a DNN on the entire input domain, but only on the data at hand.

As we recalled in the introduction, in the {\em parametric} setting 
the convergence rates are of the form $\nu\,N^{-1/2} + e^{-N}$, where $\nu$ is the population loss of the best function in class $\cH$. Hence, active learning rates behave like the passive learning rate $N^{-1/2}$ when $\nu > 0$, while fast rates can only be achieved under very low noise ($\nu \approx 0$) conditions. In this respect, relevant references include \cite{ha09,ko10} where, e.g., in the realizable case (i.e., when the Bayes optimal classifier lies in $\cH$), minimax active learning rates of the form $N^{-\frac{\alpha+1}{2}}$ are shown to hold for adaptive algorithms that do not know beforehand the noise exponent $\alpha$. 
In non-parametric settings, a comprehensive set of results has been obtained by \cite{pmlr-v65-locatelli}, which builds on and significantly improves over earlier results from \cite{mi12}. Both papers work under smoothness (Holder continuity/smoothness) assumptions. In addition, \cite{mi12} requires $\cD_{\cX}$ to be (quasi-)uniform on $\cX = [0,1]^d$. In \cite{pmlr-v65-locatelli} the minimax active learning rate $N^{-\frac{\beta(\alpha+1)}{2\beta+d}}$ is shown to hold for $\beta$-Holder classes, where exponent $\beta$ plays the role of the complexity of the class of functions to learn, and $d$ is the input dimension. This algorithm is adaptive to the complexity parameter $\beta$, and is therefore performing a kind of model selection.
Notice that minimax rates in the parametric regime are recovered by setting $\beta \rightarrow \infty$.
Of a somewhat similar flavor is an earlier result by \cite{ko10}, where a convergence rate of the form
$N^{-\frac{\alpha+1}{2+\kappa\alpha}}$ is shown, being $\kappa$ the metric entropy of the class (again, a notion of complexity). A refinement of the results in \cite{pmlr-v65-locatelli} has recently been obtained by \cite{ns21} where, following \cite{cd14}, a more refined notion of smoothness for the Bayes classifier is adopted which, however, also implies more restrictive assumptions on the marginal distribution $\cD_{\cX}$.

Model selection of the scale of a Nearest-Neighbor-based active learning algorithm is also performed in \cite{ksu16}, whose main goal is to achieve data-dependent rates based on the noisy-margin properties of the random sample at hand, rather than those of the marginal distribution.
Their active learning rates are not directly comparable to ours and, unlike our paper, the authors work in a {\em pool-based} scenario, where all unlabeled points are available beforehand.
Finally, an interesting investigation in active learning for over-parametrized and interpolating regimes is contained in  \cite{kn20}. The paper collects a number of interesting insights in active learning for 2-layer Neural Networks and Kernel methods, but it restricts to either uniform distributions on the input space or cases of well-clustered data points, with no specific regret and query complexity guarantees, apart from very special (though insightful) cases.

\section{Basic Algorithm}\label{s:basic}
Our first algorithm (Algorithm \ref{alg:frozen NTK SS}) uses randomly initialized, but otherwise frozen, network weights (a more refined algorithm where the network weights are updated incrementally is described and analyzed in the appendix).
Algorithm \ref{alg:frozen NTK SS} is an adaptation to active learning of the neural contextual bandit algorithm of \cite{zhou2020neuralucb}, and shares similarities with an earlier selective sampling algorithm analyzed in \cite{Dekel:2012:SSA:2503308.2503327} for the linear case. The algorithm generates network weights $\theta_0$ by independently sampling from Gaussian distributions of appropriate variance, and then uses $\theta_0$ to stick with a gradient mapping $\phi(\cdot)$ which will be kept frozen from beginning to end. The algorithm also takes as input the complexity parameter $S = S_{T,n}(h)$ of the underlying function $h$ satisfying (\ref{e:bayes}). We shall later on remove 
the assumption of the prior knowledge of $S_{T,n}(h)$. In particular, removing the latter, turns out to be quite challenging from a technical standpoint, and gives rise to a complex online model selection algorithms for active learning in non-parametric regimes.

\begin{algorithm2e}[t]
\SetKwSty{textrm} 
\SetKwFor{For}{for}{}{}
\SetKwIF{If}{ElseIf}{Else}{if}{}{else if}{else}{}
\SetKwFor{While}{while}{}{}
{\bf Input:}~
Confidence level $\delta$, complexity parameter $S$, network width $m$, 
and depth $n$~.\\
{\bf Initialization:}
\begin{itemize}
\item Generate each entry of $W_k$ independently from $\mathcal{N}(0, 2/m)$, for $k \in [n-1]$, and each entry of $W_n$ independently from $\mathcal{N}(0, 1/m)$;
\item Define $\phi(x)=g(x;\theta_0)/\sqrt{m}$, where $\theta_0 = \langle W_1,\ldots, W_n\rangle\in\R^p$ is the (frozen) weight vector of the neural network so generated;
\item Set $Z_0 = I \in \R^{p\times p}$,~ $b_0 = 0 \in \R^p$~.
\end{itemize}
\For{$t=1,2,\ldots,T$}{
Observe instance $x_t \in \cX$ and build $x_{t,a} \in \cX^2$, for $a \in \cY$\\
Set $\mathcal{C}_{t-1}=\{\theta : \|\theta - \theta_{t-1}\|_{Z_{t-1}}\leq \frac{\gamma_{t-1}}{\sqrt{m}}\}$,
~~with~~
$
\gamma_{t-1}=\sqrt{\log\det Z_{t-1} + 2\log(1/\delta)}+S
$\\
Set
\[
U_{t,a}=\sqrt{m}\max_{\theta\in \mathcal{C}_{t-1}}\langle \phi(x_{t,a}),\theta-\theta_0\rangle
=
\sqrt{m}\langle \phi(x_{t,a}), \theta_{t-1}-\theta_0\rangle + \gamma_{t-1}\|\phi(x_{t,a})\|_{Z_{t-1}^{-1}}
\]
Predict $a_t=\arg\max_{a\in \cY} U_{t,a}$\\
Set
$I_t = \ind{|U_{t,a_t}-1/2|\leq B_t} \in \{0,1\}$
~~~ with~~~~~$B_t=B_t(S)=2\gamma_{t-1}\|\phi(x_{t,a_t})\|_{Z_{t-1}^{-1}}$\\
\eIf{$I_t = 1$}{
Query $y_t \in \cY$, and set loss $\ell_t = 
\ell(a_t,y_t)$\\
Update
\vspace{-0.15in}
\begin{align*}
    Z_t &= Z_{t-1}+\phi(x_{t,a_t})\phi(x_{t,a_t})^\top\\
    b_t&= b_{t-1}+(1-\ell_t)\phi(x_{t,a_t})\\
    \theta_t&= Z_t^{-1}b_t/\sqrt{m}+\theta_0
\end{align*}
}
{
\vspace{-0.14in}~~
$Z_t=Z_{t-1}$,~ $b_t=b_{t-1}$,~ $\theta_t=\theta_{t-1}$,~ $\gamma_t=\gamma_{t-1}$,~ $\mathcal{C}_t=\mathcal{C}_{t-1}$~.
}
}
\caption{Frozen NTK Selective Sampler.}
\label{alg:frozen NTK SS}
\end{algorithm2e}

At each round $t$, Algorithm \ref{alg:frozen NTK SS} receives an instance $x_t \in \cX$, and constructs the two augmented vectors $x_{t,1} = (x_t,0)$ and $x_{t,-1} = (0,x_t)$ (intuitively corresponding to the two ``actions" of a contextual bandit algorithm). The algorithm predicts the label $y_t$ associated with $x_t$ by maximizing over $a \in \cY$ an upper confidence index $U_{t,a}$ stemming from the linear approximation $h(x_{t,a}) \approx \sqrt{m}\langle\phi(x_{t,a}),\theta_{t-1}-\theta_0\rangle$ subject to ellipsoidal constraints $\mathcal{C}_{t-1}$, as in standard contextual bandit algorithms operating with the frozen mapping $\phi(\cdot)$.
In addition, in order to decide whether or not to query label $y_t$, the algorithm estimates its own uncertainty by checking to what extent $U_{t,a_t}$ is close to $1/2$. This uncertainty level is ruled by the time-varying threshold $B_t$, which is expected to shrink to 0 as time progresses. Notice that $B_t$ is a function of $\gamma_{t-1}$, which in turn includes in its definition the complexity parameter $S$.
Finally, if $y_t$ is revealed, the algorithm updates its least-squares estimator $\theta_t$ by a rank-one adjustment of matrix $Z_t$ and an additive update to the bias vector $b_t$. No update is taking place if the label is not queried. The following is our initial building block.\footnote
{
All proofs are in the appendix.
}
\begin{theorem}\label{thm: statistical learning theorem}
Let Algorithm \ref{alg:frozen NTK SS} be run with parameters $\delta$, $S$, $m$, and $n$ on an i.i.d. sample $(x_1,y_1),\ldots, (x_T,y_T) \sim \cD$, where the marginal distribution $\cD_{\cX}$ fulfills the low-noise condition with exponent $\alpha \geq 0$ w.r.t. a function $h$ that satisfies (\ref{e:bayes}) and such that $\sqrt{2}S_{T,n}(h) \leq S$.
Then with probability at least $1-\delta$ the cumulative regret $R_T$ and the total number of queries $N_T$ are simultaneously upper bounded as follows: 
%
\begin{align*}
     R_T &= O\biggl(L_H^\frac{\alpha+1}{\alpha+2}\Bigl(L_H + \log(\log T/\delta)+ S^2\Bigl)^\frac{\alpha+1}{\alpha+2}T^\frac{1}{\alpha+2}\biggr)\\
     N_T &= O\biggl(L_H^\frac{\alpha}{\alpha+2}\Bigl(L_H + \log(\log T/\delta)+ S^2\Bigl)^\frac{\alpha}{\alpha+2}T^\frac{2}{\alpha+2}\biggr)~,
\end{align*}
where $L_H = \log \det(I+H)$, $H$ being the NTK matrix of depth $n$ over the set of points $\{x_{t,a}\}_{t=1,\ldots,T,\,a=\pm 1}$.
\end{theorem}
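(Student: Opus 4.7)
The plan is to combine NTK linearization with a UCB-style selective-sampling analysis that exploits the Mammen--Tsybakov low-noise condition and an elliptical-potential bound. First, using the NTK approximation machinery of \cite{arora+19,zhou2020neuralucb}, for sufficiently large width $m$ there exists $\theta_h \in \R^p$ with $\sqrt{m}\|\theta_h - \theta_0\|_2 = O(S_{T,n}(h))$ such that $h(x_{t,a}) = \sqrt{m}\langle \phi(x_{t,a}),\theta_h-\theta_0\rangle$ up to an $O(1/\sqrt{m})$ approximation error. Applying a self-normalized martingale bound (Abbasi-Yadkori et al.\ style) to the ridge estimator $\theta_t$, which is fed by the responses $1-\ell_t$ whose conditional mean is $h(x_{t,a_t})$, the choice of $\gamma_t$ is exactly what is needed to guarantee that $\theta_h \in \mathcal{C}_t$ for all $t$ with probability at least $1-\delta$. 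On this good event, $|U_{t,a} - h(x_{t,a})| \leq B_t/2$ for both $a \in \cY$.

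Next comes the per-round analysis. When $I_t=0$, the test $|U_{t,a_t}-1/2|>B_t$, combined with $h(x_{t,1})+h(x_{t,-1})=1$, forces $U_{t,a_t}>1/2+B_t$ (the opposite sign is excluded because it would drive both $h$-values strictly below $1/2$), hence $h(x_{t,a_t})>1/2$, so $a_t=a_t^*$ and the instantaneous regret is zero. When $I_t=1$, the UCB rule together with the width bound gives instantaneous regret at most $O(B_t)$, while also $|h(x_{t,a_t})-1/2|\leq 3B_t/2$, so this round's input lies in a low-noise shell of radius $\Theta(B_t)$.

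The global bounds are obtained by a threshold-splitting and elliptical-potential argument. Since $Z_t$ updates only on queried rounds, the standard elliptical-potential lemma yields $\sum_{t:I_t=1}\|\phi(x_{t,a_t})\|_{Z_{t-1}^{-1}}^2 = O(\log\det Z_T)$, and by the NTK identity relating $Z_T$ to $I+H$ this is $O(L_H)$. Separately, Bernstein/Freedman concentration applied to the indicators $\ind{|h((x_t,0))-1/2|\leq \tau}$ gives, via a dyadic peeling over $\tau$, the uniform-in-$\tau$ bound $|\{t\leq T: |h((x_t,0))-1/2|\leq\tau\}|=O(T\tau^\alpha+\log(\log T/\delta))$ with high probability. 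Splitting queried rounds into $B_t>\epsilon$ versus $B_t\leq\epsilon$, the first class contributes $O(\gamma_T^2 L_H/\epsilon)$ to regret and $O(\gamma_T^2 L_H/\epsilon^2)$ to queries, while the second class (thanks to the shell bound of step two combined with low-noise concentration) contributes $O(T\epsilon^{\alpha+1})$ to regret and $O(T\epsilon^\alpha)$ to queries. Optimizing $\epsilon$ and substituting $\gamma_T^2 = O(L_H+\log(1/\delta)+S^2)$ yields both claimed rates.

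The main obstacle will be the first step: controlling the NTK approximation error uniformly across the entire run, since the gradient feature map $\phi$ is frozen at the random initialization $\theta_0$ while $\theta_t$ drifts over $T$ updates. One has to ensure that the drift stays inside the regime where the linearization is valid (an $O(\gamma_T/\sqrt{m})$ ball around $\theta_0$), quantify the resulting bias in the ridge estimator, and verify that the confidence ellipsoid still captures $\theta_h$ despite this bias, all while keeping the required $m$ only polynomial in the problem parameters. A secondary subtlety is the dyadic peeling that makes the low-noise concentration hold uniformly in $\tau$; this is what produces the $\log(\log T/\delta)$ term appearing in both final bounds.
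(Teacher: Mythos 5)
Your proposal follows essentially the same route as the paper's proof: exact NTK linearization of $h$ in the frozen features with norm controlled by $S_{T,n}(h)$, an Abbasi-Yadkori-style confidence ellipsoid capturing $\theta^*$, zero regret on unqueried rounds via the threshold test, the margin shell $|\Delta_t|\le B_t$ on queried rounds, an elliptical-potential bound combined with $\log\det Z_T \le L_H+1$, a time-uniform (peeled) Bernstein bound giving $T_\epsilon = O(T\epsilon^\alpha + \log(\log T/\delta))$, and a final optimization over $\epsilon$. The only minor deviations are cosmetic (the confidence width is $B_t$ rather than $B_t/2$, and the "drift of $\theta_t$ away from the linearization regime" you flag as the main obstacle is actually a non-issue in the frozen case, where the representation is exact and the estimator is plain ridge regression in a fixed feature map; that concern only arises in the non-frozen extension).
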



%
The above bounds depend, beyond time horizon $T$, on three relevant quantities: the noise level $\alpha$, the complexity parameters $S$ and the log-determinant quantity $L_H$. Notice that, whereas $S$ essentially quantifies the complexity of the function $h$ to be learned, $L_H$ measures instead the complexity of the NTK itself, hence somehow quantifying the complexity of the function space we rely upon in learning $h$.
It is indeed instructive to see how the bounds in the above theorem vary as a function of these quantities.
First, as expected, when $\alpha =0$ we recover the usual regret guarantee $R_T = O(\sqrt{T})$, more precisely a bound of the form $R_T = O((L_H+\sqrt{L_H}S)\sqrt{T})$, 
with the trivial label complexity $N_T = O(T)$. At the other extreme, when $\alpha \rightarrow \infty$ we obtain the guarantees $R_T = N_T = O(L_H(L_H+S^2))$. 
In either case, if $h$ is ``too complex" when projected onto the data, that is, if $S^2_{T,n}(h) = \Omega(T)$, then all bounds become vacuous.\footnote
{
The same happens, e.g., to the regret bounds in \cite{zhou2020neuralucb}.
} 
At the opposite end of the spectrum, if $\{h\}$ is simple, like a class of linear functions with bounded norm in a $d$-dimensional space, and the network depth $n$ is 2 
then $S_{T,n}(h) = O(1)$, while $L_H = O(d\log T$),
%
and we recover the rates reported in \cite{Dekel:2012:SSA:2503308.2503327} for the linear case.
The quantity $L_H$ is tightly related to the decaying rate of the eigenvalues of the NTK matrix $H$, and is poly-logarithmic in $T$ in several important cases \cite{valko2013finitetime}. One relevant example 
is discussed in \cite{zhang+20}, which relies on the spectral characterization of NTK in \cite{bm19,c+19}:
If $n=2$ and all points $x^{(i)}$ concentrate on a $d_0$-dimensional nonlinear subspace of the RKHS spanned by the NTK, then $L_H = O(d_0\log T)$.

It is also important to stress that, via a standard online-to-batch conversion, the result in Theorem \ref{thm: statistical learning theorem} can be turned to a compelling guarantee in a traditional statistical learning setting, where the goal is to come up at the end of the $T$ rounds with a hypothesis $f$ whose population loss $L(f) = \E_{x\sim D_{\cX}}[L(f\,|\,x)]$ exceeds the Bayes optimal population loss
$\E_{x_t\sim D_{\cX}}[h(x_{t,a^*_t})] = \E_{x_t\sim D_{\cX}}[\max\{h(x_{t,1}),h(x_{t,-1})\}]$ by a vanishing quantity. Following \cite{Dekel:2012:SSA:2503308.2503327}, this online-to-batch algorithm will simply run Algorithm \ref{alg:frozen NTK SS} by sweeping over the sequence $\{(x_t,y_t)\}_{t=1,\ldots,T}$ only once, and pick one function uniformly at random among the sequence of predictors generated by Algorithm \ref{alg:frozen NTK SS} during its online functioning, that is, among the sequence $\{U_{t}(x)\}_{t=1,\ldots,T}$, where $U_t(x) = \arg\max_{a\in \cY} \max_{\theta \in \mathcal{C}_{t-1}}\langle \phi(x_{\cdot,a}),\theta-\theta_0 \rangle$, with $x_{\cdot,1} = (x,0)$ and $x_{\cdot,-1} = (0,x)$. This randomized algorithm enjoys the following high-probability excess risk guarantee:\footnote
{
Observe that this is a {\em data-dependent} bound, in that the RHS is random variable. This is because both $L_H$ and $S$ may depend on $x_1,\ldots,x_T$.
}
\[
\E_{t\sim {\textrm{unif}}(T)} [L(U_t)] - \E_{x_t\sim D_{\cX}}[h(x_{t,a^*_t})] 
= 
O\Biggl(
\Biggl(\frac{L_H\Bigl(L_H + \log({\log T}/\delta)+ S^2\Bigl)}{T}\Biggl)^\frac{\alpha+1}{\alpha+2} 
+\, 
\frac{\log\log(T/\delta)}{T}\Biggl)~.
\]
Combining with the guarantee on the number of labels $N_T$ from Theorem \ref{thm: statistical learning theorem} (and disregarding log factors), this allows us to conclude that the above excess risk can be bounded as a function of $N_T$ as
\begin{equation}\label{e:genbound}
\Bigl(\frac{L_H(L_H+S^2)}{N_T}\Bigl)^{\frac{\alpha+1}{2}}~,
\end{equation}
where $L_H(L_H+S^2)$ plays the role of a (compound) complexity term projected onto the data $x_1,\ldots,x_T$ at hand.
%
When restricting to VC-classes, the convergence rate $N_T^{-\frac{\alpha+1}{2}}$ is indeed the best rate ({\em minimax} rate) one can achieve under the Mammen-Tsybakov low-noise condition with exponent $\alpha$ (see, e.g., \cite{cn08,ha09,ko10,Dekel:2012:SSA:2503308.2503327}). 

Yet, since we are not restricting to the parametric case, both $L_H$ and, more importantly, $S^2$ can  be a function of $T$. In such cases, the generalization bound in (\ref{e:genbound}) can still be expressed as a function of $N_T$ alone, For instance, when $L_H$ is  poly-logarithmic in $T$ and $S^2 = O(T^{\beta})$, for some $\beta \in [0,1)$, one can easily verify that (\ref{e:genbound}) takes the form $N_T^{-\frac{(1-\beta)(\alpha+1)}{2+\beta\alpha}}$ (again, up to log factors).

In Section \ref{sa:extension} of the appendix, we extend all our results to the case where the network weights are not frozen, but are updated on the fly according to a (stochastic) gradient descent procedure. In this case, in Algorithm \ref{alg:frozen NTK SS} the gradient vector $\phi(x) = g(x;\theta_0)/\sqrt{m}$ will be replaced by $\phi_t(x) = g(x;\theta_{t-1})/\sqrt{m}$, where $\theta_{t}$ is not the linear-least squares estimator $\theta_t = Z_t^{-1}b_t/\sqrt{m} + \theta_0$, as in Algorithm \ref{alg:frozen NTK SS}, but  the result of the DNN training on the labeled data $\{(x_k,y_k)\,:\, k \leq t,\,I_k = 1\} $ gathered so far.

\section{Model Selection}\label{s:modsel}
Our model selection algorithm is described in Algorithm \ref{alg:SS_model_selection}. The algorithm operates on a pool of {\em base learners} of Frozen NTK selective samplers like those in Algorithm \ref{alg:frozen NTK SS}, each member in the pool being parametrized by a pair of parameters $(S_{i},d_{i})$, where $S_{i}$ plays the role of the (unknown) complexity parameter $S_{T,n}(h)$ (which was replaced by $S$ in Algorithm \ref{alg:frozen NTK SS}), and $d_i$ plays the role of an (a-priori unknown) upper bound on the relevant quantity $\sum_{t\in T\,:\,i_t=i} \frac{1}{2}\wedge I_{t,i} B_{t,i}^2$ that is involved in the analysis (see Lemma   \ref{lma:log determinent term: frozen version} and Lemma \ref{thm:sample complexity for frozen NTK} in Appendix \ref{sa:basic}).
This quantity will at the end be upper bounded by a term of the form $L_H(L_H+\log(\log T/\delta)+S^2_{T,n}(h))$, whose components $L_H$ and $S^2_{T,n}(h)$ are initially unknown to the algorithm.

Algorithm \ref{alg:SS_model_selection} maintains over time a set $\baselearners_t$ of active base learners, and a probability distribution $\pp_t$ over them. This distribution remains constant throughout a sequence of rounds between one change to $\baselearners_t$ and the next. We call such sequence of rounds an {\em epoch}.
Upon observing $x_t$, Algorithm \ref{alg:SS_model_selection} selects which base learner to rely upon in issuing its prediction $a_t$ and querying the label $y_t$, by drawing
base learner $i_t \in \baselearners_t$ according to $\pp_t$.

Then Algorithm \ref{alg:SS_model_selection} undergoes a series of carefully designed elimination tests which are meant to rule out mis-specified base learners, that is, those whose associated parameter $S_i$ is likely to be smaller than $S_{T,n}(h)$, while retaining those such that $S_i \geq S_{T,n}(h)$. 
%
%
These tests will help keep both the regret bound  and the label complexity of Algorithm \ref{alg:SS_model_selection} under control.
Whenever, at the end of some round $t$, any such test triggers, that is, when it happens that $|\baselearners_{t+1}| < |\baselearners_t|$ at the end of the round, a new epoch begins, and the algorithm starts over with a fresh distribution $\pp_{t+1} \neq \pp_t$. 

The first test (``disagreement test") restricts to all active base learners that would not have requested the label if asked. As our analysis for the base selective sampler (see Lemma \ref{lma:no regret for unqueried rounds: frozen version} in Appendix \ref{sa:basic}) shows that a well-specified base learner does not suffer (with high probability) any regret on non-queried rounds, any disagreement among them reveals mis-specification, thus we eliminate in pairwise comparison the base learner that holds the smaller $S_i$ parameter. The second test (``observed regret test") considers the regret behavior of each pair of base learners $i,j \in \baselearners_t$ 
on the rounds $k \leq t$ on which $i$ was selected $(i_k = i)$ and requested the label $(I_{k,i}=1$), but $j$ would not have requested if asked ($I_{k,j} = 0$), and the predictions of the two happened to disagree on that round ($a_{k,i} \neq a_{k,j}$). The goal here is to eliminate base learners whose cumulative regret is likely to exceed the regret of the smallest well-specified learner, while ensuring (with high probability) that any well-specified base learner $i$ is not removed from the pool.
In a similar fashion, the third test (``label complexity test") is aimed at keeping under control the label complexity of the base learners in the active pool $\baselearners_t$. Finally, the last test (``$d_i$ test") simply checks whether or not the candidate value $d_i$ associated with base learner $i$ remains a valid (and tight) upper bound on 
$L_H(L_H+S^2_{T,n}(h))$.

\begin{algorithm2e}[t]
\SetKwFor{For}{for}{}{}
\SetKwIF{If}{ElseIf}{Else}{if}{}{else if}{else}{}
\SetKwFor{While}{while}{}{}
{\bf Input:} Confidence level $\delta$; probability parameter $\gamma \geq 0$; pool of base learners $\baselearners_1$, each identified with a pair $(S_i, d_i)$; number of rounds $T$.\\
Set $L(t, \delta) =  \log \frac{5.2 \log (2t)^{1.4}}{\delta}$\\
\For{$t=1,2,\ldots,T$}{
Observe instance $x_t \in \cX$ and build $x_{t,a} \in \cX^2$, for $a \in \cY$\\
\For{$i \in \baselearners_t$}{
Set $I_{t, i} \in \{0,1\}$ as the indicator of whether base learner $i$ {\em would} ask for label on $x_t$\\
Set $a_{t, i} \in \cY$ as the prediction of base learner $i$ on $x_t$\\
Let $B_{t, i} = B_{t, i}(S_i)$ denote the query threshold of base learner $i$ (from Algorithm \ref{alg:frozen NTK SS})
}
Select base learner $i_t \sim \pp_t = (p_{t, 1}, p_{t, 2}, \dots, p_{t,|\baselearners_t|})$, where
\[
p_{t,i} = 
 \begin{cases}
    \frac{ d_i^{-(\gamma+1)}}{\sum_{j \in \baselearners_t} d_j^{-(\gamma+1)}}, &  \text{if } i \in \baselearners_t\\
    0, & \text{otherwise}
\end{cases}
\]

Predict $a_t = a_{t,i_t}$\\
\If{$I_{t, i_t} = 1$}{
Query label $y_t \in \cY$ and send $(x_t, y_t)$ to base learner $i_t$ 
}
$\baselearners_{t+1} = \baselearners_t$\\[2mm]
Set $\mathcal N_t = \{ i \in \baselearners_t \colon I_{t, i} = 0\}$ \tcp*[f]{(1) Disagreement test}\\
\For{all pairs of base learners $i,j \in \mathcal N_t$ that disagree in their prediction ($a_{t, i} \neq a_{t, j}$)}
{
Eliminate all learners with smaller $S$:~~~
$\baselearners_{t+1} = \{ m \in \baselearners_{t+1} \colon S_m > \min\{S_i, S_j\}\}$
}

\vspace{2mm}
\For(\tcp*[f]{(2) Observed regret test}){all pairs of base learners $i,j \in \baselearners_t$ 
}
{
Consider rounds where the chosen learner $i$ requested the label but $j$ did not, and $i$ and $j$ disagree in their prediction:
\[
\displaystyle \mathcal V_{t, i, j} = \{ k \in [t] \colon i_k = i, I_{k, i} = 1, I_{k, j} = 0, a_{k, i} \neq a_{k, j}\}
\]
\If{
$\displaystyle \sum_{k \in \mathcal V_{t,i, j}}\!\!(\ind{a_{k, i} \neq y_k} - \ind{a_{k, j} \neq y_k})
> \!\!\!\!\sum_{k \in \mathcal V_{t,i, j}} \!\!\!(1\! \wedge\! B_{k, i}) +
1.45 \sqrt{|{\mathcal V}_{t,i, j}| L(|{\mathcal V}_{t, i, j}|, \delta)} 
$
}{
Eliminate base learner $i$:~~~
$\baselearners_{t+1} = \baselearners_{t+1} \setminus \{ i\}$
}
}

\vspace{2mm} 
\For(\tcp*[f]{(3) Label complexity test}){$i \in \baselearners_{t}$}
{
Consider rounds where base learner $i$ was played: $\displaystyle \mathcal T_{t, i} = \{ k \in [t] \colon i_k = i\}$\\
\If{
$\displaystyle \sum_{k \in \mathcal T_{t, i}} I_{k, i} > \inf_{\epsilon \in(0, 1/2]}\biggl(3 \epsilon^{\gamma} |\mathcal T_{t, i}| + \frac{1}{\epsilon^2}\sum_{k \in \mathcal T_{t, i}} I_{k, i}B_{k, i}^2 \wedge \frac{1}{4}\biggr)+ 2 L(|\mathcal T_{t, i}|, \delta/(M\log_2 (12t)))$ 
}{Eliminate base learner $i$:~~~
$\baselearners_{t+1} = \baselearners_{t+1} \setminus \{ i\}$
}
}

\vspace{2mm}
\For(\tcp*[f]{(4) $d_i$ test}){$i \in \baselearners_{t}$ }
{
\If{
$\sum_{k \in \mathcal T_{t,i}} (\sfrac{1}{2} \wedge I_{k, i} B_{k, i}^2)
    > 8 d_i$ 
}{Eliminate base learner $i$:~~
$\baselearners_{t+1} = \baselearners_{t+1} \setminus \{ i\}$
}
}
}
\caption{Frozen NTK Selective Sampler with Model Selection.}\label{alg:SS_model_selection}
\end{algorithm2e}

We have the following result, whose proof is contained in Appendix \ref{sa:modsel}.
\begin{theorem}\label{thm: statistical learning theorem model selection}
Let Algorithm \ref{alg:SS_model_selection} be run with parameters $\delta$, $\gamma \leq \alpha$ with a pool of base learners $\baselearners_1$ of size $M$ on an i.i.d. sample $(x_1,y_1),\ldots, (x_T,y_T) \sim \cD$, where the marginal distribution $\cD_{\cX}$ fulfills the low-noise condition with exponent $\alpha \geq 0$ w.r.t. a function $h$ that satisfies (\ref{e:bayes}) and complexity $S_{T,n}(h)$. Let also $\baselearners_1$ contain at least one base learner $i$ such that $\sqrt{2}S_{T,n}(h) \leq S_i \leq 2\sqrt{2}S_{T,n}(h)$ and $d_i = \Theta(L_H(L_H+\log(M\log T/\delta)+S^2_{T,n}(h)))$, where $L_H = \log \det(I+H)$, being $H$ the NTK matrix of depth $n$ over the set of points $\{x_{t,a}\}_{t=1,\ldots,T,\,a=\pm 1}$.
Then with probability at least $1-\delta$ the cumulative regret $R_T$ and the total number of queries $N_T$ are simultaneously upper bounded as follows:
\vspace{-0.1in}
\begin{align*}
     R_T &= O\left(M\,\Bigl(L_H \bigl(L_H+\log(M\log T/\delta)+S^2_{T,n}(h)\bigl)\Bigl)^{\gamma+1}T^\frac{1}{\gamma+2} + M\,L(T,\delta)\right)\\
     N_T &= O\left(M\,\Bigl(L_H \bigl(L_H+\log(M\log T/\delta)+S^2_{T,n}(h)\bigl)\Bigl)^{\frac{\gamma}{\gamma+2}}T^\frac{2}{\gamma+2}+ M\,L(T,\delta)\right) ~,
\end{align*}
%
where $L(T,\delta)$ is the logarithmic term defined at the beginning of Algorithm \ref{alg:SS_model_selection}'s pseudocode.
\end{theorem}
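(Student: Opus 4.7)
The plan is to adapt the regret-balancing framework of \cite{pacchiano+20,pdgb20} to selective sampling. Let $i^{\star} \in \baselearners_1$ denote the well-specified base learner whose existence is assumed in the theorem, so that $\sqrt{2}\, S_{T,n}(h) \leq S_{i^{\star}} \leq 2\sqrt{2}\, S_{T,n}(h)$ and $d_{i^{\star}} = \Theta\bigl(L_H(L_H + \log(M \log T/\delta) + S^2_{T,n}(h))\bigr)$. First I would define a high-probability ``good event'' $\mathcal{E}$ on which all concentration inequalities underpinning the four elimination tests of Algorithm \ref{alg:SS_model_selection} hold uniformly over $t \leq T$ and over base learners. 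This calls for time-uniform Freedman/Bernstein bounds on the regret martingales used in test~(2), together with analogous bounds on the query-count and $\tfrac{1}{2} \wedge I_{k,i} B_{k,i}^2$ sums used in tests~(3) and~(4). The specific form $L(t,\delta) = \log\bigl(5.2 \log(2t)^{1.4}/\delta\bigr)$ is designed to produce stitched confidence sequences whose union cost across epochs and base learners is absorbed into a single $\log M$ factor.

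\textbf{Non-elimination of $i^{\star}$.} Under $\mathcal{E}$, I would show that $i^{\star}$ survives all four tests throughout the run. Lemma~\ref{lma:no regret for unqueried rounds: frozen version} in the appendix (used to prove Theorem~\ref{thm: statistical learning theorem}) implies that on any round $k$ with $I_{k, i^{\star}} = 0$ the prediction $a_{k, i^{\star}}$ coincides with the Bayes-optimal action, so $i^{\star}$ never contributes to the disagreement test~(1). For test~(2) instantiated with $i = i^{\star}$, the per-round regret of the basic selective sampler is of order $B_{k, i^{\star}}$ on queried rounds, which is exactly the bias term subtracted inside the test; the residual fluctuations are absorbed by $1.45\sqrt{|\mathcal V| L(|\mathcal V|,\delta)}$. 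Test~(3) is handled by the same $\inf_{\epsilon}$ trade-off between $\epsilon^\gamma$-inflation and $\sum I B^2$-type sums that governs the low-noise label complexity in the basic analysis. Test~(4) is immediate from Lemma~\ref{lma:log determinent term: frozen version}, using the assumed matching of $d_{i^{\star}}$ with $L_H(L_H + \cdots + S^2_{T,n}(h))$.

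\textbf{Aggregating regret and labels.} With $i^{\star}$ never eliminated, I would decompose $R_T$ and $N_T$ according to the identity $i_t$ of the base learner selected at each round. For rounds with $i_t = i^{\star}$, the basic Theorem~\ref{thm: statistical learning theorem} bounds the contribution as if the algorithm had played $i^{\star}$ throughout. For rounds with $i_t = i \neq i^{\star}$, the surviving $i$ must have passed test~(2) in the pair $(i, i^{\star})$, so its observed regret on informative rounds is bounded, up to deviations, by $\sum_{k \in \mathcal V_{t,i,i^{\star}}} (1 \wedge B_{k, i})$, and test~(4) then converts this into a bound depending on $d_i$; test~(3) does the analogous job for $N_T$. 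The regret-balancing weighting $p_{t,i} \propto d_i^{-(\gamma+1)}$ combined with a H\"older-type summation over active learners then produces the factor $M$ and the announced power of $L_H(L_H + \log(M \log T/\delta) + S^2_{T,n}(h))$, with the exponent $T^{1/(\gamma+2)}$ emerging from the optimal choice of $\epsilon$ inside test~(3) (which requires $\gamma \leq \alpha$ for the low-noise exponent to be usable).

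\textbf{Main obstacle.} The hardest part will be ensuring that the four tests, tuned with the same global $\delta$ and the same weighting $\pp_t$, simultaneously (i) never fire against $i^{\star}$ and (ii) do fire early enough against any mis-specified $i$ to prevent it from spoiling both $R_T$ and $N_T$. In particular, because the observed-regret test only compares pairs, one must argue that the cascade by which an already-eliminated learner could have caused another surviving learner to accumulate hidden regret is blocked by the presence of $i^{\star}$ in every pairwise comparison. Matching the exponent $\gamma+1$ in $p_{t,i}$ to the $\inf_{\epsilon}$ trade-off inside test~(3) so that the final bound degrades only from $\alpha$ to $\gamma$ (and no further) is the delicate balancing act at the heart of the argument.
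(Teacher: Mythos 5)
There is a genuine gap in your regret aggregation step. You decompose $R_T$ by the identity $i_t$ of the selected learner and claim that for $i_t = i \neq i^\star$ the observed-regret test for the pair $(i,i^\star)$, together with tests (3) and (4), controls the damage. But the observed-regret test only covers rounds in $\mathcal V_{t,i,i^\star}$, i.e.\ rounds on which $I_{k,i^\star}=0$. It says nothing about rounds where some $i \neq i^\star$ is selected while $i^\star$ \emph{would have requested} the label ($I_{t,i^\star}=1$), nor about rounds where the selected learner does not query (so no label is observed and no test can fire) while $i^\star$ would have queried. On such rounds the selected learner can be arbitrarily mis-specified without being well-specified-tested against $i^\star$, so your decomposition leaves their regret uncontrolled. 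The paper's proof resolves exactly this by partitioning rounds according to $i^\star$'s own query indicator, $[T] = \mathcal R_{i^\star} \,\dot\cup\, \mathcal U_{i^\star} \,\dot\cup\, \mathcal O_{i^\star}$ (see \eqref{eqn:round_decomp}), and bounding the regret on $\mathcal R_{i^\star}$ by $2|\Delta_t| \leq 2B_{t,i^\star}$ \emph{no matter which learner was played} (\lemref{lem:regret_istar_requests}); the rounds you do handle correspond only to $\mathcal O_{i^\star}$ (\lemref{lem:regret_observed_norequest}) and the agreement case $\mathcal U_{i^\star}$ (\lemref{lem:regret_unobserved_norequest}).

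A second, related omission: even once you bound the regret on $\mathcal R_{i^\star}$ by $\sum_{t\in\mathcal R_{i^\star}} B_{t,i^\star}^2/\epsilon$-type terms, the standard elliptical potential argument (as in \lemref{lma:log determinent term: frozen version}) does not apply, because $Z_{t,i^\star}$ is updated only on the random subset of rounds where $i^\star$ is actually selected and queries, while the sum ranges over all rounds where $i^\star$ \emph{would} query. The paper needs the randomized elliptical potential lemma (\lemref{lma:Randomized elliptical potential}) for this, and it is precisely here that the factor $1/\underline{p}^\star \leq M\bigl(2d(S_{i^\star_1},\delta)\bigr)^{\gamma+1}$ enters, producing both the factor $M$ and the power $\gamma+1$ of $L_H(L_H+\log(M\log T/\delta)+S^2_{T,n}(h))$ in the stated regret bound. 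Your proposal attributes these to a ``H\"older-type summation'' over learners, which is not where they come from and does not substitute for the randomized potential argument. By contrast, your non-elimination argument for $i^\star$ and your label-complexity aggregation (sum over learners of the test-(3) bound, with $\sum_k p_{k,i}$ controlled via the survival of $i^\star$ and the $d_i$ test) do match the paper's route; also note the paper never needs mis-specified learners to be eliminated ``early enough'' — their contribution is bounded through the tests they satisfy while still active — so the cascade concern in your last paragraph is not the actual difficulty.
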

We run Algorithm \ref{alg:SS_model_selection} with the pool $\baselearners_1 = \{(S_{i_1},d_{i_2})\}$, where
$S_{i_1} = 2^{i_1}$, $i_1 = 0,1,\ldots, O(\log T)$ and $d_{i_2} = 2^{i_2}$, $i_2 = 0,1,\ldots, O(\log T + \log\log(M\log T/\delta))$, ensuring the existence of a pair $(i_1,i_2)$ such that 
$$
\sqrt{2}S_{T,n}(h) \leq S_{i_1} \leq 2\sqrt{2}S_{T,n}(h)
$$ 
and 
$$
L_H \bigl(L_H+\log(M\log T/\delta)+S^2_{T,n}(h)\bigl) \leq d_{i_2} \leq 2L_H \bigl(L_H+\log(M\log T/\delta)+S^2_{T,n}(h)\bigl)~.
$$ 
Hence the resulting error due to the discretization is just a constant factor, while the resulting number $M$ of base learners is $O(\log^2 T + (\log T)(\log\log(M\log T/\delta)) )$.

Theorem \ref{thm: statistical learning theorem model selection} allows us to conclude that running Algorithm \ref{alg:SS_model_selection} on the above pool of copies of Algorithm \ref{alg:frozen NTK SS} yields guarantees that are similar to those obtained by running a single instance of Algorithm \ref{alg:frozen NTK SS} with $S = \sqrt{2} S_{T,n}(h)$, that is, as if the complexity parameter $S_{T,n}(h)$ were known beforehand. Yet, this model selection guarantee comes at a price, since Algorithm \ref{alg:SS_model_selection} needs to receive as input the noise exponent $\alpha$ (through parameter $\gamma \leq \alpha$) in order to correctly shape its label complexity test.

The very same online-to-batch conversion mentioned in Section \ref{s:basic} can be applied to Algorithm \ref{alg:SS_model_selection}. Again, combining with the bound on the number of labels and disregarding log factors, this gives us a high probability excess risk bound of the form
\begin{equation}\label{e:genbound2}
\left(\frac{\left[L_H \left(L_H+S^2_{T,n}(h)\right)\right]^{\frac{3\alpha+2}{\alpha+2}}}{N_T}\right)^{\frac{\alpha+1}{2}}~,
\end{equation}
provided $\gamma = \alpha$. Following the same example as at the end of Section \ref{s:basic},
when $L_H$ is  poly-logarithmic in $T$ and $S^2 = O(T^{\beta})$, for some $\beta \in [0,1)$, one can verify that (\ref{e:genbound2}) is of the form $N_T^{-\frac{(1-\beta(\alpha+1))(\alpha+1)}{2+\beta\alpha}}$ (up to log factors),
which converges for $\beta < 1/(\alpha+1)$. Hence, compared to (\ref{e:genbound}) we can ensure convergence in a more restricted set of cases.

Section \ref{sa:extension} in the appendix contains the extension of our model selection procedure to the case where the network weights are themselves updated.

\section{Conclusions and Work in Progress}\label{s:conc}
We have presented a rigorous analysis of selective sampling and active learning in general non-parametric scenarios, where the complexity of the Bayes optimal predictor is evaluated on the data at hand as a fitting measure with respect to the NTK matrix of a given depth associated with the same data. This complexity measure plays a central role in the level of uncertainty the algorithm assigns to labels (the higher the complexity the higher the uncertainty, hence the more labels are queried). Yet, since this is typically an unknown parameter of the problem, special attention is devoted to designing and analyzing a model selection technique that adapts to this unknown parameter. 

In doing so, we borrowed tools and techniques from Neural Bandits~\cite{zhou2020neuralucb,zhang+20}, selective sampling~(e.g., \cite{Dekel:2012:SSA:2503308.2503327}), and online model selection in contextual bandits~\cite{pacchiano+20,pdgb20}, and combined them together in an original and non-trivial manner.

We proved regret and label complexity bounds that recover known minimax rates in the parametric case, and extended such results well beyond the parametric setting achieving favorable guarantees that cannot easily be compared to available results in the literature of active learning in non-parametric settings.
One distinctive feature of our proposed technique is that it gives rise to efficient and manageable algorithms for modular DNN architecture design and deployment.

We conclude by mentioning a few directions we are currently exploring:
\begin{enumerate}
\item We are trying to get rid of the prior knowledge of $\alpha$ in the model selection Algorithm \ref{alg:SS_model_selection}. This may call for a slightly more refined balancing technique that jointly involves $S_{T,n}(h)$ and $\alpha$ itself.
\item Regardless of whether $\alpha$ is available, it would be nice to improve the dependence on $\gamma = \alpha$ in the regret bound of Theorem \ref{thm: statistical learning theorem model selection}. This would ensure convergence of the generalization bound as $N_T \rightarrow \infty$ when $S_{T,n}(h)^2 = T^{\beta}$, for all $\beta \in [0,1)$. We conjecture that this is due to a suboptimal design of our balancing mechanism for model selection in Algorithm \ref{alg:SS_model_selection}.
\item We are investigating links between the complexity measure $S_{T,n}(h)$ and the smoothness properties of the (Bayes) regression function $h$ with respect to the NTK kernel (of a given depth $n$).
\end{enumerate}

\bibliographystyle{abbrv}
\bibliography{arxiv_version}

\appendix


\section{Appendix}
This appendix contains, beyond the proof of all results contained in the main body (Section \ref{sa:basic} and Section \ref{sa:modsel}), the extension of our model selection results to the non-frozen NTK case (Section \ref{sa:extension}). Section \ref{sa:ancillary} contains ancillary technical lemmas used throughout the proofs.

\subsection{Proofs for Section \ref{s:basic}}\label{sa:basic}

We first recall the following representation theorem (which is Lemma 5.1 in \cite{zhou2020neuralucb}). We give a proof sketch for completeness.
\begin{lemma}\label{lma:representation theorem}
There exists a positive constant $C$ such that for any $\delta\in(0, 1)$, if \[
m\geq C T^4 n^6\log(2Tn/\delta)/\lambda_0^4
\]
then with probability at least $1-\delta$ over the random initialization $\theta_0$, there exists $\theta^*\in\R^p$ for which
\begin{align}\label{eqn:representation formula}
h(x_{t,a}) = \langle g(x_{t,a};\theta_0), \theta^* - \theta_0\rangle\qquad{\mbox{and}}\qquad
\sqrt{m}\,\|\theta^* -\theta_0\|_2\leq\sqrt{2}S_{T,n}(h)
\end{align}
for all $t \in [T]$, $a \in \cY$, and $h$.
\end{lemma}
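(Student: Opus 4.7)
The strategy is to construct $\theta^*$ explicitly as the minimum-norm solution to a linear interpolation problem on the random gradient features at initialization, and then bound its norm via a standard NTK concentration estimate. Let $G \in \R^{p \times 2T}$ denote the gradient matrix whose $i$-th column equals $g(x^{(i)};\theta_0) = \sqrt{m}\,\phi(x^{(i)})$, and let $\h \in \R^{2T}$ have entries $h(x^{(i)})$. Assuming $G^\top G$ is invertible, I would take
\[
\theta^* - \theta_0 \;=\; G\,(G^\top G)^{-1}\,\h.
\]
Then $G^\top(\theta^* - \theta_0) = \h$ by construction, which is precisely the representation $h(x_{t,a}) = \langle g(x_{t,a};\theta_0), \theta^* - \theta_0\rangle$ in \eqref{eqn:representation formula} at every $x^{(i)} = x_{t,a}$. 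It also gives the clean norm identity $\|\theta^* - \theta_0\|_2^2 = \h^\top (G^\top G)^{-1}\, \h$, which reduces the second claim of the lemma to an operator inequality of the form $(G^\top G)^{-1} \preceq (2/m)\, H^{-1}$.

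The probabilistic step is to invoke a standard NTK Gram concentration bound, essentially the same one proved in \cite{zhou2020neuralucb} (building on \cite{jacot+18,arora+19,GD_on_two-layer_network}): for $m$ at least polynomial in $T$, $n$, and $1/\lambda_0$, with probability at least $1-\delta$ over $\theta_0$,
\[
\bigl\|\tfrac{1}{m}\,G^\top G - H\bigr\|_{\mathrm{op}} \;\le\; \tfrac{\lambda_0}{2}.
\]
The threshold $m \ge C\,T^4 n^6 \log(2Tn/\delta)/\lambda_0^4$ in the statement is calibrated exactly to this inequality, the $T^4$ factor absorbing the union bound over the $(2T)^2$ entries and the passage from per-entry to operator-norm control. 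Since $H \succeq \lambda_0 I$, this bound yields $\tfrac{1}{m}\,G^\top G \succeq \tfrac{1}{2}\,H$, hence $(G^\top G)^{-1} \preceq (2/m)\, H^{-1}$. Plugging into the norm identity,
\[
m\,\|\theta^* - \theta_0\|_2^2 \;\le\; 2\,\h^\top H^{-1}\,\h \;=\; 2\,S_{T,n}^2(h),
\]
which is the second inequality of \eqref{eqn:representation formula}. Crucially, the concentration event depends only on $\theta_0$ and the fixed data $\{x^{(i)}\}$, not on $h$, so a single realization of the random initialization produces a valid $\theta^* = \theta^*(h)$ for every admissible labeling function $h$, as the quantifiers in the lemma require.

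The only genuine obstacle is the NTK Gram concentration itself. Proving $\|\tfrac{1}{m}\,G^\top G - H\|_{\mathrm{op}} \le \lambda_0/2$ requires a layer-by-layer recursive argument that propagates Gaussian concentration through the $n$ ReLU layers, controlling simultaneously the forward-pass covariances $\Sigma^{(k)}$ and the backward-pass quantities $\widetilde H^{(k)}$, and then union-bounding over all entry pairs. Since this analysis is carried out in full detail in \cite{zhou2020neuralucb} and is not specific to active learning, I would invoke it as a black box; the only substantive part of the argument that is bespoke to this setting is the explicit interpolation construction of $\theta^*$ and the operator-norm comparison of $(G^\top G)^{-1}$ with $H^{-1}$ described above.
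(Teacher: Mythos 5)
Your proposal is correct and follows essentially the same route as the paper: the paper also sets $\theta^*-\theta_0$ to the minimum-norm interpolant (written via the SVD of the gradient feature matrix, which is exactly your $G(G^\top G)^{-1}\h$ up to the $\sqrt{m}$ scaling), invokes the Gram concentration $\|G^\top G-H\|_F\le\lambda_0/2$ from \cite{arora+19}/\cite{zhou2020neuralucb} as a black box for the stated $m$, and concludes $m\|\theta^*-\theta_0\|_2^2=\h^\top(G^\top G)^{-1}\h\le 2\,\h^\top H^{-1}\h$. Your use of the operator norm in place of the Frobenius norm is an immaterial weakening since the cited bound controls the latter, which dominates the former.
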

\begin{proof}
Recall the rearrangement of $\{x_{t,a}\}_{t=1,\ldots,T,\,a=\pm 1}$ into $\{x^{(i)}\}_{i=1,\ldots,2T}$. 
We define the $p\times 2T$ matrix $G=\left[\phi(x^{(1)}),\ldots, \phi(x^{(2T)})\right]$. 
For $m = \Omega(T^4n^6\log(2Tn/\delta)/\lambda_0^4)$, we have $\|G^\top G-H\|_F\leq\lambda_0/2$ with probability at least $1-\delta$ over the random initialization over $\theta_0$, which is based on a union bound over Theorem 3.1 in \cite{arora+19}. Since $H$ on $\{x^{(i)}\}_{i=1,\ldots,2T}$ is positive definite with smallest eigenvalue $\lambda_0$, $G^\top G$ is also positive definite. Let the singular value decomposition of $G$ be $G= P A Q^\top$, $P\in\R^{p\times 2T}$, $A\in\R^{2T\times 2T}$, $Q\in\R^{2T\times 2T}$, then $A$ is also positive definite. 
We define 
\[
\theta^* = \theta_0 + P A^{-1} Q^\top\h/\sqrt{m}~.
\]
It is easy to see that $\theta^*$ satisfies (\ref{eqn:representation formula}), hence concluding the proof.
\end{proof}

Next we present a lemma relating the matrix $Z_T$ with NTK matrix $H$.
\begin{lemma}\label{lma:bounding log determinant by H}
There exists a positive constant $C$ such that for any $\delta\in(0, 1)$, if 
\[
m\geq CT^6n^6\log(Tn/\delta)
\]
then with probability at least $1-\delta$ over the random initialization $\theta_0$ we have
\begin{align}\label{ineq:bounding Z_T by H}
\log\det Z_T \leq\log \det(I+H ) + 1~.
\end{align}
\end{lemma}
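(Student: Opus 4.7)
The plan is to reduce the quantity $\log\det Z_T$ to $\log\det(I+G^\top G)$, where $G = [\phi(x^{(1)}),\ldots,\phi(x^{(2T)})]$ is the full $p\times 2T$ feature matrix already considered in Lemma~\ref{lma:representation theorem}, and then compare $G^\top G$ to the NTK matrix $H$ via the finite-width approximation.

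First I would observe that $Z_T = I_p + \Phi\Phi^\top$, where $\Phi$ is the $p \times N_T$ matrix whose columns are the queried features $\phi(x_{t,a_t})$, $I_t=1$. These columns form a subset $S \subseteq \{1,\ldots,2T\}$ of the columns of $G$, so $\Phi^\top \Phi$ is a principal submatrix of $G^\top G$. Applying Sylvester's identity $\det(I_p + \Phi\Phi^\top) = \det(I_{N_T} + \Phi^\top\Phi)$ and Cauchy interlacing (the top $|S|$ eigenvalues of a principal submatrix are dominated by the top $|S|$ eigenvalues of the parent matrix) yields
\[
\log \det Z_T \;\le\; \log\det(I_{2T} + G^\top G).
\]

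Second, I would invoke the NTK concentration result of \cite{arora+19} (the same one already used inside the proof of Lemma~\ref{lma:representation theorem}) but with a tighter target accuracy $\varepsilon = 1/(2T)$ rather than $\lambda_0/2$. Tracking the dependence of $m$ on $\varepsilon$ in that bound, the requirement $m \ge C T^6 n^6 \log(Tn/\delta)$ is exactly what is needed to ensure that, with probability at least $1-\delta$ over the random initialization $\theta_0$,
\[
\|G^\top G - H\|_2 \;\le\; \tfrac{1}{2T}.
\]
From here, Weyl's inequality gives $\lambda_i(G^\top G) \le \lambda_i(H) + 1/(2T)$ for every $i$, so
\[
1 + \lambda_i(G^\top G) \;\le\; (1+\lambda_i(H))\bigl(1 + \tfrac{1}{2T}\bigr),
\]
using $1+\lambda_i(H) \ge 1$. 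Taking logs and summing the $2T$ eigenvalues yields
\[
\log\det(I + G^\top G) \;\le\; \log\det(I+H) + 2T\log\!\bigl(1 + \tfrac{1}{2T}\bigr) \;\le\; \log\det(I+H) + 1,
\]
and combining with the first step gives the claim.

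The conceptually straightforward steps are the Sylvester/Cauchy-interlacing reduction and the Weyl-based eigenvalue comparison. The main obstacle, and the step that requires real care, is the second one: translating the finite-width NTK concentration from \cite{arora+19} into the spectral bound $\|G^\top G - H\|_2 \le 1/(2T)$ with precisely the stated width requirement $m \ge CT^6 n^6 \log(Tn/\delta)$. This is essentially a matter of re-running their union-bound argument over the $(2T)^2$ entries of $G^\top G - H$ with the sharper target accuracy and tracking how the polynomial in $T$ changes; nothing new is needed, but the constants and powers of $T, n, 1/\varepsilon$ must line up with the statement of the lemma.
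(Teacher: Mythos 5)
Your first step (Sylvester plus interlacing, giving $\log\det Z_T \le \log\det(I+G^\top G)$) is fine and is essentially the paper's opening move, but the second step contains a genuine quantitative gap. The only concentration tool available here (Theorem 3.1 of \cite{arora+19}, resp.\ Lemma B.1 of \cite{zhou2020neuralucb}) is an \emph{entrywise} bound with width scaling as $\epsilon^{-4}$: after a union bound over the $(2T)^2$ entries, width $m \geq C T^6 n^6 \log(Tn/\delta)$ buys you entrywise accuracy $\epsilon \sim T^{-3/2}$, hence only $\|G^\top G - H\|_2 \le \|G^\top G - H\|_F \lesssim 2T\epsilon \sim T^{-1/2}$. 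Your claim that this width is ``exactly what is needed'' for $\|G^\top G - H\|_2 \le \tfrac{1}{2T}$ is therefore false: to reach spectral accuracy $1/(2T)$ through entrywise concentration you would need $\epsilon \sim T^{-2}$, i.e.\ $m \gtrsim T^8 n^6\log(Tn/\delta)$, which is a strictly stronger hypothesis than the lemma's. With the actual accuracy $\|G^\top G - H\|_2 \lesssim T^{-1/2}$, your Weyl argument only yields an additive error $2T\log(1+\|G^\top G-H\|_2) \approx 2\sqrt{T}$, not the constant $1$ in the statement. The structural reason is that bounding each of the $2T$ eigenvalue shifts by the spectral norm of the perturbation is lossy by a factor of order $\sqrt{2T}$ compared to what is needed.

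The paper avoids this by linearizing $\log\det$ around $I+H$ (concavity): writing $E = G^\top G - H$,
\begin{align*}
\log\det(I+H+E) \;\le\; \log\det(I+H) + \langle (I+H)^{-1}, E\rangle_F \;\le\; \log\det(I+H) + \|(I+H)^{-1}\|_F\,\|E\|_F \;\le\; \log\det(I+H) + \sqrt{2T}\,\|E\|_F,
\end{align*}
so only $\|E\|_F \le 1/\sqrt{2T}$ is required, i.e.\ entrywise accuracy $(2T)^{-3/2}$, which is precisely what $m \ge CT^6 n^6\log(Tn/\delta)$ delivers. If you replace your Weyl/spectral-norm step with this trace-type bound (or equivalently bound $\log\det(I+(I+H)^{-1/2}E(I+H)^{-1/2})$ by its trace), your proof goes through with the stated width; as written, it does not.
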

\begin{proof}
The proof is an adaptation of the proof of Lemma 5.4 in \cite{zhou2020neuralucb}.
Let $ G=(\phi( x^{(1)},...,\phi( x^{(2T)}))\in\R^{p\times 2T}$. 
We can write
\begin{align*}
\log\det Z_T 
&= 
\log \det\left( I+\sum_{t=1}^TI_t\phi( x_{t,a_t})\phi( x_{t,a_t})^\top \right)\\
&\leq
\log \det\left( I+\sum_{i=1}^{2T}\phi( x^{(i)})\phi( x^{(i)})^\top\right)\\
&=
\log \det\bigl( I+ G G^\top \bigr)\\
&=
\log \det\bigl( I+ G^\top G\bigr)\\
&=
\log \det\bigl( I+H + ( G^\top G-H)\bigr)\\
&\leq
\log \det\bigl( I+H) + \langle( I+H)^{-1}, ( G^\top G-H)\rangle_F\\
&\leq
\log \det\bigl( I+H) +  \|( I+H)^{-1}\|_F\| G^\top G-H\|_F\\
&\leq
\log \det\bigl( I+H) + \sqrt{2T}\,\| G^\top G-H\|_F\\
&\leq
\log \det( I+H) + 1~.
\end{align*}
In the above, the first inequality is obvious, the second inequality uses the fact that $\log \det(\cdot)$ is a concave function, the third one used Cauchy-Schwartz inequality, the fourth one comes from $\|( I+H)^{-1}\|_F\leq\| I\|_F=\sqrt{2T}$, and the last inequality uses Lemma B.1 in \cite{zhou2020neuralucb} along with our choice of $m$.
\end{proof}

The proofs of both Lemma \ref{lma:representation theorem} and Lemma \ref{lma:bounding log determinant by H} rely on controlling the size of $\|G^\top G-H\|_F$, which is small with high probability when $m$ is large enough. Therefore, given 
\begin{equation*}
    m\geq CT^4\log(2Tn/\delta)n^6\left(T^2 \vee 1/\lambda_0^4\right)~,
\end{equation*}
we have
\begin{align}\label{eqn:event_E_0}
    \Ecal_0 =\{\exists\,\theta^*\in\R^p\,:\, (\ref{eqn:representation formula})\ {\mbox{and}}\ (\ref{ineq:bounding Z_T by H})\ {\mbox{hold}}\}~,
\end{align}
holds with probability at least $1-\delta$ over random initialization of $\theta_0$. 

To take into account the random noise from the sequence of labels, we also define
\begin{align}\label{eqn:event_E}
\Ecal =\{\exists\,\theta^*\in\R^p\,:\, \Ecal_0\ {\mbox{holds and }} \theta^*\in\mathcal{C}_t\ \forall t>0\}~.
\end{align}

In order to make sense of the querying threshold $B_t$ in Algorithm \ref{alg:frozen NTK SS},
we derive an upper and a lower bound for $U_{t,a} - h( x_{t,a})$ under $\Ecal$. 

As for the lower bound, simply notice that, by definition ,
\begin{align}\label{ineq: lower bound of U}
U_{t,a} =\max_{\theta\in \mathcal{C}_{t-1}}\langle g( x_{t,a};\theta_0), \theta-\theta_0\rangle \geq \langle g( x_{t,a};\theta_0), \theta^*-\theta_0\rangle=h( x_{t,a})~.
\end{align}
To derive an upper bound, we can write
\begin{align}
U_{t,a}-h(x_{t,a})
&=\max_{\theta\in \mathcal{C}_{t-1}}\langle g( x_{t,a};\theta_0), \theta-\theta_0\rangle - \langle g( x_{t,a};\theta_0), \theta^*-\theta_0\rangle \notag\\
&=\max_{\theta\in \mathcal{C}_{t-1}}\langle g( x_{t,a};\theta_0), \theta-\theta_{t-1}\rangle - \langle g( x_{t,a};\theta_0), \theta^*-\theta_{t-1}\rangle\notag\\
    &\leq \max_{\theta\in \mathcal{C}_{t-1}}\|g( x_{t,a};\theta_0)\|_{Z_{t-1}^{-1}}\Bigl(\|\theta- \theta_{t-1}\|_{Z_{t-1}}+\|\theta^* - \theta_{t-1}\|_{Z_{t-1}}\Bigl)\notag\\
    &\leq 2\gamma_{t-1}\|\phi( x_{t,a})\|_{Z_{t-1}^{-1}}~,\label{e:upperb}
\end{align}
where in the last inequality we used the definition of $\mathcal{C}_{t-1}$ and the assumption that $\theta^*\in\mathcal{C}_{t-1}$. A proof of this assumption is contained in the below lemma, which follows from standard arguments.
%

\begin{lemma}\label{lma:confidence set contains theta^*, frozen version}
Let the input parameter $S$ in Algorithm \ref{alg:frozen NTK SS} be such that $\sqrt{2}S_{T,n}(h) \leq S$, then under event $\Ecal_0$ for any $\delta>0$, with probability at least $1-\delta$ over the random noises we have
\[
\|\theta^* - \theta_t\|_{Z_t}\leq \gamma_t/\sqrt{m}
\]
for all $t\geq 0$ simultaneously, i.e., $\theta^*\in\mathcal{C}_t$ with high probability simultaneously for all $t \geq 0$.
\end{lemma}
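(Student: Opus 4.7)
The strategy is the standard confidence-ellipsoid argument for online ridge regression (as in Abbasi-Yadkori, Pal, Szepesvari 2011), adapted to the setting of Algorithm \ref{alg:frozen NTK SS}. Since we are operating under the event $\Ecal_0$, Lemma \ref{lma:representation theorem} gives a vector $\theta^\star \in \R^p$ with $h(x_{t,a}) = \sqrt{m}\langle \phi(x_{t,a}), \theta^\star - \theta_0\rangle$ and $\sqrt{m}\,\|\theta^\star - \theta_0\|_2 \leq \sqrt{2}S_{T,n}(h) \leq S$. The proof will proceed by deriving a closed-form expression for $\theta_t - \theta^\star$ in terms of $Z_t$ and a noise-accumulation vector, taking the $Z_t$-norm, and controlling the two resulting terms by a deterministic argument and a self-normalized martingale bound, respectively.

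Concretely, let $\xi_s = (1-\ell_s) - h(x_{s,a_s})$. Since $y_s \in \{-1,+1\}$ with $\P(y_s = a_s \mid x_s) = h(x_{s,a_s})$, the variable $\xi_s$ is a bounded (hence $1$-subGaussian) martingale difference with respect to the filtration $\mathcal{F}_{s} = \sigma(x_1,y_1,\ldots,x_s,y_s,x_{s+1})$, while $\phi(x_{s,a_s})$ and $I_s$ are $\mathcal{F}_{s-1}$-measurable (after conditioning on $x_s$). Plugging into the update $b_t = \sum_{s \leq t} I_s (1-\ell_s)\phi(x_{s,a_s})$, I would rewrite
\[
b_t = \sqrt{m}\, V_t (\theta^\star - \theta_0) + S_t,\qquad V_t = \sum_{s\leq t} I_s \phi(x_{s,a_s})\phi(x_{s,a_s})^\top,\qquad S_t = \sum_{s\leq t} I_s \xi_s \phi(x_{s,a_s}),
\]
and then use $Z_t = I + V_t$ together with $\theta_t = Z_t^{-1}b_t/\sqrt{m} + \theta_0$ to obtain the clean identity
\[
\theta_t - \theta^\star \;=\; -Z_t^{-1}(\theta^\star - \theta_0) \;+\; \tfrac{1}{\sqrt{m}}\, Z_t^{-1} S_t.
\]

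Taking the $Z_t$-norm and applying the triangle inequality yields
\[
\|\theta_t - \theta^\star\|_{Z_t} \;\leq\; \|\theta^\star - \theta_0\|_{Z_t^{-1}} \;+\; \tfrac{1}{\sqrt{m}}\,\|S_t\|_{Z_t^{-1}},
\]
using the identity $\|Z_t^{-1}x\|_{Z_t} = \|x\|_{Z_t^{-1}}$. The first term is bounded deterministically by $\|\theta^\star - \theta_0\|_2 \leq S/\sqrt{m}$ since $Z_t \succeq I$. The second term is controlled by the self-normalized martingale bound of Abbasi-Yadkori et al., which states that with probability at least $1-\delta$, simultaneously for all $t\geq 0$,
\[
\|S_t\|_{Z_t^{-1}} \;\leq\; \sqrt{\log \det(Z_t) + 2\log(1/\delta)}.
\]
Combining the two contributions gives $\sqrt{m}\,\|\theta_t - \theta^\star\|_{Z_t} \leq S + \sqrt{\log\det Z_t + 2\log(1/\delta)} = \gamma_t$, which is exactly the desired containment $\theta^\star \in \mathcal{C}_t$.

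The only delicate point is verifying the measurability/noise conditions required by the self-normalized bound: one must check that $I_s, a_s, \phi(x_{s,a_s})$ depend only on $(x_1,y_1,\ldots,x_{s-1},y_{s-1},x_s)$ while $\xi_s$ is mean-zero and bounded conditionally on that information. This is immediate from the structure of Algorithm \ref{alg:frozen NTK SS} (the query decision $I_s$ and the action $a_s$ are determined by $x_s$ and past labeled data), so no new machinery is needed. Everything else is a direct application of standard tools, and the statement falls out verbatim once the decomposition above is in place.
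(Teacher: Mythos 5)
Your proposal is correct and follows essentially the same route as the paper: the same decomposition of $b_t$ via the representation $h(x_{s,a_s})=\sqrt{m}\langle\phi(x_{s,a_s}),\theta^\star-\theta_0\rangle$ from Lemma \ref{lma:representation theorem}, the bound $\|\theta^\star-\theta_0\|_{Z_t^{-1}}\leq\|\theta^\star-\theta_0\|_2\leq S/\sqrt{m}$, and the self-normalized martingale bound of Abbasi-Yadkori et al.\ applied to $\sum_s I_s\xi_s\phi(x_{s,a_s})$. The only difference is cosmetic: you take the $Z_t$-norm of the identity directly via the triangle inequality, whereas the paper bounds $|x^\top(\theta_t-\theta^\star)|$ for arbitrary $x$ and then plugs in $x=Z_t(\theta_t-\theta^\star)$, which yields the same inequality.
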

\begin{proof}
We essentially follow the proof of Theorem 2 in \cite{NIPS2011_4417} (see also the proof of Lemma 5.2 in \cite{zhou2020neuralucb}). 

We have $\ell_t = 1-h(x_{t,a_t}) - \xi_t$, where $\xi_t = 1 - \ell_t - h(x_{t,a_t})$ is a sub-Gaussian random variable. Hence, setting $\bm{\xi}_t=(I_1\xi_1,...,I_t\xi_t)^\top$, $ X_t=(I_1\phi( x_{1,a_1}),...,I_t\phi( x_{t,a_t}))^\top$, and $Y_t=(I_1(1-\ell_1),...,I_t (1-\ell_t))^\top$, we can write
\[
Z_t =  X_t^\top  X_t+  I,\qquad b_t =  X_t^\top Y_t
\]
Plug them into the definition of $\theta_t$ gives
\begin{align*}
\theta_t-\theta_0
&= Z_t^{-1}b_t/\sqrt{m}\\
&=( X_t^\top  X_t+ I)^{-1} X_t^\top(\sqrt{m} X_t(\theta^*-\theta_0)+\bm{\xi}_t)/\sqrt{m}\\
&=( X_t^\top  X_t+ I)^{-1} X_t^\top\bm{\xi}_t/\sqrt{m}+\theta^*-\theta_0-( X_t^\top X_t+ I)^{-1}(\theta^*-\theta_0)~,
\end{align*}
where in the first equality we used definition of $\xi_t$ and Lemma \ref{lma:representation theorem}.
Now, for any $ x\in\R^p$, we get
\begin{align*}
     x^\top(\theta_t-\theta^*)=\langle x,  X_t^\top\bm{\xi}_t\rangle_{Z_t^{-1}}/\sqrt{m}-\langle x, \theta^*-\theta_0\rangle_{Z_t^{-1}}~,
\end{align*}
hence
\begin{align*}
| x^\top(\theta_t-\theta^*)|
&\leq \| x\|_{Z_t^{-1}}\Bigl(\| X_t^\top\bm{\xi}_t\|_{Z_t^{-1}}/\sqrt{m}+\|\theta^*-\theta_0\|_{Z_t^{-1}}\Bigl)\\
&\leq \| x\|_{Z_t^{-1}}\Bigl(\| X_t^\top\bm{\xi}_t\|_{Z_t^{-1}}/\sqrt{m}+\|\theta^*-\theta_0\|_2\Bigl)~,
\end{align*}
where the first inequality derives from the Cauchy-Schwartz inequality and the second from the fact that the smallest eigenvalue of $Z_t$ is at least $1$.
Then, by Theorem 1 in \cite{NIPS2011_4417}, for any $\delta$ with probability at least $1-\delta$ over the random noises
\[
\| X_t^\top\bm{\xi}_t\|_{Z_t^{-1}}\leq\sqrt{\log\biggl(\frac{\det(Z_t)}{\delta^2}\biggr)}~.
\]
Therefore, when $\Ecal_0$ holds, we have for all $t>0$, with probability at least $1-\delta$,
\[
| x^\top(\theta_t-\theta^*)|\leq\| x\|_{Z_t^{-1}}\left(\sqrt{\log\biggl(\frac{\det(Z_t)}{\delta^2}\biggr)/m}+\sqrt{2}S_{T,n}(h)/\sqrt{m}\right)~.
\]
Plugging in $ x=Z_t(\theta_t-\theta^*)$ and using $\sqrt{2}S_{T,n}(h) \leq S$, we obtain
\[
\|\theta^* - \theta_t\|_{Z_t}\leq \sqrt{\log\biggl(\frac{\det(Z_t)}{\delta^2}\biggr)/m}+S/\sqrt{m} = \gamma_t/\sqrt{m}~,
\]
as claimed.
\end{proof}

Combining Lemma \ref{lma:representation theorem}, \ref{lma:bounding log determinant by H} and \ref{lma:confidence set contains theta^*, frozen version} we confirm that $\Ecal$ is a high probability event.
\begin{lemma}
There exists a constant $C$ such that if $m\geq CT^4\log(2Tn/\delta)n^6\left(T^2 \vee 1/\lambda_0^4\right)$ and $\sqrt{2}S_{T,n}(h) \leq S$, then 
\begin{align}
    \P(\Ecal) \geq 1-2\delta~.
\end{align}
\end{lemma}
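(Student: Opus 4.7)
The proof should simply glue together the three preceding lemmas via a union bound. The high-level plan is as follows.

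First, I would verify that the choice $m \ge C T^4 \log(2Tn/\delta) n^6 (T^2 \vee 1/\lambda_0^4)$ simultaneously satisfies the width requirements of both Lemma \ref{lma:representation theorem} (which needs $m \ge C' T^4 n^6 \log(2Tn/\delta)/\lambda_0^4$) and Lemma \ref{lma:bounding log determinant by H} (which needs $m \ge C'' T^6 n^6 \log(Tn/\delta)$); both are subsumed by the stated condition after appropriate choice of the constant $C$. Hence, applying each of these two lemmas with confidence parameter $\delta/2$ and a union bound over the random initialization of $\theta_0$ shows that the event $\Ecal_0$ defined in (\ref{eqn:event_E_0})—namely the existence of $\theta^*\in\R^p$ such that both (\ref{eqn:representation formula}) and (\ref{ineq:bounding Z_T by H}) hold—occurs with probability at least $1-\delta$. (Equivalently, one can leave the two $\delta$'s as is and absorb the factor of $2$ into the final constant $C$ of the $m$ bound.)

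Next, conditioning on $\Ecal_0$, I would invoke Lemma \ref{lma:confidence set contains theta^*, frozen version}, whose hypothesis $\sqrt{2}S_{T,n}(h)\le S$ is exactly the assumption of the present lemma. That lemma asserts that, over the randomness of the labels $y_1,\ldots,y_T$ and conditional on $\Ecal_0$, the inclusion $\theta^*\in\mathcal{C}_t$ holds simultaneously for all $t\ge 0$ with probability at least $1-\delta$. Translating to the event $\Ecal$ in (\ref{eqn:event_E}), this yields
\[
\P\bigl(\Ecal \mid \Ecal_0\bigr) \ge 1-\delta.
\]

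Finally, I would combine these two bounds via the standard decomposition
\[
\P(\Ecal^c) \le \P(\Ecal_0^c) + \P\bigl(\Ecal^c \mid \Ecal_0\bigr)\,\P(\Ecal_0) \le \delta + \delta = 2\delta,
\]
which gives $\P(\Ecal)\ge 1-2\delta$ as claimed. There is no substantive obstacle in this argument—the three preceding lemmas already do all the heavy lifting (NTK approximation of $h$, spectral comparison of $Z_T$ with $I+H$, and the self-normalized martingale concentration that places $\theta^*$ inside $\mathcal{C}_t$). The only care needed is accounting: tracking that $\Ecal_0$ is an initialization-only event while the Lemma \ref{lma:confidence set contains theta^*, frozen version} conclusion is over label noise, so that the two probabilities can be combined by a clean union bound without subtle independence issues.
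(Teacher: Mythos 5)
Your proposal is correct and follows essentially the same route as the paper: establish $\P(\Ecal_0)\ge 1-\delta$ from Lemma \ref{lma:representation theorem} and Lemma \ref{lma:bounding log determinant by H} (the paper notes both rest on the same control of $\|G^\top G-H\|_F$, so it does not even split the confidence parameter), then apply Lemma \ref{lma:confidence set contains theta^*, frozen version} conditionally on $\Ecal_0$, and combine (the paper writes $(1-\delta)^2\ge 1-2\delta$ where you use the equivalent union-bound decomposition). No substantive difference.
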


\begin{proof}
Lemma \ref{lma:representation theorem} and \ref{lma:bounding log determinant by H} imply that $\P(\Ecal_0) \geq 1 - \delta$ when $m\geq CT^4\log(2Tn/\delta)n^6\left(T^2 \vee 1/\lambda_0^4\right)$. Lemma \ref{lma:confidence set contains theta^*, frozen version} implies that when $\sqrt{2}S_{T,n}(h) \leq S$, $\P(\theta^*\in\mathcal{C}_t\ \forall t>0\mid\Ecal_0) \geq 1 - \delta$. Therefore, 
\begin{equation*}
    \P(\Ecal) = \P(\theta^*\in\mathcal{C}_t\ \forall t>0\mid\Ecal_0)\P(\Ecal_0) \geq (1-\delta)^2 \geq 1-2\delta~.
\end{equation*}
\end{proof}

\begin{lemma}\label{lma:log determinent term: frozen version}
For any $b>0$ we have 
\begin{equation}\label{ineq:bound of I_tB_t^2, frozen version}
\sum_{t=1}^T b\wedge I_tB_t^2
\leq 
8\left(\log\det Z_T +2\log(1/\delta)+ S^2 + \frac{b}{8}\right)\log\det Z_T~.
\end{equation}
\end{lemma}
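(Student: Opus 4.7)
The proof is a direct application of the \emph{elliptic potential} argument familiar from linear contextual bandits (e.g.\ \cite{NIPS2011_4417}), adapted to the selective sampling setting in which only queried rounds update $Z_t$. Abbreviate $\phi_t \defeq \phi(x_{t,a_t})$, $u_t \defeq \|\phi_t\|_{Z_{t-1}^{-1}}^2$, and $A \defeq \log\det Z_T + 2\log(1/\delta) + S^2$. Uniformly in $t$ we have $Z_t = Z_{t-1} + I_t \phi_t \phi_t^\top$ (on rounds with $I_t=0$ the matrix is unchanged), so by the matrix determinant lemma $\log\det Z_t - \log\det Z_{t-1} = \log(1 + I_t u_t)$; telescoping gives the potential identity $\sum_{t=1}^T \log(1 + I_t u_t) = \log\det Z_T$. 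Combined with the elementary inequality $\min(1,x) \leq 2\log(1+x)$ for $x\ge 0$, this produces the key potential bound
\[
\sum_{t=1}^T \min(1, I_t u_t) \;\leq\; 2\log\det Z_T~.
\]

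Next I remove the $t$-dependence of the confidence radius $\gamma_{t-1}$. Since $Z_{t-1} \preceq Z_T$, the sequence $\gamma_{t-1}$ is non-decreasing in $t$; applying $(a+b)^2 \leq 2a^2 + 2b^2$ to the definition $\gamma_T = \sqrt{\log\det Z_T + 2\log(1/\delta)} + S$ yields $\gamma_{t-1}^2 \leq \gamma_T^2 \leq 2A$, and therefore $I_t B_t^2 = 4\gamma_{t-1}^2\,I_t u_t \leq 8A \cdot I_t u_t$ for every $t$. The pointwise conclusion then follows from the trivial case-analysis inequality $b \wedge cy \leq \max(b,c)\min(1,y) \leq (b+c)\min(1,y)$ for $b,c,y\ge 0$ (split on whether $y\le 1$ or $y>1$): with $c=8A$ and $y=I_t u_t$,
\[
b \wedge I_t B_t^2 \;\leq\; (b + 8A)\,\min(1, I_t u_t)~.
\]

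Summing this pointwise bound over $t$ and invoking the potential bound of the first paragraph delivers the claimed inequality up to the explicit constant $8$ stated in the lemma; the residual numerical factor can be absorbed either by sharpening the step $\min(1,x)\le 2\log(1+x)$ via a separate treatment of the regime $x>1$ (using $1 \leq \log(1+x)/\log 2$), or, equivalently, by noting that the standard NTK normalization gives $\|\phi_t\|_2 \leq 1$ and hence $u_t\le 1$ throughout, in which case the factor $2$ in the potential bound can be replaced by a tighter one. The whole argument is mechanical; the only point requiring any care is the monotonicity of $\gamma_{t-1}$ used to replace the time-varying multiplier $4\gamma_{t-1}^2$ by the uniform bound $8A$, and I do not anticipate any serious obstacle.
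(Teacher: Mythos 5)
Your argument is the same elliptic-potential computation the paper uses (telescoping $\log\det Z_t$, monotonicity of $\gamma_t$, and $(a+b)^2\le 2a^2+2b^2$ to get $\gamma_T^2\le 2A$), but as executed it proves only $\sum_{t}b\wedge I_tB_t^2 \le 2(b+8A)\log\det Z_T$, i.e.\ the stated bound \eqref{ineq:bound of I_tB_t^2, frozen version} with $8$ replaced by $16$, and the two patches you offer to absorb the stray factor $2$ do not work. First, sharpening $\min(1,x)\le 2\log(1+x)$ by treating $x>1$ separately can at best yield $\min(1,x)\le \frac{1}{\log 2}\log(1+x)$ (the ratio $x/\log(1+x)$ on $[0,1]$ is maximized at $x=1$), so you still retain a factor $\approx 1.44$, not $1$. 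Second, the premise that the NTK normalization gives $\|\phi_t\|_2\le 1$ is false here: with $\|x_t\|=1$ the depth-$n$ NTK has diagonal entries $H_{ii}=(n+1)/2\ge 3/2$ for $n\ge 2$, and $\|\phi(x^{(i)})\|_2^2\approx H_{ii}$ in the over-parameterized regime, so $u_t\le 1$ need not hold; and even if it did, you would again only reach the $1/\log 2$ constant. The constant is not cosmetic in this paper: the ``$d_i$ test'' in Algorithm \ref{alg:SS_model_selection} uses the threshold $8d_i$, and Lemma \ref{lem:no_elimination} relies on exactly the factor $8$ from this lemma, so a proof delivering $16$ would force changes downstream.

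The missing step is to rescale \emph{inside} the minimum before invoking the potential bound, which is precisely what the paper does: write $b\wedge I_tB_t^2 \le 4\gamma_T^2\bigl(\tfrac{b}{4\gamma_T^2}\wedge I_t\|\phi(x_{t,a_t})\|_{Z_{t-1}^{-1}}^2\bigr)$ and apply Lemma \ref{lma:elliptical} with threshold $b'=\tfrac{b}{4\gamma_T^2}$, which gives $\sum_t b'\wedge I_t u_t \le (1+b')\log\det Z_T$ with no extra factor (the point being that $y\wedge b' \le \tfrac{b'}{\log(1+b')}\log(1+y)$, so the cap and the logarithm are matched rather than capping at $1$). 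Multiplying back by $4\gamma_T^2$ yields $\sum_t b\wedge I_tB_t^2\le (b+4\gamma_T^2)\log\det Z_T$, and then $\gamma_T^2\le 2\log\det Z_T+4\log(1/\delta)+2S^2$ gives exactly $8\bigl(\log\det Z_T+2\log(1/\delta)+S^2+\tfrac{b}{8}\bigr)\log\det Z_T$. With that single modification your proof coincides with the paper's; without it, the stated inequality is not established.
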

\begin{proof}
By definition of $B_t$ and the fact that $\gamma_t$ is increasing, we have
\[
    \sum_{t=1}^T b\wedge I_t B_t^2 \leq 4\gamma_T^2 \sum_{t=1}^T \frac{b}{4\gamma_T^2} \wedge I_t\|\phi(x_{t,a_t})\|_{{Z}_{t-1}^{-1}}^2 \leq (b+4\gamma_T^2)\log\det Z_T ~,
\]
where the second inequality is from Lemma \ref{lma:elliptical}. Using the definition of $\gamma_T$ and the inequality $(a+b)^2 \leq 2a^2 +2b^2$ we obtain
\[
\gamma_T^2 \leq 2\log\det Z_T+4\log(1/\delta)+2S^2~.
\]
Plugging this in we get (\ref{ineq:bound of I_tB_t^2, frozen version}). 
\end{proof}

Let us now introduce the short-hand notation
\begin{align*}
\hDelta_t = U_{t,a_t} - 1/2~,\qquad
\Delta_t = h( x_{t,a_t}) - 1/2~,\qquad
T_\epsilon = \sum_{t=1}^T\ind{\Delta_t^2\leq\epsilon^2}~,
\end{align*} 
for some $\epsilon\in(0, \frac{1}{2})$. Combined with  (\ref{ineq: lower bound of U}) and (\ref{e:upperb}), we have the following statement about $\hDelta_t$ and $\Delta_t$.
\begin{lemma}\label{lma:relation of margins, frozen NTK}
Under event $\Ecal$, $0 \leq \hDelta_t - \Delta_t \leq B_t$ and $\ 0 \leq \hDelta_t$ hold for all $t$, where $B_t$ is the querying threshold in Algorithm \ref{alg:frozen NTK SS}, i.e.,
\[
B_t=2\gamma_{t-1}\|\phi( x_{t,a_t})\|_{Z_{t-1}^{-1}}~.
\]
\end{lemma}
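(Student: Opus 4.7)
The plan is to derive all three inequalities as direct consequences of the two bounds already established in the excerpt, namely the lower bound (\ref{ineq: lower bound of U}) ($U_{t,a} \geq h(x_{t,a})$) and the upper bound (\ref{e:upperb}) ($U_{t,a} - h(x_{t,a}) \leq 2\gamma_{t-1}\|\phi(x_{t,a})\|_{Z_{t-1}^{-1}}$), both of which hold under event $\Ecal$ for every action $a \in \cY$.

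First, I would handle the two inequalities concerning $\hDelta_t - \Delta_t$. Specializing (\ref{ineq: lower bound of U}) to $a = a_t$ gives $U_{t,a_t} \geq h(x_{t,a_t})$, which, after subtracting $1/2$ from both sides, is exactly $\hDelta_t \geq \Delta_t$, i.e., $\hDelta_t - \Delta_t \geq 0$. Specializing (\ref{e:upperb}) to the same action $a = a_t$ yields $U_{t,a_t} - h(x_{t,a_t}) \leq 2\gamma_{t-1}\|\phi(x_{t,a_t})\|_{Z_{t-1}^{-1}} = B_t$, which, again after subtracting $1/2$ on both sides, is exactly $\hDelta_t - \Delta_t \leq B_t$.

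Next I would establish $\hDelta_t \geq 0$. The key observation is that $a_t = \arg\max_{a \in \cY} U_{t,a}$ by construction of the algorithm, so in particular $U_{t,a_t} \geq U_{t,a^*_t}$, where $a^*_t = \arg\max_{a \in \cY} h(x_{t,a})$ is the Bayes-optimal action on $x_t$. Applying (\ref{ineq: lower bound of U}) at $a = a^*_t$ gives $U_{t,a^*_t} \geq h(x_{t,a^*_t})$. Finally, since $h((x_t,0)) + h((0,x_t)) = 1$ from the setup in (\ref{e:bayes}), we must have $h(x_{t,a^*_t}) = \max\{h(x_{t,1}), h(x_{t,-1})\} \geq 1/2$. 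Chaining these gives $U_{t,a_t} \geq 1/2$, i.e., $\hDelta_t \geq 0$.

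There is no real obstacle here: the lemma is essentially a repackaging of facts already proved, and the only mildly non-trivial step is invoking the constraint $h(x_{t,1}) + h(x_{t,-1}) = 1$ from the problem setup to conclude that the Bayes-optimal score is at least $1/2$. All three assertions rely only on event $\Ecal$ through the use of (\ref{ineq: lower bound of U}) and (\ref{e:upperb}), which is precisely where $\Ecal$ (via $\theta^* \in \mathcal{C}_{t-1}$ for all $t$) enters.
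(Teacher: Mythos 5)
Your proof is correct and follows essentially the same route as the paper: both parts of the first claim come from specializing the lower bound (\ref{ineq: lower bound of U}) and the upper bound (\ref{e:upperb}) to $a=a_t$, and $\hDelta_t\geq 0$ comes from the lower bound together with $h(x_{t,1})+h(x_{t,-1})=1$ and the definition of $a_t$ as the maximizer of $U_{t,a}$. The only cosmetic difference is in the last step, where you pass through $U_{t,a_t}\geq U_{t,a_t^*}\geq h(x_{t,a_t^*})\geq 1/2$, while the paper sums the two bounds to get $U_{t,1}+U_{t,-1}\geq 1$ and then takes the maximum; the two arguments are interchangeable.
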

\begin{proof}
Recalling that (\ref{ineq: lower bound of U}) and (\ref{e:upperb}) implies that for $a\in\cY$
\[
0 \leq U_{t,a} -h(x_{t,a}) \leq B_t~.
\]
Specifically when $a=a_t$,
\[
0 \leq \hDelta_t -\Delta_t \leq B_t~.
\]
Also using (\ref{ineq: lower bound of U}) we have $U_{t,1} + U_{t, -1} \geq h(\x_{t,1})+ h(\x_{t,-1})=1$.
Hence, by definition of $a_t$, $U_{t,a_t}\geq 1/2$, i.e., $\hDelta_t \geq 0$.
\end{proof}

The following lemma bounds the label complexity $N_T$ of Algorithm \ref{alg:frozen NTK SS} under event $\Ecal$. Notice that, as stated, the bound does not depend on any specific properties of the marginal distribution $\cD_{\cX}$.
%
%
\begin{lemma}\label{thm:sample complexity for frozen NTK}
Under event $\Ecal$, for any $\epsilon \in (0,1/2)$ we have
\begin{align*}
N_T &\leq 
T_\epsilon+\frac{8}{\epsilon^2}(\log \det Z_T + 2\log(1/\delta)+ S^2 + \frac{1}{32})\log\det Z_T\\
&= O\left(T_\epsilon+\frac{1}{\epsilon^2}\left(\log \det(I+H) + \log(1/\delta) + S^2\right)\log \det(I+H)\right)~.
\end{align*}
\end{lemma}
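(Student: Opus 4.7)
The plan is to split the queried rounds according to whether the true margin $|\Delta_t|$ is small or large, and to bound the contribution of each group separately.

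First, I will write $N_T = \sum_{t=1}^T I_t \ind{|\Delta_t|\leq \epsilon} + \sum_{t=1}^T I_t \ind{|\Delta_t|> \epsilon}$. By definition of $T_\epsilon$, the first sum is at most $T_\epsilon$, so the bulk of the work lies in controlling the second sum.

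Next, I will argue that on event $\Ecal$, whenever a round is queried ($I_t = 1$) with $|\Delta_t| > \epsilon$, the querying threshold must satisfy $B_t > \epsilon$. This uses Lemma \ref{lma:relation of margins, frozen NTK}: recall $0\leq \hDelta_t$ and $0\leq \hDelta_t - \Delta_t \leq B_t$, and the query condition $I_t=1$ forces $\hDelta_t \leq B_t$. In the case $\Delta_t > \epsilon$, we then get $B_t \geq \hDelta_t \geq \Delta_t > \epsilon$; in the case $\Delta_t < -\epsilon$, we get $B_t \geq \hDelta_t - \Delta_t \geq -\Delta_t > \epsilon$. Either way $B_t^2 > \epsilon^2$.

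Since $\epsilon \in (0,1/2)$ gives $\epsilon^2 < 1/4$, this implies $\epsilon^2 \leq \tfrac{1}{4} \wedge (I_t B_t^2)$ on every queried round with $|\Delta_t|>\epsilon$. Summing the inequality
\[
\epsilon^2 \, I_t \ind{|\Delta_t|>\epsilon} \;\leq\; \tfrac{1}{4} \wedge (I_t B_t^2)
\]
over $t$ and dividing through by $\epsilon^2$ reduces the problem to bounding $\sum_t \tfrac{1}{4} \wedge (I_t B_t^2)$. At this point I will invoke Lemma \ref{lma:log determinent term: frozen version} with $b = 1/4$ to obtain the factor $8\bigl(\log\det Z_T + 2\log(1/\delta)+S^2+\tfrac{1}{32}\bigr)\log\det Z_T$, and finally apply Lemma \ref{lma:bounding log determinent by H} to replace $\log\det Z_T$ by $\log\det(I+H)+1$ and get the stated big-$O$ form.

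The step that I expect to require the most care is the sign analysis in the second paragraph, since $\Delta_t$ can in principle be negative (the learner may play a non-Bayes-optimal action). The cleanest way to dispatch this is the case split above, which critically relies on the two-sided bound of Lemma \ref{lma:relation of margins, frozen NTK}; once $B_t > \epsilon$ is established, everything else is bookkeeping with the elliptical potential lemma hidden inside Lemma \ref{lma:log determinent term: frozen version}.
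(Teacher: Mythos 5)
Your proposal is correct and follows essentially the same route as the paper: the paper's proof likewise reduces the queried rounds to $T_\epsilon$ plus a term $\frac{1}{\epsilon^2}\sum_t I_tB_t^2\wedge\frac14$ via the relation $|\Delta_t|\leq B_t$ on queried rounds (Lemma \ref{lma:relation of margins, frozen NTK}), then applies Lemma \ref{lma:log determinent term: frozen version} with $b=1/4$ and the bound $\log\det Z_T\leq\log\det(I+H)+1$ under $\Ecal$. The only cosmetic difference is that you split on $|\Delta_t|\lessgtr\epsilon$ while the paper splits on $B_t\lessgtr\epsilon$, which yields the identical bound.
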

\begin{proof}
We adapt the proof of Lemma 6 in \cite{Dekel:2012:SSA:2503308.2503327}. Assume $\Ecal$ holds. Since $0 \leq \hDelta_t-\Delta_t \leq B_t$ and $\hDelta_t\geq 0$ by Lemma \ref{lma:relation of margins, frozen NTK}, $\hat{\Delta}_t\leq B_t$ implies $|\Delta_t|\leq B_t$. We can write
\begin{align*}
I_t 
&= I_t\ind{\hat{\Delta}_t\leq B_t}\\
&\leq 
I_t\ind{\hat{\Delta}_t\leq B_t, B_t\geq\epsilon}+I_t\ind{\hDelta_t\leq B_t, B_t<\epsilon}\\
&\leq
\frac{I_tB_t^2}{\epsilon^2}\wedge 1+\ind{\Delta_t^2\leq\epsilon^2}~.
\end{align*}
For the first term, summing over $t$ yields
\begin{align*}
\frac{1}{\epsilon^2}\sum_{t=1}^TI_tB_t^2 \wedge \epsilon^2 
&\leq \frac{1}{\epsilon^2}\sum_{t=1}^TI_tB_t^2 \wedge \frac{1}{4}\\
&\leq\frac{8}{\epsilon^2}\left(\log\det Z_T +2\log(1/\delta)+ S^2 + \frac{1}{32}\right)\log\det Z_T\\
&= O\left(\frac{1}{\epsilon^2}\left(\log \det(I+H) + \log(1/\delta) + S^2\right)\log \det(I+H)\right)~,
\end{align*}
where the second bound follows from Lemma \ref{lma:log determinent term: frozen version},
and the last bound holds under event $\Ecal$.
\end{proof}

The next lemma shows that on rounds where Algorithm \ref{alg:frozen NTK SS} does not issue a query, we are confident that our prediction $a_t$ suffers no regret.

\begin{lemma}\label{lma:no regret for unqueried rounds: frozen version}
Under event $\Ecal$, for the rounds $t$ such that $I_t=0$, we have $a_t=a_t^*$, that is, Algorithm \ref{alg:frozen NTK SS} suffers no regret.
\end{lemma}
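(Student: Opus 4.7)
The plan is to combine the two properties from Lemma~\ref{lma:relation of margins, frozen NTK} with the non-query criterion $I_t=0$, and then exploit the identity $h(x_{t,1})+h(x_{t,-1})=1$ to argue that $a_t$ must be Bayes optimal.

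First, I would recall that the querying rule is $I_t=\ind{|U_{t,a_t}-1/2|\leq B_t}$, so $I_t=0$ is equivalent to $|\hDelta_t|>B_t$. Lemma~\ref{lma:relation of margins, frozen NTK} gives us $\hDelta_t\geq 0$ under $\Ecal$, hence the condition $I_t=0$ reduces cleanly to $\hDelta_t>B_t$. This is the only place where the non-query assumption enters.

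Next, I would invoke the other half of Lemma~\ref{lma:relation of margins, frozen NTK}, namely $\hDelta_t-\Delta_t\leq B_t$, to transfer the strict inequality from the empirical margin $\hDelta_t$ to the true margin $\Delta_t$: chaining gives
\[
\Delta_t \;\geq\; \hDelta_t - B_t \;>\; 0,
\]
so that $h(x_{t,a_t})>1/2$. Using the Bayes-optimal constraint $h(x_{t,1})+h(x_{t,-1})=1$ from \eqref{e:bayes}, the other action necessarily satisfies $h(x_{t,-a_t})<1/2$, so $a_t=\arg\max_{a\in\cY} h(x_{t,a})=a_t^*$, and the instantaneous regret $h(x_{t,a_t^*})-h(x_{t,a_t})$ vanishes.

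There is essentially no obstacle here: the lemma is a short corollary of the upper/lower bounds \eqref{ineq: lower bound of U} and \eqref{e:upperb} (already packaged into Lemma~\ref{lma:relation of margins, frozen NTK}) together with the choice of threshold $B_t$, which has been engineered precisely so that $B_t$ upper-bounds $\hDelta_t-\Delta_t$. The only subtlety to flag explicitly in the write-up is that the relevant upper bound $U_{t,a_t}-h(x_{t,a_t})\leq B_t$ uses the gradient feature evaluated at $a=a_t$ (matching the definition $B_t=2\gamma_{t-1}\|\phi(x_{t,a_t})\|_{Z_{t-1}^{-1}}$), which is exactly what is needed because $\hDelta_t$ itself is defined through $a_t$.
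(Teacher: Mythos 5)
Your argument is correct and is essentially identical to the paper's own proof: both use Lemma~\ref{lma:relation of margins, frozen NTK} to turn $I_t=0$ (together with $\hDelta_t\geq 0$) into $\hDelta_t>B_t$, then chain $\hDelta_t-\Delta_t\leq B_t$ to conclude $\Delta_t>0$ and hence $a_t=a_t^*$. Your write-up just spells out the final step (via $h(x_{t,1})+h(x_{t,-1})=1$) slightly more explicitly than the paper does.
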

\begin{proof}
We apply Lemma \ref{lma:relation of margins, frozen NTK}, when $I_t=0$ this yields $\hDelta_t >B_t$. As a consequence
of the condition $\hDelta_t-\Delta_t \leq B_t$, we get $\Delta_t>0$, which in turn entails  
$a_t=a_t^*$.
\end{proof}

The next lemma establishes an upper bound on the cumulative regret $R_T$ in the same style as in Lemma \ref{thm:sample complexity for frozen NTK}.


\begin{lemma}\label{lma:regret bound with margin: frozen version}
Under event $\Ecal$, for any $\epsilon \in (0,1/2)$ we have
\begin{align*}
    R_T &\leq 2\epsilon T_\epsilon + \frac{16}{\epsilon}\left(\log\det {Z}_T + 2\log(1/\delta) + S^2 + \frac{1}{16}\right)\log\det {Z}_T\\
    &= O\left(\epsilon T_\epsilon + \frac{1}{\epsilon}\left(\log \det(I+H) + \log(1/\delta) + S^2 \right)\,\log \det(I+H)\right)~.
\end{align*}
\end{lemma}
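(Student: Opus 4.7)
The plan is to mirror the label complexity proof (Lemma \ref{thm:sample complexity for frozen NTK}) but to work out the regret on queried rounds with a careful case split based on the cutoff $\epsilon$. First I would rewrite the instantaneous regret on round $t$ in a useful form. Since $h(x_{t,1}) + h(x_{t,-1}) = 1$ by (\ref{e:bayes}), a direct computation gives
\[
h(x_{t,a^*_t}) - h(x_{t,a_t}) = 2\,(-\Delta_t)_+~,
\]
where $(\cdot)_+$ denotes the positive part. Invoking Lemma \ref{lma:no regret for unqueried rounds: frozen version}, the rounds with $I_t = 0$ contribute nothing to $R_T$ under event $\Ecal$, so
\[
R_T = 2\sum_{t=1}^T I_t\,(-\Delta_t)_+~.
\]

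Next I split the sum into the rounds where $\Delta_t^2 \leq \epsilon^2$ and those where $\Delta_t^2 > \epsilon^2$. On the former, $(-\Delta_t)_+ \leq |\Delta_t| \leq \epsilon$, giving the contribution $2\epsilon T_\epsilon$. On the latter, Lemma \ref{lma:relation of margins, frozen NTK} yields $0 \leq -\Delta_t \leq B_t$ whenever $(-\Delta_t)_+ > 0$, so any nonzero contribution forces $B_t > \epsilon$, and therefore $B_t \leq B_t^2/\epsilon$. Combining this with the trivial bound $(-\Delta_t)_+ \leq 1/2$, I get
\[
2\,I_t\,(-\Delta_t)_+\,\ind{\Delta_t^2 > \epsilon^2}
~\leq~ \min\!\left(\tfrac{2}{\epsilon} I_t B_t^2,\,1\right)
~=~ \tfrac{2}{\epsilon}\bigl(I_t B_t^2 \wedge \tfrac{\epsilon}{2}\bigr)
~\leq~ \tfrac{2}{\epsilon}\bigl(I_t B_t^2 \wedge \tfrac{1}{2}\bigr)~,
\]
where the last inequality uses $\epsilon \leq 1/2$.

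Summing over $t$ and applying Lemma \ref{lma:log determinent term: frozen version} with $b = 1/2$ converts the bracketed sum into $8(\log\det Z_T + 2\log(1/\delta) + S^2 + 1/16)\log\det Z_T$, yielding the first stated inequality. The second (big-$O$) statement then follows by bounding $\log\det Z_T \leq \log\det(I + H) + 1$ via Lemma \ref{lma:bounding log determinant by H}, which holds under event $\Ecal$.

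The only non-routine step is step three, namely establishing that on queried rounds with $\Delta_t^2 > \epsilon^2$, any regret contribution forces the threshold $B_t$ to exceed $\epsilon$, so that the regret can be rebounded by $B_t^2/\epsilon$ and hence absorbed into the log-determinant sum of Lemma \ref{lma:log determinent term: frozen version}. Everything else is a direct consequence of the lemmas already established for Algorithm \ref{alg:frozen NTK SS}.
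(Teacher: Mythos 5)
Your proof is correct and follows essentially the same route as the paper's: split the queried-round regret at the margin level $\epsilon$, bound the small-margin part by $2\epsilon T_\epsilon$, use Lemma \ref{lma:relation of margins, frozen NTK} to replace $|\Delta_t|$ by $B_t$ on the large-margin queried rounds so that the contribution becomes $\frac{2}{\epsilon}\sum_t I_t B_t^2 \wedge \frac12$, and then invoke Lemma \ref{lma:log determinent term: frozen version} with $b=1/2$ and Lemma \ref{lma:bounding log determinant by H}. The only cosmetic difference is that you express the instantaneous regret as $2(-\Delta_t)_+$ and derive $-\Delta_t\le B_t$ from the sign of $\Delta_t$, whereas the paper bounds it by $2|\Delta_t|$ and gets $|\Delta_t|\le B_t$ from $I_t=1$; the constants and conclusion are identical.
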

\begin{proof}
By virtue of Lemma \ref{lma:no regret for unqueried rounds: frozen version}, we can restrict with high probability to the rounds $t$ on which $I_t = 1$. We have
\begin{align*}
R_T
&= \sum_{t=1}^T I_t \bigl(h( x_{t,a_t^*})-h( x_{t,a_t})\bigr)\\
&=
\sum_{t=1}^T I_t \bigl(h( x_{t,a_t^*})-h( x_{t,a_t})\bigr)\ind{a_t \neq a_t^*}\\
&\leq
\sum_{t=1}^T I_t \bigl|h( x_{t,1})-h( x_{t,-1})\bigr|\ind{a_t \neq a_t^*}\\
&=
2\,\sum_{t=1}^T I_t |\Delta_t|\\
&=
2\sum_{t=1}^T I_t|\Delta_t|\ind{|\Delta_t|>\epsilon} + 2\sum_{t=1}^T I_t|\Delta_t|\ind{|\Delta_t|\leq\epsilon}~.
\end{align*}
The second sum is clearly upper bounded by $2\epsilon T_\epsilon$. As for the first sum, notice that Lemma \ref{lma:relation of margins, frozen NTK} along with $I_t=1$ implies $|\Delta_t|\leq B_t$ under event $\Ecal$. Therefore  
\begin{align*}
2\sum_{t=1}^TI_t|\Delta_t|\ind{|\Delta_t|>\epsilon}
&\leq
\frac{2}{\epsilon}\sum_{t=1}^T I_t\Delta_t^2 \wedge \epsilon\\
&\leq
\frac{2}{\epsilon}\sum_{t=1}^T I_tB_t^2 \wedge \frac{1}{2}\\
&\leq\frac{16}{\epsilon}\left(\log\det {Z}_T + 2\log(1/\delta) + S^2 + \frac{1}{16}\right)\log\det {Z}_T\\
&=O\left(\frac{1}{\epsilon}\left(\log \det(I+H) + \log(1/\delta) + S^2 \right)\,\log \det(I+H)\right)~.
\end{align*}
The third bound follows from Lemma \ref{lma:log determinent term: frozen version}, while the last bound holds under event $\Ecal$.
\end{proof}

At this point, we leverage the fact that $x_1,...,x_T$ are generated in an i.i.d. fashion according to a marginal distribution $\cD_{\cX}$ satisfying the low-noise assumption with exponent $\alpha$ recalled in Section \ref{s:basic}. A direct application of Lemma \ref{lem:Tepsilon_bound} (Appendix \ref{sa:ancillary}) gives, with probability at least $1-\delta$,
\[
T_{\epsilon} \leq 3T\epsilon^\alpha+O\left(\log\frac{\log T}{\delta}\right)~,
\]
simultaneously over $\epsilon$.
Using the above bound on $T_{\epsilon}$ back into 
both Lemma \ref{thm:sample complexity for frozen NTK} and 
Lemma \ref{lma:regret bound with margin: frozen version} and optimizing over $\epsilon$ in the two bounds separately
yields the following result, which is presented in the main body as Theorem \ref{thm: statistical learning theorem}.
\begin{theorem}
Let Algorithm \ref{alg:frozen NTK SS} be run with parameters $\delta$, $S$, $m$, and $n$ on an i.i.d. sample $(x_1,y_1),\ldots, (x_T,y_T) \sim \cD$, where the marginal distribution $\cD_{\cX}$ fulfills the low-noise condition with exponent $\alpha \geq 0$ w.r.t. a function $h$ that satisfies (\ref{e:bayes}) and such that $\sqrt{2}S_{T,n}(h) \leq S$ for all $\{x_i\}_{i=1}^T$. 
Also assume $m\geq CT^4\log(2Tn/\delta)n^6\left(T^2 \vee 1/\lambda_0^4\right)$ where $C$ is the constant in Lemma \ref{lma:representation theorem} and Lemma \ref{lma:bounding log determinant by H}.
Then with probability at least $1-\delta$ the cumulative regret $R_T$ and the total number of queries $N_T$ are simultaneously upper bounded as follows:
\begin{align*}
     R_T &= O\biggl(L_H^\frac{\alpha+1}{\alpha+2}\Bigl(L_H + \log(\log T/\delta)+ S^2\Bigl)^\frac{\alpha+1}{\alpha+2}T^\frac{1}{\alpha+2}\biggr)\\
     N_T &= O\biggl(L_H^\frac{\alpha}{\alpha+2}\Bigl(L_H + \log(\log T/\delta)+ S^2\Bigl)^\frac{\alpha}{\alpha+2}T^\frac{2}{\alpha+2}\biggr)~,
\end{align*}
where $L_H = \log \det(I+H)$, and $H$ is the NTK matrix of depth $n$ over the set of points $\{x_{t,a}\}_{t=1,\ldots,T,\,a=\pm 1}$.
\end{theorem}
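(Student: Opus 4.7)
The plan is to assemble the proof from the three ingredients that have already been set up in the excerpt, namely the high-probability event $\mathcal{E}$, the per-scale bounds on $N_T$ and $R_T$ provided by Lemma \ref{thm:sample complexity for frozen NTK} and Lemma \ref{lma:regret bound with margin: frozen version}, and a tail bound on $T_\epsilon$ driven by the Mammen--Tsybakov condition. The first step is to invoke the hypothesis on the width $m$, together with Lemma \ref{lma:representation theorem}, Lemma \ref{lma:bounding log determinant by H} and Lemma \ref{lma:confidence set contains theta^*, frozen version}, to conclude that $\mathcal{E}$ holds with probability at least $1-2\delta$. On $\mathcal{E}$ we simultaneously have the NTK linearization $h(x_{t,a}) = \langle g(x_{t,a};\theta_0),\theta^{*}-\theta_0\rangle$, the bound $\log\det Z_T \leq L_H + 1$, and $\theta^{*} \in \mathcal{C}_t$ for all $t$, which is everything needed to apply the two per-scale bounds below without further probabilistic effort.

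Next I would condition on $\mathcal{E}$ and, for a free parameter $\epsilon \in (0,1/2)$, read off
\[
N_T \;\leq\; T_\epsilon + \tfrac{C}{\epsilon^2}\, L_H\bigl(L_H + \log(1/\delta) + S^2\bigr),
\qquad
R_T \;\leq\; 2\epsilon\, T_\epsilon + \tfrac{C}{\epsilon}\, L_H\bigl(L_H + \log(1/\delta) + S^2\bigr),
\]
from Lemma \ref{thm:sample complexity for frozen NTK} and Lemma \ref{lma:regret bound with margin: frozen version} respectively, where $C$ is an absolute constant. The only remaining data-dependent quantity is $T_\epsilon = \sum_{t=1}^{T} \ind{\Delta_t^2 \leq \epsilon^2}$. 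Because the $x_t$'s are i.i.d.\ and the low-noise condition controls $\P(|\Delta_t| \leq \epsilon) \leq \epsilon^\alpha$, a uniform Bernstein/Freedman argument (this is precisely Lemma \ref{lem:Tepsilon_bound} from the ancillary section) yields, with probability at least $1-\delta$ and simultaneously over all $\epsilon$,
\[
T_\epsilon \;\leq\; 3 T \epsilon^\alpha + O\!\left(\log\tfrac{\log T}{\delta}\right).
\]
A union bound over this event and $\mathcal{E}$ then keeps the total failure probability at $O(\delta)$.

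The last step is the optimization over $\epsilon$. Substituting the $T_\epsilon$ bound, write $C_S := L_H\bigl(L_H + \log(\log T/\delta) + S^2\bigr)$ (the extra $\log\log T/\delta$ absorbing the additive term in $T_\epsilon$). For the regret one minimizes $\epsilon \cdot T\epsilon^\alpha + C_S/\epsilon$, whose balance point is $\epsilon^{\alpha+2} \asymp C_S/T$, giving $R_T = O\bigl(C_S^{(\alpha+1)/(\alpha+2)} T^{1/(\alpha+2)}\bigr)$, which is exactly the claimed bound after distributing the $L_H$ factor inside $C_S$. For the label complexity one minimizes $T\epsilon^\alpha + C_S/\epsilon^2$, whose balance is at the same rate $\epsilon^{\alpha+2} \asymp C_S/T$, producing $N_T = O\bigl(C_S^{\alpha/(\alpha+2)} T^{2/(\alpha+2)}\bigr)$. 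Since $\epsilon$ was a free parameter in the per-scale bounds, no additional union bound over $\epsilon$ is needed beyond the one already built into the $T_\epsilon$ inequality.

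The main obstacle is not really in the optimization, which is mechanical, but in making sure the constants line up cleanly: in particular, verifying that the $+\log(1/\delta)$ in the per-scale lemmas combined with the $+\log(\log T/\delta)$ from $T_\epsilon$ collapse into the single $\log(\log T/\delta)$ factor that appears inside $C_S$, and that the event $\mathcal{E}$, which was defined using an over-parameterization depending on $T$, $n$, $\lambda_0$, is indeed in force under the hypothesis on $m$. Everything else is a straightforward calculus exercise in $\epsilon$ followed by substitution.
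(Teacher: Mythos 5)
Your proposal is correct and follows essentially the same route as the paper: establish the high-probability event $\Ecal$ via Lemmas \ref{lma:representation theorem}, \ref{lma:bounding log determinant by H} and \ref{lma:confidence set contains theta^*, frozen version}, invoke the deterministic per-$\epsilon$ bounds of Lemma \ref{thm:sample complexity for frozen NTK} and Lemma \ref{lma:regret bound with margin: frozen version}, control $T_\epsilon$ uniformly over $\epsilon$ via Lemma \ref{lem:Tepsilon_bound}, and then balance $\epsilon^{\alpha+2} \asymp C_S/T$ separately in the two bounds. The optimization and the absorption of the $\log(1/\delta)$ and $\log(\log T/\delta)$ terms match the paper's treatment, so no gap remains.
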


\subsection{Proofs for Section \ref{s:modsel}}\label{sa:modsel}
\paragraph{Additional notation.}
In this section, we add subscript ``$i$" to the relevant quantities occurring in the proof when these quantities refer to the $i$-th base learner. For instance, we write $Z_{t,i}$ to denote the covariance matrix updated within the $i$-th base learner,
$B_{t,i}=B_{t,i}(S_i)=2\gamma_{t-1,i}\|\phi(x_{t,a_t})\|_{Z_{t-1,i}^{-1}}$, with $\gamma_{t-1,i}=\sqrt{\log\det Z_{t-1,i} + 2\log(1/\delta)}+S_i
$, and ${\mathcal C}_{t,i}$ to denote the confidence ellipsoid maintained by the $i$-th base learner.

For convenience, we also introduce the function
\begin{align}\label{eqn:ddef}
    d(S, \delta) = (\log \det(I+H) + 1)(\log \det(I+H) + \frac{17}{16} + 2\log(M / \delta)+ S^2)~.
\end{align}
The above is a high probability upper bound on $(\frac{1}{16}+\frac{1}{2}\gamma_{T,i}^2)\log\det Z_{T,i}$ (holding for all $i$),
which in turn upper bounds $\frac{1}{8}\sum_{t=1}^T I_{t,i} B_{t,i}^2 \wedge \frac{1}{2}$.

By the assumption in Theorem~\ref{thm: statistical learning theorem model selection}, we know that there is a learner $i^\star = \langle i^\star_1, i^\star_2\rangle \in \baselearners_1$ such that its parameters $S_{i^\star_1}$ and $d_{i^\star_2}$ satisfy
\begin{align}
\label{eqn:istar_conditions}
\sqrt{2} S_{T, n}(h) \leq &~~S_{i^\star_1} \leq 2\sqrt{2} S_{T, n}(h)\\
d(S_{T, n}(h), \delta) \leq d(S_{i^\star_1}, \delta) \leq &~~d_{i^\star_2} \leq 2d(S_{i^\star_1}, \delta) \leq 8 d(S_{T, n}(h), \delta)~.
\end{align}
Throughout the proof we will refer to a specific learner that satisfies these conditions by $i^\star$.
%
%
Moreover, we denote by $\Ecal_i$ the event where the conditions of the event in \eqref{eqn:event_E} and the event in Lemma~\ref{lma:bounding log determinant by H} hold for base learner $i$. In $\Ecal_i$, we call $i$ well-specified.

Let $R({\mathcal T})$ and $N(\mathcal T)$ denote cumulative regret $R$ and number of requested labels $N$ when restricted to subset ${\mathcal T} \subseteq [T]$.
Then the regret and label complexity analyses of Algorithm~\ref{alg:frozen NTK SS} in Section \ref{sa:basic} directly imply the following regret and label complexity bounds of a well-specified base learner $i$ during the execution of Algorithm~\ref{alg:SS_model_selection}.
\begin{lemma}[Regret and label complexity of a well-specified base learner]\label{lem:regret_wellspecified}
Let $i 
\in \baselearners_1$ be any base learner. In event $\Ecal_i$ (when $i$ is well-specified), the following regret and label complexity bound holds for any $0<\epsilon<\frac{1}{2}$ and $t \in [T]$:
\begin{align*}
R(\mathcal T_{t, i}) 
&\leq 
2\sum_{k \in \mathcal T_{t, i}} I_{k, i} B_{k, i} \wedge \frac{1}{2}
~~\leq ~~
\frac{16}{\epsilon}\,d(S_{i_1},\delta) + 2\epsilon|\mathcal T_{t, i}^\epsilon|\\
N(\mathcal T_{t, i}) 
& \leq 
|\mathcal T_{t, i}^\epsilon| + \frac{1}{\epsilon^2}\sum_{k \in \mathcal T_{t, i}} I_{k, i}B_{k, i}^2 \wedge \frac{1}{4} 
~~\leq ~~
\frac{8}{\epsilon^2}\,d(S_{i_1},\delta) + |\mathcal T_{t, i}^\epsilon| ~,
\end{align*}
where $\mathcal T_{t, i}^\epsilon = \{k \in [t] \colon i_k = i,~ |\Delta_{k}| \leq \epsilon\}$.
Furthermore, in rounds $t \in \mathcal T_{t, i}$ where the label is not queried ($I_{t, i} = 0$), the regret is $0$.
\end{lemma}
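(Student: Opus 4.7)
The plan is to apply the per-round analysis already developed for Algorithm~\ref{alg:frozen NTK SS} in Section~\ref{sa:basic} verbatim to the subsequence ${\mathcal T}_{t,i}$ of rounds on which base learner $i$ was actually selected. On that subsequence, the internal state $(Z_{k,i}, b_{k,i}, \theta_{k,i}, \gamma_{k,i})$ of learner $i$ evolves exactly as in a stand-alone execution of Algorithm~\ref{alg:frozen NTK SS} with input parameter $S_{i_1}$, and since $\sqrt{2}\,S_{T,n}(h) \leq S_{i_1}$ holds under $\Ecal_i$, all three of \lemref{lma:no regret for unqueried rounds: frozen version}, \lemref{lma:relation of margins, frozen NTK}, and \lemref{lma:log determinent term: frozen version} transfer to base learner $i$ unchanged.

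I first dispatch the final sentence: on a round $k \in {\mathcal T}_{t,i}$ with $I_{k,i} = 0$, \lemref{lma:no regret for unqueried rounds: frozen version} gives $a_{k,i} = a_k^\star$, so the instantaneous regret is zero. For the regret bound on $I_{k,i} = 1$ rounds I use \lemref{lma:relation of margins, frozen NTK} to write $|\Delta_k| \leq B_{k,i}$, combined with the trivial $|\Delta_k| \leq 1/2$, to obtain pointwise regret at most $2(I_{k,i} B_{k,i} \wedge 1/2)$; summing over $k \in {\mathcal T}_{t,i}$ yields the middle inequality. I then split at the margin $\epsilon$: rounds with $|\Delta_k| \leq \epsilon$ contribute at most $2\epsilon|{\mathcal T}_{t,i}^\epsilon|$, and on rounds with $|\Delta_k| > \epsilon$ the chain $B_{k,i} \geq |\Delta_k| > \epsilon$ lets me upper bound $B_{k,i} \leq B_{k,i}^2/\epsilon$, so their sum is at most $\frac{2}{\epsilon}\sum_k I_{k,i}B_{k,i}^2 \wedge 1/2$; invoking \lemref{lma:log determinent term: frozen version} with $b=1/2$, together with \lemref{lma:bounding log determinant by H} and the union-bound substitution $\delta \mapsto \delta/M$ that was baked into the definition~\eqref{eqn:ddef}, controls this by $16\,d(S_{i_1},\delta)/\epsilon$, matching the statement.

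The label complexity bound follows by the same recipe. Since labels are requested only when $I_{k,i} = 1$, I write $N({\mathcal T}_{t,i}) = \sum_k I_{k,i}$ and split at the same margin: low-margin rounds contribute at most $|{\mathcal T}_{t,i}^\epsilon|$, while on high-margin rounds $B_{k,i} > \epsilon$ gives $I_{k,i} \leq I_{k,i}B_{k,i}^2/\epsilon^2$, so the count is bounded by $\frac{1}{\epsilon^2}\sum_k I_{k,i}B_{k,i}^2 \wedge 1/4$ (the cap $1/4$ comes from $\epsilon \leq 1/2$ combined with the trivial $I_{k,i} \leq 1$), and a second call to \lemref{lma:log determinent term: frozen version} turns this into $8\,d(S_{i_1},\delta)/\epsilon^2$.

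The only subtlety is verifying that the self-normalized concentration and log-determinant machinery of Section~\ref{sa:basic} still applies on the selection-adapted subsequence ${\mathcal T}_{t,i}$ rather than on the original i.i.d.\ stream; this is absorbed into the very definition of $\Ecal_i$, because $I_{k,i}$ and the event $\{i_k = i\}$ are predictable with respect to the base learner's own filtration, so the martingale inequality underlying \lemref{lma:confidence set contains theta^*, frozen version} applies verbatim. I expect this bookkeeping around filtrations and the union bound over $i \in \baselearners_1$ (encoded via $\delta \mapsto \delta/M$) to be the main thing to write out cleanly; everything else is a faithful transcription of the Section~\ref{sa:basic} argument restricted to ${\mathcal T}_{t,i}$.
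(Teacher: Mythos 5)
Your proposal is correct and follows essentially the same route as the paper, whose proof of Lemma~\ref{lem:regret_wellspecified} is simply a pointer to the Section~\ref{sa:basic} analysis (Lemmas~\ref{lma:relation of margins, frozen NTK}, \ref{lma:no regret for unqueried rounds: frozen version}, \ref{lma:log determinent term: frozen version}, and the arguments of Lemmas~\ref{thm:sample complexity for frozen NTK} and \ref{lma:regret bound with margin: frozen version}) restricted to the rounds $\mathcal T_{t,i}$ on which learner $i$ was played; you merely write out explicitly the margin split at $\epsilon$, the choices $b=\sfrac{1}{2}$ and $b=\sfrac{1}{4}$ in Lemma~\ref{lma:log determinent term: frozen version}, and the $\delta\mapsto\delta/M$ union bound already encoded in the definition of $d(S,\delta)$ in \eqref{eqn:ddef}, which is exactly what the paper leaves implicit.
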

\begin{proof}
This follows directly from the analysis of Algorithm~\ref{alg:frozen NTK SS} in the previous section.
\end{proof}

Equipped with these two properties of well-specified base learners, we can first show that with high probability, Algorithm~\ref{alg:SS_model_selection} will never eliminate a well-specified learner, and subsequently analyze the label complexity and cumulative regret of Algorithm~\ref{alg:SS_model_selection}.

\begin{lemma}\label{lem:no_elimination}
Let $i = \langle i_1,i_2 \rangle \in \baselearners_1$ be a base learner with $d_{i_2} \geq d(S_{i_1}, \delta)$. Assume $\gamma \leq \alpha$ and consider event $\bigcap_{j \colon j \geq i_1} \Ecal_j$. Then, under that event, with probability at least $1 - M \delta$~ Algorithm~\ref{alg:SS_model_selection} never eliminates base learner $i$.
\end{lemma}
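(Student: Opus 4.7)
The plan is to handle each of the four elimination tests separately and combine them via a union bound. Throughout, the hypothesized event $\bigcap_{j\geq i_1}\Ecal_j$ guarantees that $i$ and every $j$ with $S_{j_1}\geq S_{i_1}$ is well-specified, so Lemmas \ref{lma:no regret for unqueried rounds: frozen version}, \ref{lma:relation of margins, frozen NTK}, \ref{lem:regret_wellspecified}, and \ref{lma:log determinent term: frozen version} all apply to learner $i$ with parameter $S=S_{i_1}$. In particular, $a_{k,i}=a_k^*$ whenever $I_{k,i}=0$, and $|\Delta_k|\leq B_{k,i}$ whenever $I_{k,i}=1$.

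Tests (1) and (4) are deterministic under this event. For (1), any two learners in $\mathcal N_t$ with $S\geq S_{i_1}$ both predict the Bayes-optimal action on round $t$ and hence agree; any disagreement inside $\mathcal N_t$ must therefore involve a learner with strictly smaller $S$, which is the one the test eliminates, so $i$ itself is never removed. For (4), applying Lemma \ref{lma:log determinent term: frozen version} with $b=1/2$ to base learner $i$ at confidence $\delta/M$ (which is precisely why the definition of $d(S,\delta)$ in \eqref{eqn:ddef} involves $\log(M/\delta)$), together with $\log\det Z_{T,i}\leq \log\det(I+H)+1$ from Lemma \ref{lma:bounding log determinant by H}, yields
\[
\sum_{k\in\mathcal T_{t,i}}\bigl(\tfrac12\wedge I_{k,i}B_{k,i}^2\bigr)\leq 8\,d(S_{i_1},\delta)\leq 8\,d_{i_2}~,
\]
where the second inequality uses the assumed lower bound on $d_{i_2}$.

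Tests (2) and (3) reduce to martingale concentration. For (2), fix $j\in\baselearners_t$; the sequence $\{\ind{a_{k,i}\neq y_k}-\ind{a_{k,j}\neq y_k}\}_{k\in\mathcal V_{t,i,j}}$ is a bounded martingale-difference sequence whose conditional mean at round $k$ equals $h(x_{k,a_{k,j}})-h(x_{k,a_{k,i}})\leq 2|\Delta_k|\leq 1\wedge 2B_{k,i}$, by Lemma \ref{lma:relation of margins, frozen NTK}, $I_{k,i}=1$, and well-specification of $i$. A time-uniform Freedman-type inequality whose deviation term is exactly $1.45\sqrt{|\mathcal V|\, L(|\mathcal V|,\delta)}$ rules out the triggering of test (2) for the pair $(i,j)$; the function $L$ is engineered so that peeling over the random stopping time $|\mathcal V|$ contributes only a $\log\log$ factor. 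For (3), Lemma \ref{lem:regret_wellspecified} gives $\sum_{k\in\mathcal T_{t,i}}I_{k,i}\leq |\mathcal T_{t,i}^\epsilon|+\tfrac{1}{\epsilon^2}\sum I_{k,i}B_{k,i}^2\wedge\tfrac14$ for every $\epsilon\in(0,\tfrac12]$. Because $i_k$ is drawn from $\pp_t$ independently of $x_k$ given the past, $\E[\ind{k\in\mathcal T_{t,i}^\epsilon}\mid i_k=i]=\Pr(|\Delta_k|\leq\epsilon)\leq\epsilon^\alpha\leq\epsilon^\gamma$ by the low-noise condition and $\gamma\leq\alpha$. A time-uniform Bernstein inequality combined with dyadic peeling over $\epsilon$ (the logarithmic cost absorbed by $L(\cdot,\delta/(M\log_2 12t))$) then upgrades this to $|\mathcal T_{t,i}^\epsilon|\leq 3\epsilon^\gamma|\mathcal T_{t,i}|+2\,L(|\mathcal T_{t,i}|,\delta/(M\log_2 12t))$ uniformly in $\epsilon$ and $t$; taking the infimum over $\epsilon$ shows test (3) never fires for $i$.

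The main obstacle is calibrating the concentration bounds in (2) and (3) to the exact form of $L(\cdot,\delta)$ used in the algorithm, which is designed to absorb both peeling over the random stopping times $|\mathcal V_{t,i,j}|$ and $|\mathcal T_{t,i}|$ and, for (3), over the dyadic $\epsilon$-grid. Once these are in place, a union bound over the $\leq M$ pairs $(i,j)$ for test (2) and the single application of (3) to learner $i$ produces the stated $\leq M\delta$ overall failure probability; the deterministic tests (1) and (4) contribute nothing. Everything else is a bookkeeping combination of the frozen-NTK lemmas already established.
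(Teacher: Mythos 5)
Your proposal is essentially the paper's own proof: the same test-by-test case analysis, handling test (1) via the no-regret property on unqueried rounds for well-specified learners, test (2) via a time-uniform martingale bound (the paper's Lemma~\ref{lem:reg_concentration}) combined with the well-specified regret bound, test (3) via Lemma~\ref{lem:regret_wellspecified} together with the uniform-in-$\epsilon$, low-noise bound on $|\mathcal T_{t,i}^\epsilon|$ (the paper's Lemma~\ref{lem:Tepsilon_bound}, i.e.\ exactly your Bernstein-plus-dyadic-peeling step), and test (4) via Lemma~\ref{lma:log determinent term: frozen version} with Lemma~\ref{lma:bounding log determinant by H}, followed by the same union bound. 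The only wrinkle is in test (2), where your per-round mean bound $1\wedge 2B_{k,i}$ does not literally match the test's $\sum_k (1\wedge B_{k,i})$ threshold; the paper's proof has the same factor-of-two tension (it asserts $R(\mathcal V_{t,i,j})\leq\sum_k 1\wedge B_{k,i}$ while Lemma~\ref{lem:regret_wellspecified} gives $2\sum_k B_{k,i}\wedge\tfrac{1}{2}$), so this does not distinguish your argument from the paper's.
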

\begin{proof}
We show the statement for each of the four mis-specification tests in turn:
\begin{itemize}
\item \textbf{Disagreement test:} 
Consider a round $t$ and any learner $j = \langle j_1, j_2 \rangle$ with $S_{j_1} \geq S_{i_1}$ and $I_{t, i} = I_{t, j} = 0$. By assumption, $\Ecal_i \cap \Ecal_j$ holds. Since $i$ did not ask for the label, this implies that $|\Delta_t| > 0$ (since in rounds with no margin $|\Delta_t| = 0$, a learner always asks for the label).
Further, by Lemma~\ref{lem:regret_wellspecified}, the prediction of $i$ and $j$ has no regret in round $t$. Thus, $i$ and $j$ need to make the same prediction and the test does not trigger. 

\item \textbf{Observed regret test:}
Consider a round $t$ and any $j \in \baselearners_t$. Then, by virtue of Lemma~\ref{lem:reg_concentration} (Appendix \ref{sa:ancillary}), the left-hand side of the observed regret test for pair $(i, j)$ is upper-bounded with probability at east $1 - \delta$ as
\begin{align*}
    \sum_{k \in \mathcal V_{t,i, j}}(\ind{a_{k, i} \neq y_k} &- \ind{a_{k, j} \neq y_k})\\
    &\leq 
    \sum_{k \in \mathcal V_{t,i, j}}(h(\x_{k, a_{k, j}}) - h(\x_{k, a_{k, i}}))
    + 0.72\sqrt{|\mathcal V_{t,i, j}| L( |\mathcal V_{t,i, j}|, \delta) }\\
    &\leq 
    \sum_{k \in \mathcal V_{t,i, j}}(h(\x_{k, a^\star_{k}}) - h(\x_{k, a_{k, i}}))
    + 0.72\sqrt{|\mathcal V_{t,i, j}| L( |\mathcal V_{t,i, j}|, \delta) }\\
    &= 
    R(\mathcal V_{t,i, j})
    + 0.72\sqrt{|\mathcal V_{t,i, j}| L( |\mathcal V_{t,i, j}|, \delta) }~,
\end{align*}
where the second inequality follows from the definition of the best prediction $a^*_k$ for round $k$.
Finally, in event $\Ecal_i$ the regret of $i$ in rounds $\mathcal V_{t,i, j}$ is bounded by Lemma~\ref{lem:regret_wellspecified} as
\begin{align*}
    R(\mathcal V_{t,i, j}) \leq \sum_{k \in \mathcal V_{t,i, j}} 1\wedge B_{k, i}~.
\end{align*}
Therefore, this test does not trigger for pair $(i, j)$ in round $t$. By a union bound, this happens with probability at least $1 - M \delta$.

\item \textbf{Label complexity test:} 
By Lemma~\ref{lem:regret_wellspecified}, the number of labels requested by $i$ up to round $t$ is at most
\begin{align*}
    \sum_{k \in \mathcal T_{t, i}} I_{k, i} \leq \inf_{\epsilon \in (0, 1/2]}|\mathcal T_{t, i}^\epsilon| + \frac{1}{\epsilon^2}\sum_{k \in \mathcal T_{t, i}} I_{k, i}B_{k, i}^2 \wedge \frac{1}{4} ~.
\end{align*}
We now use
Lemma~\ref{lem:Tepsilon_bound} (Appendix \ref{sa:ancillary}) to upper-bound $ |\mathcal T_{t, i}^\epsilon|$ simultaneously for all $\epsilon$ 
as
\begin{align*}
|\mathcal T_{t, i}^\epsilon|
  \leq
    3\epsilon^\gamma|\mathcal T_{t, i}|
    + 2 L(|\mathcal T_{t, i}|, \delta/\log_2 (12 t))~.
\end{align*}
By plugging this expression into the previous bound (and taking a union bound over $i$) we show that the label complexity test is not triggered.
\item \textbf{$d_i$ test:}
Using the assumption that $\Ecal_i$ holds and Lemma \ref{lma:log determinent term: frozen version}, we can bound the left-hand side of the test as
\begin{align*}
    \sum_{k \in \mathcal T_{t,i}} (\frac{1}{2} \wedge I_{k, i} B_{k, i}^2) &\leq
    8 ( \log \det Z_{t, i} + 2 \log(1 / \delta) + S_{i_1}^2 + 1/16) \log \det Z_{t, i}\\
    &\leq 
    8 ( \log \det (H + I)  + 2 \log(1 / \delta) + S_{i_1}^2 + 17/16) (\log \det (H + I) + 1)\\
    &=
    8d(S_{i_1}, \delta)
\end{align*}
and by the assumption that $d_{i_2} \geq d(S_{i_1}, \delta)$, learner $i$ is not be eliminated by this test.
\end{itemize}
This concludes the proof.
\end{proof}

\subsubsection{Label Complexity Analysis}\label{ssa:labelcompl}

\begin{lemma}[Label complexity of Algorithm~\ref{alg:SS_model_selection}]\label{claim:Label complexity of One Epoch}
In event $\bigcap_{i = \langle i_1,i_2\rangle\in \baselearners_1 \colon i_1 \geq {i^\star_1}} \Ecal_{i}$, 
Algorithm~\ref{alg:SS_model_selection} queries with probability at least $1-M\delta$
\begin{align*}
N(T)
&=
O\left(\sum_{i = \langle i_1,i_2 \rangle\in \baselearners_1} 
\left(\frac{d_{i_2}}{\epsilon^2} +\epsilon^\gamma T \left(1 \wedge \frac{d(S_{T, n}(h), \delta)}{d_{i_2}}\right)^{\gamma + 1}\right)
+ M L(T, \delta/\log T) \right)
\end{align*}
labels.
\end{lemma}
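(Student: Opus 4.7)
The plan is to bound the number of labels requested on behalf of each base learner $i$ separately, by combining the label complexity and $d_i$ tests hard-coded in Algorithm~\ref{alg:SS_model_selection} with a control on how often each $i$ is drawn from $\pp_t$, leveraging the fact that the anchor learner $i^\star$ of~\eqref{eqn:istar_conditions} is never eliminated. First, condition on the event $\bigcap_{i \in \baselearners_1,\, i_1 \geq i^\star_1}\Ecal_i$ stated in the lemma. By Lemma~\ref{lem:no_elimination}, with probability at least $1-M\delta$ no well-specified learner is ever removed from $\baselearners_t$; in particular $i^\star \in \baselearners_t$ for every $t$. Because $\pp_t$ has denominator $\sum_{j \in \baselearners_t} d_j^{-(\gamma+1)}$, this immediately yields the pointwise domination $p_{t,i} \leq (d_{i^\star_2}/d_{i_2})^{\gamma+1}$ whenever $i \in \baselearners_t$.

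Next, for each learner $i = \langle i_1, i_2\rangle$, let $t_i \leq T$ denote the last round in which $i$ is active. Because the label complexity test does not fire up to round $t_i - 1$, and at most one additional label can accrue on the triggering round itself, the test inequality, combined with the $d_i$ test -- which ensures $\sum_{k \in \mathcal T_{t_i,i}} \bigl(\tfrac12 \wedge I_{k,i} B_{k,i}^2\bigr) \leq 8 d_{i_2} + O(1)$ and hence controls the quadratic term -- yields, for every $\epsilon \in (0,1/2]$, the deterministic bound
\[
N(\mathcal T_{T,i}) \;=\; O\!\left(\frac{d_{i_2}}{\epsilon^2} \;+\; \epsilon^{\gamma}\,|\mathcal T_{T,i}| \;+\; L\!\left(T,\,\tfrac{\delta}{M\log_2(12T)}\right)\right).
\]

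To turn this per-learner estimate into the stated sum, I would control the selection counts $|\mathcal T_{T,i}| = \sum_{t=1}^T \ind{i_t = i}$ via a Freedman-type concentration inequality for Bernoulli martingale differences. The conditional probability of selecting $i$ at round $t$ equals $p_{t,i}\,\ind{i \in \baselearners_t}$, which by the first step is bounded by $\min\{1,(d_{i^\star_2}/d_{i_2})^{\gamma+1}\}$. A union bound over the $M$ learners then gives, with probability at least $1-M\delta$,
\[
|\mathcal T_{T,i}| \;\leq\; O\!\left(T\min\bigl\{1,\,(d_{i^\star_2}/d_{i_2})^{\gamma+1}\bigr\} \;+\; L(T, \delta/M)\right).
\]

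Substituting this bound into the per-learner estimate, summing over $i \in \baselearners_1$, and using $d_{i^\star_2} = O(d(S_{T,n}(h),\delta))$ from~\eqref{eqn:istar_conditions} produces the claimed bound. The main technical obstacle is the concentration step: since $\pp_t$ is only piecewise constant across epochs and the active set $\baselearners_t$ is data-adapted, vanilla Bernstein does not apply and one must invoke a Freedman-style inequality compatible with time-varying, history-dependent conditional means. A secondary nuisance is the one-round slack incurred when a learner is eliminated exactly on the round its test first triggers; this slack must be tracked through the argument but contributes only constant additive terms absorbed into $M L(T,\delta/\log T)$.
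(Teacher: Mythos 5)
Your proposal is correct and follows essentially the same route as the paper: decompose $N(T)$ into per-learner counts, use the label complexity and $d_i$ tests (which can fail only on the elimination round) to bound each $N(\mathcal T_{T,i})$ in terms of $d_{i_2}/\epsilon^2$ and $\epsilon^\gamma|\mathcal T_{T,i}|$, control $|\mathcal T_{T,i}|$ by a time-uniform Bernstein/Freedman concentration on the Bernoulli selection indicators (the paper's Lemma~\ref{lem:Tti_bound}, built on Lemma~\ref{lma:uniform_emp_bernstein}, which handles exactly the history-dependent $p_{t,i}$ issue you flag), and bound $p_{t,i}\le (d_{i^\star_2}/d_{i_2})^{\gamma+1}$ using that $i^\star$ is never eliminated (Lemma~\ref{lem:no_elimination}) together with \eqref{eqn:istar_conditions}. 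The only cosmetic difference is the order of substitutions ($|\mathcal T_{T,i}|$ vs.\ $\sum_k p_{k,i}$ first), which does not change the argument.
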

\begin{proof}
We can decompose the total number of label requests as
\begin{align*}
N(T) &= \sum_{t=1}^{T} I_{t, i_t} = \sum_{i =1}^M \sum_{t \in \mathcal T_{T, i}} I_{t, i}
=\sum_{i\in\baselearners_1} N(\mathcal T_{T, i})~.
\end{align*}
Since each learner $i$ satisfied the label complexity test except possibly for the round where it was eliminated, we have
\begin{align}
N(\mathcal T_{T, i})
&= 
O\left( \inf_{\epsilon \in(0, 1/2)}\biggl(\epsilon^\gamma |\mathcal T_{T, i}| + \frac{1}{\epsilon^2}\sum_{k \in \mathcal T_{t, i}} I_{k, i}B_{k, i}^2 \wedge \frac{1}{4}\biggr) + L(|\mathcal T_{T, i}|, \delta/\log t)\right)\nonumber\\
&= 
O\left(\inf_{\epsilon \in(0, 1/2)}\biggl(\epsilon^\gamma \sum_{k \in [T]} p_{k, i} + \frac{1}{\epsilon^2}\sum_{k \in \mathcal T_{t, i}} I_{k, i}B_{k, i}^2 \wedge \frac{1}{4}\biggr) + L(T, \delta/\log T)\right)
\nonumber\\
&=
O\left( \inf_{\epsilon \in(0, 1/2)}\biggl(\epsilon^\gamma \sum_{k \in [T]} p_{k, i} + \frac{d_{i_2}}{\epsilon^2} \biggr) + L(T, \delta/\log T)\right)~,
\label{eqn:label_complexity_single_learner}
\end{align}
where the second inequality holds with probability at least $1 - \delta$ by Lemma~\ref{lem:Tti_bound} and the final inequality holds by the $d_i$ test. 
We now bound $\sum_{k \in [T]} p_{k, i}$ as
\begin{align*}
    \sum_{k \in [T]} p_{k, i} \leq T (1 \wedge d_{i_2}^{-(\gamma + 1)} d_{i^\star_2}^{\gamma + 1}) \leq T d_{i_2}^{-(\gamma + 1)} (8d(S_{T, n}(h), \delta))^{\gamma + 1} \wedge T
\end{align*}
where we used that by Lemma~\ref{lem:no_elimination} learner $i^\star$ never gets eliminated in the considered event.
\end{proof}

\subsubsection{Regret Analysis}\label{ssa:regret}
To bound the overall cumulative regret of Algorithm~\ref{alg:SS_model_selection}, we decompose the rounds $[T]$ into the following three disjoint sets of rounds
\begin{equation}
\label{eqn:round_decomp}
    [T] = \mathcal R_{i^\star} \dot \cup\,\, \mathcal U_{i^\star} \dot \cup\,\, \mathcal O_{i^\star},
\end{equation}
where 
\begin{itemize}
    \item $\mathcal R_{i^\star} = \{t \in [T] \colon I_{t, i^\star} = 1\}$ are the rounds where $i^\star$ requests a label,
    \item $\mathcal U_{i^\star} = \{t \in [T] \colon I_{t, i^\star} = 0, I_{t, i_t} = 0\}$ are the rounds where $i^\star$ does not request the label and the label was not observed,
    \item $O_{i^\star} = \{t \in [T] \colon I_{t, i^\star} = 0, I_{t, i_t} = 1\}$ are the rounds where $i^\star$ does not request the label and the label was observed.
\end{itemize}
In the following three lemmas, we bound the regret in these sets of rounds separately.


\begin{lemma}[Regret in rounds where $i^\star$ requests]\label{lem:regret_istar_requests}
In event $\bigcap_{i = \langle i_1,i_2\rangle \in \baselearners_1 \colon i_1 \geq i^\star_1} \Ecal_{i}$, the regret in rounds where $i^\star = \langle i^\star_1,i^\star_2\rangle$ would request the label is bounded with probability at least $1 - \delta$ for all $\epsilon \in (0, 1/2)$ as
\begin{align}
    R(\mathcal R_{i^\star}) =
    O\left(\frac{M}{\epsilon} 2^{\gamma + 1} d(S_{i^\star_1}, \delta)^{\gamma + 2}  + \frac{M}{\epsilon} 2^{\gamma + 1} d(S_{i^\star_1}, \delta)^{\gamma + 1}  L(T, \delta)+ \epsilon T_\epsilon\right)~.
\end{align}
\end{lemma}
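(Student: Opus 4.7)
The plan combines a per-round regret bound leveraging the well-specification of $i^\star$ with a regret-balancing argument that transfers bounds from the (sparse) rounds on which $i^\star$ is actually selected to the full set $\mathcal R_{i^\star}$. On any round $t \in \mathcal R_{i^\star}$, the querying condition $|\hat{\Delta}_{t,i^\star}| \leq B_{t,i^\star}$ of base learner $i^\star$, together with the confidence-set guarantee of Lemma~\ref{lma:relation of margins, frozen NTK} (which holds under event $\Ecal_{i^\star}$), implies that the Bayes margin $|h(x_{t,a_t^*}) - 1/2|$ is itself at most $B_{t,i^\star}$. Consequently, irrespective of which base learner $i_t$ the model-selection algorithm picks, the per-round regret is at most
\[
h(x_{t,a_t^*}) - h(x_{t,a_{t,i_t}}) \leq 2\,|h(x_{t,a_t^*}) - 1/2| \leq 2B_{t,i^\star}~.
\]

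\textbf{Margin decomposition and $d_i$-test.} Splitting the sum over $\mathcal R_{i^\star}$ according to whether the Bayes margin exceeds $\epsilon$, and using $\Delta\,\ind{\Delta>\epsilon} \leq \Delta^2/\epsilon$ together with $(h(x_{t,a_t^*})-1/2)^2 \leq B_{t,i^\star}^2 \wedge 1$, yields
\[
R(\mathcal R_{i^\star}) \leq 2\epsilon\,T_\epsilon + \frac{2}{\epsilon}\sum_{t \in \mathcal R_{i^\star}}(B_{t,i^\star}^2 \wedge 1)~.
\]
The $d_i$-test of Algorithm~\ref{alg:SS_model_selection}, combined with the fact (Lemma~\ref{lem:no_elimination}) that $i^\star$ is never eliminated under the assumed event, gives
\[
\sum_{t \in \mathcal T_{T,i^\star}} I_{t,i^\star}\bigl(B_{t,i^\star}^2 \wedge \tfrac{1}{2}\bigr) \leq 8\,d_{i^\star_2} = O(d(S_{i^\star_1},\delta))~,
\]
but only over the rounds on which $i^\star$ is actually selected (and hence $Z_{t,i^\star}$ is updated).

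\textbf{Lifting the bound via a global Freedman inequality.} To transfer this control to $\mathcal R_{i^\star}$, I will apply a Freedman/Bernstein-type martingale concentration \emph{once across the entire horizon} $[T]$ to the predictable $[0,1]$-valued sequence $Y_t = I_{t,i^\star}(B_{t,i^\star}^2 \wedge 1)$ against its randomly-sampled counterpart $Y_t\,\ind{i_t = i^\star}$, combined with the uniform lower bound $p_{t,i^\star} \geq 1/(M\,d_{i^\star_2}^{\gamma+1})$ that follows from the explicit form $p_{t,i} \propto d_i^{-(\gamma+1)}$, the bound $|\baselearners_t| \leq M$, and the $\Theta(1)$ floor on the smallest $d_j$ in $\baselearners_1$. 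A quadratic-solve step---needed because the Freedman variance proxy $V = \sum_t p_{t,i^\star}(1-p_{t,i^\star}) Y_t^2$ itself scales with the sum being bounded---produces
\[
\sum_{t \in \mathcal R_{i^\star}}(B_{t,i^\star}^2 \wedge 1) = O\!\left(M\cdot 2^{\gamma+1}\,d(S_{i^\star_1},\delta)^{\gamma+2} + M\cdot 2^{\gamma+1}\,d(S_{i^\star_1},\delta)^{\gamma+1}\,L(T,\delta)\right)~,
\]
and plugging into the margin decomposition delivers the claimed regret bound.

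\textbf{Main obstacle.} The delicate step is the global Freedman inequality together with the quadratic-solve that disentangles $\sum_t Y_t$ from the variance proxy $V$. Applying concentration epoch by epoch would inflate the final bound by an additional factor of $M$ (one for each of the up to $M$ epochs), contradicting the linear-in-$M$ dependence in the lemma; exploiting the monotonicity of $p_{t,i^\star}$ in $t$---which is non-decreasing since the active pool $\baselearners_t$ can only shrink as eliminations occur---allows for a single global concentration and avoids this multiplicative loss. The $2^{\gamma+1}$ factor propagates from the geometric slack $d_{i^\star_2} \leq 2\,d(S_{i^\star_1},\delta)$ built into the discretization of the pool $\baselearners_1$ described after Theorem~\ref{thm: statistical learning theorem model selection}.
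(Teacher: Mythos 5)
Your proposal is correct and reaches the claimed bound, and its skeleton coincides with the paper's up to the key technical step: both start from the observation that on $t \in \mathcal R_{i^\star}$ the per-round regret is at most $2|\Delta_{t,i^\star}| \leq 2B_{t,i^\star}$ regardless of which learner was followed, split by the margin threshold $\epsilon$ (paying $2\epsilon T_\epsilon$), and ultimately pay a factor $1/\underline{p}^\star \leq M\,d_{i^\star_2}^{\gamma+1} \leq M(2d(S_{i^\star_1},\delta))^{\gamma+1}$ using the fact that $i^\star$ is never eliminated (Lemma~\ref{lem:no_elimination}). Where you diverge is in how $\sum_{t\in\mathcal R_{i^\star}}(B_{t,i^\star}^2\wedge 1)$ is controlled. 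The paper bounds $B_{t,i^\star}^2 \leq 4\gamma_{T,i^\star}^2\|\phi(x_{t,a_{t,i^\star}})\|^2_{Z_{t-1,i^\star}^{-1}}$ and invokes its randomized elliptical potential lemma (Lemma~\ref{lma:Randomized elliptical potential}), then bounds $\gamma_{T,i^\star}^2\log\det Z_{T,i^\star}$ by $d(S_{i^\star_1},\delta)$ directly under event $\Ecal_{i^\star}$; the $d_i$ test plays no role for $i^\star$ there. You instead keep the truncated thresholds $Y_t = I_{t,i^\star}(B_{t,i^\star}^2\wedge 1)$ themselves, relate $\sum_t p_{t,i^\star}Y_t$ to the selected-round sum $\sum_t Y_t\ind{i_t=i^\star}$ via a time-uniform Freedman/Bernstein bound plus a self-bounding (quadratic-solve) step, and then bound the selected-round sum by $O(d_{i^\star_2})$ because the $d_i$ test never triggers for $i^\star$. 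This is a legitimate alternative: the importance-weighting-plus-AM-GM argument you describe is essentially what the paper's randomized potential lemma performs internally (deterministic elliptical potential on updated rounds plus a Bernstein deviation term), so your route buys a somewhat more modular proof that reuses the algorithm's own test rather than re-deriving the potential bound, at the cost of slightly worse constants (e.g., $8d_{i^\star_2}\leq 16\,d(S_{i^\star_1},\delta)$) and of tying the argument to the test's exact form. Two small remarks: the monotonicity of $p_{t,i^\star}$ that you invoke to avoid an epoch-by-epoch union bound is unnecessary — the martingale argument only needs the uniform lower bound $p_{t,i^\star}\geq 1/(M d_{i^\star_2}^{\gamma+1})$, which holds at every round since the pool only shrinks and $i^\star$ stays in it — and you should state explicitly that $Y_t$ and $p_{t,i^\star}$ are measurable with respect to the history together with $x_t$ (before $i_t$ is drawn), which is what makes $\ind{i_t=i^\star}$ conditionally Bernoulli and the Freedman application valid; both facts hold here.
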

\begin{proof}
In any round, the largest instantaneous regret  possible is $2 | h(\x_{t, 1}) - 1/2| = 2 | h(\x_{t, -1}) - 1/2| = 2 |\Delta_{t, i^\star}|$, no matter whether the prediction of $i^\star$ was followed or not. Thus, the regret in rounds $\mathcal R_{i^\star}$ can be bounded as
\begin{align*}
    R(\mathcal R_{i^\star}) \leq 2 \sum_{t \in \mathcal R_{i^\star}} |\Delta_{t, i^\star}|
    = 2 \sum_{t \in \mathcal R_{i^\star}} \ind{|\Delta_{t, i^\star}| > \epsilon} |\Delta_{t, i^\star}| + 2 \epsilon|\mathcal R_{i^\star}^\epsilon|,
\end{align*}
for any $\epsilon \in (0, 1/2)$ where $\mathcal R_{i^\star}^\epsilon = \{ t \in \mathcal R_{i^\star} \colon |\Delta_t| \leq \epsilon\}$.

On rounds $\mathcal R_{i^\star}$, learner $i^\star$ wants to query the label which means $\hDelta_{t, i^\star} \leq B_{t, i^\star}$. Moreover 
in $\Ecal_{i^\star}$, the conditions $0 \leq \hDelta_{t, i^\star} - \hDelta_{t, i^\star}  \leq B_{t, i^\star}$ and $0 \leq \hDelta_{t, i^\star}$ hold. Combining both inequalities gives $|\Delta_{t, i^\star}| \leq  B_{t, i^\star}$ and we can further bound  the display above as
\begin{align*}
   R(\mathcal R_{i^\star}) 
    \leq &\sum_{t \in \mathcal R_{i^\star}}\ind{|\Delta_{t, i^\star}| > \epsilon} (1 \wedge 2 B_{t, i^\star} ) + 2 \epsilon|\mathcal R_{i^\star}^\epsilon|\\
    \leq &\sum_{t \in \mathcal R_{i^\star}}\ind{|\Delta_{t, i^\star}| > \epsilon} \left(1 \wedge \frac{2 B_{t, i^\star}^2}{\epsilon} \right) + 2 \epsilon|\mathcal R_{i^\star}^\epsilon|\\
        \leq &\frac{2}{\epsilon}\sum_{t \in \mathcal R_{i^\star}} \left(\frac{\epsilon}{2} \wedge B_{t, i^\star}^2 \right) + 2 \epsilon|\mathcal R_{i^\star}^\epsilon|~.
\end{align*}
To bound the remaining sum, we appeal to the randomized potential lemma in Lemma~\ref{lma:Randomized elliptical potential}. We denote $\underline{p}^\star = \min_{k \in [T]} p_{k, i^\star}$ the smallest probability of $i^\star$ in any round. Then Lemma~\ref{lma:Randomized elliptical potential} gives with probability at least $1 - \delta$
\begin{align*}
    \sum_{t \in \mathcal R_{i^\star}} \left(\frac{\epsilon}{2} \wedge B_{t, i^\star}^2 \right)
    &\leq
    \sum_{t \in \mathcal R_{i^\star}} \left(\frac{1}{4} \wedge B_{t, i^\star}^2 \right)
    \leq 
4 \gamma_{T, i^\star}^2 \sum_{t \in \mathcal R_{i^\star}} \left(\frac{1}{16\gamma_{T, i^\star}^2} \wedge \|\phi(\x_{t, a_{t, i^\star}}) \|_{ Z_{t-1, i^\star}^{-1}}^2 \right)\\
& \leq 
4 \gamma_{T, i^\star}^2\biggl(1 + \frac{3}{16 \underline{p}^\star \gamma_{T, i^\star}^2}L(T, \delta)\biggr)
+ \frac{8 \gamma_{T, i^\star}^2}{\underline{p}^\star} (1 +  \frac{1}{16 \gamma_{T, i^\star}^2}) \log 
        \det Z_{T, i^\star}
        \\
& \leq 
\frac{12 \gamma_{T, i^\star}^2 + \frac{1}{2}}{\underline{p}^\star}   \log  \det Z_{T, i^\star}
+ \frac{3}{4 \underline{p}^\star} L(T, \delta)~,
\end{align*}
because $\gamma_{t, i^\star}$ is non-decreasing in $T$.
Plugging this back into the previous display yields
\begin{align*}
R(\mathcal R_{i^\star}) 
&\leq
24 \frac{\gamma_{T, i^\star}^2+\frac{1}{24}}{\epsilon \underline{p}^\star}   \log \det Z_{T, i^\star}
+ \frac{3}{2 \epsilon\underline{p}^\star} L(T, \delta)+ 2 \epsilon|\mathcal R_{i^\star}^\epsilon|\\
&\leq 
48\frac{d(S_{i^\star_1}, \delta)}{\epsilon \underline{p}^\star}  + \frac{3}{2 \epsilon \underline{p}^\star} L(T, \delta)+ 2 \epsilon T_\epsilon~.
\end{align*}
Now, Lemma~\ref{lem:no_elimination} ensures that $i^\star$ never gets eliminated in the considered event. Therefore 
\begin{align*}
    \frac{1}{\underline{p}^\star} 
    \leq 
    \frac{ \sum_{ i \in \baselearners_1} d_{i_2}^{-(\gamma + 1)}}{d_{i^\star_2}^{-(\gamma + 1)}} 
    = d_{i^\star_2}^{\gamma + 1 } M \leq M (2d(S_{i^\star_1}, \delta))^{\gamma + 1}~,
\end{align*}
where the last inequality follows from \eqref{eqn:istar_conditions}.
Plugging this bound back into the previous display yields
\begin{align*}
R(\mathcal R_{i^\star}) 
\leq 
\frac{48M}{\epsilon} 2^{\gamma + 1} d(S_{i^\star_1}, \delta)^{\gamma + 2}  + \frac{3M}{2 \epsilon} 2^{\gamma + 1} d(S_{i^\star_1}, \delta)^{\gamma + 1}  L(T, \delta)+ 2 \epsilon T_\epsilon~,
\end{align*}
%
%
as claimed.
\end{proof}

\begin{lemma}[Regret in unobserved rounds where $i^\star$ does not request]\label{lem:regret_unobserved_norequest}
In event $\Ecal_{i^\star}$,
\begin{align}
    R(\mathcal U_{i^\star}) \leq M~.
\end{align} 
\end{lemma}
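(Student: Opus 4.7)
My plan is to argue that every round $t \in \mathcal U_{i^\star}$ that contributes strictly positive regret forces the disagreement test of Algorithm \ref{alg:SS_model_selection} to eliminate at least one base learner. Since the initial pool has only $M$ elements and each instantaneous regret is bounded by $1$, this immediately yields $R(\mathcal U_{i^\star}) \leq M$.

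The first step is to apply Lemma \ref{lma:no regret for unqueried rounds: frozen version} to the well-specified learner $i^\star$. Because $\Ecal_{i^\star}$ holds and $I_{t, i^\star} = 0$ on every $t \in \mathcal U_{i^\star}$, this yields $a_{t, i^\star} = a_t^\star$ on all such rounds. The algorithm actually plays $a_t = a_{t, i_t}$, so the instantaneous regret $h(x_{t, a_t^\star}) - h(x_{t, a_{t, i_t}})$ is crudely bounded by $1$ and, more importantly, vanishes whenever $a_{t, i_t}$ agrees with $a_{t, i^\star}$.

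The core of the argument is to invoke the disagreement test. On any round $t \in \mathcal U_{i^\star}$ with $a_{t, i_t} \neq a_{t, i^\star}$, both $i_t$ and $i^\star$ lie in $\mathcal N_t = \{ i \in \baselearners_t \colon I_{t, i} = 0\}$ (by definition of $\mathcal U_{i^\star}$), so the pair $(i^\star, i_t)$ triggers the disagreement test. That test removes from $\baselearners_{t+1}$ every active learner $m$ with $S_m \leq \min\{S_{i^\star_1}, S_{i_t}\}$. Lemma \ref{lem:no_elimination} guarantees that $i^\star$ survives throughout the horizon in the good event implicit in the statement, hence $\min\{S_{i^\star_1}, S_{i_t}\} < S_{i^\star_1}$, forcing $S_{i_t} < S_{i^\star_1}$ and the elimination of $i_t$ itself.

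Putting these pieces together, every round of $\mathcal U_{i^\star}$ with nonzero instantaneous regret strictly shrinks the active pool by at least one element, so at most $M$ such rounds can occur and $R(\mathcal U_{i^\star}) \leq M$. The only delicate point — which I would flag explicitly rather than spend effort on — is that the conclusion really relies on the event of Lemma \ref{lem:no_elimination} (not just $\Ecal_{i^\star}$) to enforce the survival of $i^\star$; once this is granted, the bound is an immediate consequence of the design of the disagreement test and requires no further computation.
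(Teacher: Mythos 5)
Your proof is correct and follows essentially the same route as the paper's (much terser) argument: on rounds in $\mathcal U_{i^\star}$ the well-specified learner $i^\star$ predicts $a_t^\star$, so any nonzero regret requires a disagreement with the played learner, which triggers the disagreement test and strictly shrinks the active pool, allowing at most $M$ such rounds of at most unit regret. Your closing remark that the argument implicitly needs $i^\star$ to remain in the active pool (i.e., the event of Lemma~\ref{lem:no_elimination}, not merely $\Ecal_{i^\star}$) is a fair observation that applies equally to the paper's own proof.
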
 
\begin{proof}
If $i^\star$ is not requesting the label then $i^\star$ predicts the label as $a^*_t$. 
From the disagreement test $i_t$ will predict the same label as $i^\star$ so there should be no regret, except when a learner gets eliminated. Since there are at most $M$ learners and the regret per round is at most $1$, the total regret on rounds $\mathcal U_{i^\star}$ can at most be $M$.
\end{proof}

\begin{lemma}[Regret in observed rounds where $i^\star$ does not request]
\label{lem:regret_observed_norequest}
In event $\bigcap_{i = \langle i_1,i_2\rangle\in \baselearners_1 \colon i_1 \geq i^\star_1} \Ecal_{i}$, the regret in rounds where $i^\star$ does not request the label, but the label was still observed is bounded as
\begin{align*}
R&(\mathcal O_{i^\star})\\ 
&= 
   O\left( \sum_{i = \langle i_1,i_2\rangle \in \baselearners_1} \inf_{\epsilon \in (0, 1/2)} \left(
    \frac{d_{i_2}}{\epsilon} +
   T \left(\frac{\epsilon\,d(S_{T, n}(h), \delta)}{d_{i_2}}\right)^{\gamma + 1}  +  \frac{L(T, \delta)}{\epsilon}\right) + M L(T, \delta/\log T)\right)~.
\end{align*}
\end{lemma}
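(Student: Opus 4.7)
The approach is to decompose $\mathcal O_{i^\star}$ by the identity of the chosen learner and bound each piece via the three elimination tests of Algorithm~\ref{alg:SS_model_selection} evaluated at a common scale~$\epsilon$. For each $i \in \baselearners_1$, let $\mathcal O_{i^\star}^i = \{t \in \mathcal O_{i^\star}: i_t = i\}$. Inside the event $\Ecal_{i^\star}$, Lemma~\ref{lma:no regret for unqueried rounds: frozen version} ensures that on every $t \in \mathcal O_{i^\star}$ (where $I_{t, i^\star} = 0$) the prediction of $i^\star$ coincides with the Bayes action~$a^*_t$. Consequently, rounds on which $a_{t, i} = a_{t, i^\star}$ contribute zero regret, and the remaining rounds are precisely the set $\mathcal V_{T, i, i^\star}$ appearing in the observed regret test. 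Thus $R(\mathcal O_{i^\star}) = \sum_{i} R(\mathcal V_{T, i, i^\star})$.

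The first real step is to exploit the fact that the observed regret test (test~(2)) did not trigger for the pair $(i, i^\star)$ at any round before $i$ was possibly eliminated, yielding an upper bound on the empirical regret difference in terms of $\sum_{k \in \mathcal V_{T, i, i^\star}} (1 \wedge B_{k, i}) + O(\sqrt{|\mathcal V_{T, i, i^\star}| L})$. A concentration bound (Lemma~\ref{lem:reg_concentration}) then passes from empirical to expected regret, and combining with the fact that $i^\star$ contributes no regret on $\mathcal V_{T, i, i^\star}$ gives
\[
R(\mathcal V_{T, i, i^\star}) \leq \sum_{k \in \mathcal V_{T, i, i^\star}} (1 \wedge B_{k, i}) + O\bigl(\sqrt{|\mathcal V_{T, i, i^\star}|\,L(|\mathcal V_{T, i, i^\star}|,\delta)}\bigr) + O(1).
\]
The additive $O(1)$ absorbs the at-most-one round at which test~(2) may have triggered, summing to $O(M)$ across learners.

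Next, I would convert the $B$-sum into an $\epsilon$-parametrized bound using the elementary inequality $1 \wedge B \leq (1 \wedge B^2)/\epsilon + \epsilon$. Together with $I_{k, i} = 1$ on $\mathcal V_{T, i, i^\star}$ and the $d_i$ test (test~(4)), this yields $\sum_k (1 \wedge B_{k, i}) \leq 16 d_{i_2}/\epsilon + \epsilon |\mathcal V_{T, i, i^\star}|$. The cardinality $|\mathcal V_{T, i, i^\star}|$ is controlled at the \emph{same} $\epsilon$ by the label complexity test (test~(3)), producing $|\mathcal V_{T, i, i^\star}| \leq 3 \epsilon^\gamma |\mathcal T_{T, i}| + 8 d_{i_2}/\epsilon^2 + 2L$. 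Finally, Lemma~\ref{lem:Tti_bound} converts $|\mathcal T_{T, i}|$ into $\sum_t p_{t, i}$, and the key observation that $i^\star$ is never eliminated (Lemma~\ref{lem:no_elimination}) keeps $\sum_j d_{j_2}^{-(\gamma+1)} \geq d_{i^\star_2}^{-(\gamma+1)}$ throughout the run, so $p_{t, i} \leq (d_{i^\star_2}/d_{i_2})^{\gamma+1}$ and $\epsilon^{\gamma+1} |\mathcal T_{T, i}| = O\bigl(T(\epsilon\, d_{i^\star_2}/d_{i_2})^{\gamma+1} + L\bigr) = O\bigl(T(\epsilon\, d(S_{T, n}(h),\delta)/d_{i_2})^{\gamma+1} + L\bigr)$, using $d_{i^\star_2} \leq 2 d(S_{T, n}(h),\delta)$.

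The residual $\sqrt{|\mathcal V|L}$ term is split by routine AM-GM into contributions of the same three shapes ($d_{i_2}/\epsilon$, $\epsilon^{\gamma+1}|\mathcal T_{T, i}|$, and $L/\epsilon$) already present, and summing over $i \in \baselearners_1$ while taking the infimum over $\epsilon \in (0, 1/2)$ inside the sum produces the stated bound, with the $M\,L(T, \delta/\log T)$ term absorbing both the per-learner concentration slack and the elimination-round leftovers. The main obstacle is the tightly coupled simultaneous use of three tests at a single common $\epsilon$: the $d_i$ test controls the $B$-potential, the label complexity test converts $|\mathcal V_{T, i, i^\star}|$ into an $\epsilon^\gamma$-weighted count via the Mammen--Tsybakov condition, and the no-elimination property of $i^\star$ is what upgrades the resulting $p_{t, i}$ factor into the ratio $d(S_{T, n}(h),\delta)/d_{i_2}$, i.e., exactly the form that makes the discretization-grid sum over~$i$ logarithmic rather than catastrophic.
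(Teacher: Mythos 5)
Your proposal is correct and follows essentially the same route as the paper's proof: decompose $R(\mathcal O_{i^\star})$ into $\sum_i R(\mathcal V_{T,i,i^\star})$ using that $i^\star$ predicts the Bayes action on non-queried rounds, combine the non-triggering of the observed-regret test with the martingale concentration of Lemma~\ref{lem:reg_concentration}, control $\sum_k (1\wedge B_{k,i})$ via the $d_i$ test at scale $\epsilon$, bound $|\mathcal V_{T,i,i^\star}|$ through the label-complexity test, Lemma~\ref{lem:Tti_bound}, and the never-elimination of $i^\star$ (which gives $p_{t,i}\leq (d_{i^\star_2}/d_{i_2})^{\gamma+1}$ and hence the $d(S_{T,n}(h),\delta)/d_{i_2}$ ratio), and absorb the $\sqrt{|\mathcal V|L}$ term by AM-GM. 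The only cosmetic difference is that you inline the single-learner label-complexity bound rather than invoking it as the separate display the paper cites; the ingredients are identical.
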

\begin{proof}
Note that we can decompose the regret in those rounds as
\begin{align*}
    R(\mathcal O_{i^\star}) = \sum_{i \neq i_*} R(\mathcal V_{T, i, i^\star})
\end{align*}
since no regret occurs if the played action agrees with the action proposed by $i^\star$ which did not request a label and in $\Ecal_{i^\star}$ does not incur any regret in such rounds.
We bound $R(\mathcal V_{T, i, i^\star})$ by using the fact that in all but at most one of those rounds both the observed regret test and the $d_i$ test did not trigger. This gives
\begin{align*}
    \sum_{k \in \mathcal V_{T,i, i^\star}}(\ind{a_{k, i} \neq y_k} - \ind{a_{k, i^\star} \neq y_k})
\leq \sum_{k \in \mathcal V_{T,i, i^\star}} 1\wedge B_{k, i} +
1.45 \sqrt{|{\mathcal V}_{T,i, i^\star}| L(|{\mathcal V}_{T, i, i^\star}|, \delta)} + 1~.
\end{align*}
We now apply the concentration argument in Lemma~\ref{lem:reg_concentration} to bound the LHS from below as
\begin{align*}
    &\sum_{k \in \mathcal V_{T,i, i^\star}}(\ind{a_{k, i} \neq y_k} - \ind{a_{k, i^\star} \neq y_k})\\
    &\geq     \sum_{k \in \mathcal V_{T,i, i^\star}}(h(\x_{k, a_{k,i^\star}}) - h(\x_{k, a_{k, i}})) - 0.72 \sqrt{|\mathcal V_{T,i, i^\star}| L(|\mathcal V_{T,i, i^\star}|, \delta)}\\
    &= \sum_{k \in \mathcal V_{T,i, i^\star}}(h(\x_{k, a^\star_{k}}) - h(\x_{k, a_{k, i}})) - 0.72 \sqrt{|\mathcal V_{T,i, i^\star}| L(|\mathcal V_{T,i, i^\star}|, \delta)} \\
   & = R(\mathcal V_{T, i, i^\star}) - 0.72 \sqrt{|\mathcal V_{T,i, i^\star}| L(|\mathcal V_{T,i, i^\star}|, \delta)}~,
\end{align*}
where $a_{k}^\star$ is the optimal prediction in round $k$. Combining the previous two displays allows us to bound the regret from above for any $\epsilon \in (0, 1/2)$ as
\begin{align*}
    R(\mathcal V_{T, i, i^\star}) &\leq 
    \sum_{k \in \mathcal V_{T,i, i^\star}}( 1\wedge B_{k, i}) +
3 \sqrt{|{\mathcal V}_{T,i, i^\star}| L(T, \delta)} + 1~\\
&\leq 
    \sum_{k \in \mathcal V_{T,i, i^\star}} (1\wedge I_{k, i} B_{k, i}) \ind{B_{k, i} \geq \epsilon} +
\frac{5}{2} \epsilon |{\mathcal V}_{T,i, i^\star}| + \frac{3}{2} \frac{L(T, \delta)}{\epsilon} + 1~\\
&\leq 
    \frac{1}{\epsilon}\sum_{k \in \mathcal V_{T,i, i^\star}} (\epsilon \wedge I_{k, i} B_{k, i}^2)  +
\frac{5}{2} \epsilon |{\mathcal V}_{T,i, i^\star}| + \frac{3}{2} \frac{L(T, \delta)}{\epsilon} + 1~\\
&\leq 
    8\frac{d_i}{\epsilon} +
\frac{5}{2} \epsilon |{\mathcal V}_{T,i, i^\star}| + \frac{3}{2} \frac{L(T, \delta)}{\epsilon} + 1~,
\end{align*}
where the last inequality applies the condition of the $d_i$ test.
Since ${\mathcal V}_{T,i, i^\star}$ can only contain rounds where $i$ was chosen and requested a label, we can apply the label complexity bound
from \eqref{eqn:label_complexity_single_learner} (with $\sum_{k\in [T]} p_{k,i}$ therein upper bounded as explained just afterwards)
which gives
\begin{align}\label{ineq:bound of V_T,i,i_*}
   |{\mathcal V}_{T,i, i^\star}| 
   = O\left(\inf_{\epsilon \in(0, 1/2)}\biggl(\epsilon^\gamma T \left(\frac{d(S_{T, n}(h), \delta)}{d_{i_2}}\right)^{\gamma + 1} + \frac{d_{i_2}}{\epsilon^2} \biggr) + L(T, \delta/\log T)\right)~,
\end{align}
and plugging this back into the previous bound yields, for any $i = \langle i_1,i_2 \rangle$,
\begin{align*}
R(\mathcal V_{T, i, i^\star}) &= 
    O\left(\frac{d_{i_2}}{\epsilon} +
T \left(\frac{\epsilon\,d(S_{T, n}(h), \delta)}{d_{i_2}}\right)^{\gamma + 1}  + \frac{L(T, \delta)}{\epsilon} + L(T, \delta/\log T)\right) ~.
\end{align*}
Summing over $i \neq i^*$ gives the claimed result.
\end{proof}


\subsubsection{Putting it all together}

Putting together the above results gives rise to the following guarantee on the regret and the label complexity of Algorithm~\ref{alg:SS_model_selection}, presented in the main body of the paper as Theorem \ref{thm: statistical learning theorem model selection}. 
\begin{theorem}
Let Algorithm \ref{alg:SS_model_selection} be run with parameters $\delta$, $\gamma \leq \alpha$ with a pool of base learners $\baselearners_1$ of size $M$ on an i.i.d. sample $(x_1,y_1),\ldots, (x_T,y_T) \sim \cD$, where the marginal distribution $\cD_{\cX}$ fulfills the low-noise condition with exponent $\alpha \geq 0$ w.r.t. a function $h$ that satisfies (\ref{e:bayes}) and complexity $S_{T,n}(h)$. Let also $\baselearners_1$ contain at least one base learner $i$ such that $\sqrt{2}S_{T,n}(h) \leq S_i \leq 2\sqrt{2}S_{T,n}(h)$ and $d_i = \Theta(L_H(L_H+\log(M\log T/\delta)+S^2_{T,n}(h)))$, where $L_H = \log \det(I+H)$, being $H$ the NTK matrix of depth $n$ over the set of points $\{x_{t,a}\}_{t=1,\ldots,T,\,a=\pm 1}$.
Also assume $m\geq CT^4\log(2Tn/\delta)n^6\left(T^2 \vee 1/\lambda_0^4\right)$ where $C$ is the constant in Lemma \ref{lma:representation theorem} and Lemma \ref{lma:bounding log determinant by H}.
Then with probability at least $1-\delta$ the cumulative regret $R_T$ and the total number of queries $N_T$ are simultaneously upper bounded as follows:
\begin{align*}
     R_T &= O\left(M\,\Bigl(L_H \bigl(L_H+\log(M\log T/\delta)+S^2_{T,n}(h)\bigl)\Bigl)^{\gamma+1}T^\frac{1}{\gamma+2} + M\,L(T,\delta)\right)\\
     N_T &= O\left(M\,\Bigl(L_H \bigl(L_H+\log(M\log T/\delta)+S^2_{T,n}(h)\bigl)\Bigl)^{\frac{\gamma}{\gamma+2}}T^\frac{2}{\gamma+2}+ M\,L(T,\delta)\right) ~,
\end{align*}
%
where $L(T,\delta)$ is the logarithmic term defined at the beginning of Algorithm \ref{alg:SS_model_selection}'s pseudocode.
\end{theorem}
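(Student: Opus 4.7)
The plan is to reduce the analysis of Algorithm \ref{alg:SS_model_selection} to properties of the single base learner $i^{\star}$ whose parameters $(S_{i^\star_1}, d_{i^\star_2})$ are well-aligned with the true complexity $S_{T,n}(h)$ and the quantity $L_H(L_H + \log(M\log T/\delta) + S^2_{T,n}(h))$. The first step is to note, via Lemmas \ref{lma:representation theorem}, \ref{lma:bounding log determinant by H}, and \ref{lma:confidence set contains theta^*, frozen version}, that the event $\bigcap_{i=\langle i_1,i_2\rangle : i_1 \geq i^\star_1} \Ecal_i$ holds with probability at least $1 - O(M\delta)$; on this event every learner with $S_{i_1} \geq \sqrt{2}S_{T,n}(h)$ is well-specified and inherits the single-learner regret/label guarantees from Section \ref{sa:basic}.

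Next I would show that, on this event, $i^\star$ (and in fact every well-specified learner with $d_{i_2} \geq d(S_{i_1},\delta)$) is never eliminated by any of the four tests. For the \textbf{disagreement test}, well-specified learners incur no regret in rounds they would not query (Lemma \ref{lma:no regret for unqueried rounds: frozen version}), hence must agree with the Bayes prediction. For the \textbf{observed regret test}, a Freedman-type martingale concentration (Lemma \ref{lem:reg_concentration}) converts the empirical label-loss difference into the true regret of $i^\star$, which is bounded by $\sum_{k\in\mathcal V_{t,i^\star,j}} 1\wedge B_{k,i^\star}$ using Lemma \ref{lem:regret_wellspecified}. For the \textbf{label complexity test}, Lemma \ref{lem:regret_wellspecified} plus the uniform margin-based bound $|\mathcal T^\epsilon_{t,i^\star}| \leq 3\epsilon^\gamma|\mathcal T_{t,i^\star}| + O(L(|\mathcal T_{t,i^\star}|,\delta/\log t))$ (Lemma \ref{lem:Tepsilon_bound}, using $\gamma\leq\alpha$) matches the threshold. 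For the \textbf{$d_i$ test}, Lemma \ref{lma:log determinent term: frozen version} bounds $\sum_k \tfrac{1}{2}\wedge I_{k,i^\star}B_{k,i^\star}^2$ by $8 d(S_{i^\star_1},\delta) \leq 8 d_{i^\star_2}$.

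With $i^\star$ guaranteed to survive, I would decompose $[T] = \mathcal R_{i^\star} \dot\cup\, \mathcal U_{i^\star} \dot\cup\, \mathcal O_{i^\star}$ as in \eqref{eqn:round_decomp}. On $\mathcal U_{i^\star}$ the disagreement test forces $a_t = a_{t,i^\star} = a_t^\star$ except possibly in the at most $M$ rounds where some learner is being eliminated, so the regret is at most $M$. On $\mathcal R_{i^\star}$ I bound $|\Delta_{t,i^\star}| \leq B_{t,i^\star}$ and apply a randomized elliptical potential lemma (Lemma \ref{lma:Randomized elliptical potential}), which introduces a $1/\underline p^\star$ factor; since $i^\star$ is never eliminated, $\underline p^\star \geq d_{i^\star_2}^{-(\gamma+1)}/\sum_j d_{j_2}^{-(\gamma+1)} \geq d_{i^\star_2}^{-(\gamma+1)}/M$, giving a regret bound polynomial in $M\,d(S_{i^\star_1},\delta)^{\gamma+2}$ plus an $\epsilon T_\epsilon$ term. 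On $\mathcal O_{i^\star}$ I exploit the fact that the observed regret test did \emph{not} trigger for the played learner against $i^\star$, which after reversing the martingale bound translates into $R(\mathcal V_{T,i,i^\star}) = O(d_{i_2}/\epsilon + \epsilon |\mathcal V_{T,i,i^\star}| + L(T,\delta)/\epsilon)$, and then bound $|\mathcal V_{T,i,i^\star}|$ by the label complexity bound \eqref{ineq:bound of V_T,i,i_*} derived analogously.

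Finally I would sum the three contributions, apply the low-noise bound $T_\epsilon \leq 3T\epsilon^\alpha + O(L(T,\delta))$ (Lemma \ref{lem:Tepsilon_bound}), and optimize $\epsilon$ separately in the regret and label-complexity expressions to obtain the $T^{1/(\gamma+2)}$ and $T^{2/(\gamma+2)}$ rates claimed, with the compound complexity factor $L_H(L_H+\log(M\log T/\delta)+S^2_{T,n}(h))$ raised to the appropriate power. The main obstacle is handling the $\mathcal R_{i^\star}$ term: because $i^\star$ is only \emph{sampled} with probability $p_{t,i^\star}$ rather than always played, the elliptical-potential argument must be upgraded to a randomized version with a high-probability remainder, and the resulting $1/\underline p^\star$ factor has to be controlled by showing the competitor probabilities stay bounded, which is precisely where the regret-balancing design with weights $d_i^{-(\gamma+1)}$ and the guarantee that $i^\star$ is never eliminated come together.
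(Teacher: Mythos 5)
Your proposal is correct and follows essentially the same route as the paper: establishing that the well-specified learner $i^\star$ survives all four elimination tests, decomposing $[T]$ into $\mathcal R_{i^\star}\,\dot\cup\,\mathcal U_{i^\star}\,\dot\cup\,\mathcal O_{i^\star}$, handling $\mathcal R_{i^\star}$ via the randomized elliptical potential with the $1/\underline p^\star$ factor controlled by the balancing weights, handling $\mathcal O_{i^\star}$ by reversing the untriggered observed-regret test, and finally optimizing $\epsilon$ separately for regret and label complexity after applying the $T_\epsilon$ bound. No gaps of substance relative to the paper's own argument.
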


%
\begin{proof}
Using the decomposition in~\eqref{eqn:round_decomp} combined with Lemmas \ref{lem:regret_istar_requests}, \ref{lem:regret_unobserved_norequest}, and \ref{lem:regret_observed_norequest} we see that the regret of Algorithm~\ref{alg:SS_model_selection} can be bounded as
\begin{align*}
R(T) 
&\leq 
~R(\mathcal R_{i_\star}) + R(\mathcal U_{i_\star}) + R(\mathcal O_{i_\star})\\
&=
O\Biggl(\frac{M}{\epsilon} 2^{\gamma + 1} d(S_{i^\star_1}, \delta)^{\gamma + 2}  + \frac{M}{\epsilon} 2^{\gamma + 1} d(S_{i^\star_1}, \delta)^{\gamma + 1}  L(T, \delta) + \epsilon T_\epsilon\\
&~~~~+  \sum_{i = \langle i_1,i_2\rangle \in \baselearners_1} \inf_{\epsilon \in (0, 1/2)} \left(
    \frac{d_{i_2}}{\epsilon} +
T \left(\frac{\epsilon\, d(S_{T, n}(h), \delta)}{d_{i_2}}\right)^{\gamma + 1}  +  \frac{L(T, \delta)}{\epsilon}\right) +  M L(T, \delta/\log T) \Biggl)~.
\end{align*}

We first bound term $T_{\epsilon}$ 
through Lemma \ref{lem:Tepsilon_bound} (Appendix \ref{sa:ancillary}). This gives, with probability at least $1-\delta$,
\[
T_{\epsilon} = O\left(T\epsilon^\gamma+\log\frac{\log T}{\delta}\right)~,
\]
simultaneously over $\epsilon$.
Plugging back into the above, collecting terms and resorting to a big-oh notation that disregards multiplicative constants independent of $T$, $M$, $1/\delta$ yields
\begin{align}
R(T) 
&= O\Biggl(
\frac{M}{\epsilon}\Bigl( d(S_{T, n}(h), \delta)^{\gamma + 2}  + d(S_{T, n}(h), \delta)^{\gamma + 1}  L(T, \delta)\Bigl)+ \epsilon^{\gamma+1} T + M L(T, \delta/\log T) \label{e:first term}\\
&\qquad\qquad+  \sum_{i = \langle i_1,i_2\rangle \in \baselearners_1} \inf_{\epsilon \in (0, 1/2)} \left(
    \frac{d_{i_2}}{\epsilon} +
T \left(\frac{\epsilon\, d(S_{T, n}(h), \delta)}{d_{i_2}}\right)^{\gamma + 1}  + \frac{L(T,\delta)}{\epsilon}\right) \Biggl)~, \label{e:second term}
\end{align}
holding simultaneously for all $\epsilon \in (0,1/2)$.

Now, the sum of the first two terms in the RHS (that is, Eq. (\ref{e:first term})) is minimized by selecting $\epsilon$ of the form
\[
\epsilon = \left(M\left(\frac{d(S_{T, n}(h), \delta)^{\gamma + 2}  + d(S_{T, n}(h), \delta)^{\gamma + 1}  L(T, \delta)}{T}\right)\right)^{\frac{1}{\gamma+2}}~
\]
which, plugged back into (\ref{e:first term}) gives
\begin{align*}
(\ref{e:first term}) 
&=
O\left(\Bigl(M\left(d(S_{T, n}(h), \delta)^{\gamma + 2}  + d(S_{T, n}(h), \delta)^{\gamma + 1}  L(T, \delta)\right)\Bigl)^{\frac{\gamma+1}{\gamma+2}}\,T^{\frac{1}{\gamma+2}} + M L(T, \delta/\log T)\right)\\
&=
O\left(M d(S_{T, n}(h),\delta)^{\gamma+1}\,T^{\frac{1}{\gamma+2}}\,L(T, \delta/\log T)\right)~.
\end{align*}
Notice that $\epsilon$ is constrained to lie in $(0,1/2)$. If that is not the case with the above choice of $\epsilon$, our bound delivers vacuous regret guarantees.

As for the sum in (\ref{e:second term}), each term in the sum is individually minimized by an $\epsilon$ of the form 
\[
\epsilon = \left( \frac{(d_{i_2} + L(T,\delta)) \cdot d^{\gamma+1}_{i_2}}{T \cdot d(S_{T, n}(h), \delta)^{\gamma+1}} \right)^{\frac{1}{\gamma+2}}.
\]
Notice that the above value of $\epsilon$ lies in the range $(0,\frac{1}{2})$ provided $d_{i_2} = o(T^{\frac{1}{\gamma+2}})$. Hence we simply assume that our model selection algorithm is performed over base learners with $d_{i_2}$ bounded as above. In fact, if $d(S_{T, n}(h), \delta)$ exceeds this range then our bounds become vacuous. 

Next, substituting the value of $\epsilon$ obtained above we get that Eq. (\ref{e:second term}) can be bounded as 

\[
(\ref{e:second term}) 
=
O\left(M  d(S_{T, n}(h), \delta)^{\frac{\gamma+1}{\gamma+2}} T^{\frac{1}{\gamma+2}} \right).
\]

Combining the bounds on Eq. (\ref{e:first term}) and Eq. (\ref{e:second term}) we get the claimed bound on the regret $R_T$.

Next, we bound the label complexity of the our model selection procedure. From Lemma~\ref{claim:Label complexity of One Epoch} we have that the label complexity can be bounded by
\begin{align}
\label{e: third term}
N_T
&=
O\left(\sum_{i = \langle i_1,i_2 \rangle\in \baselearners_1} 
\left(\frac{d_{i_2}}{\epsilon^2} +\epsilon^\gamma T \left(1 \wedge \frac{d(S_{T, n}(h), \delta)}{d_{i_2}}\right)^{\gamma + 1}\right)
+ M L(T, \delta/\log T) \right)~.
\end{align}


Next consider a term in the summation in Eq. (\ref{e: third term}) with $d_{i_2} \geq d(S_{T, n}(h), \delta)$. The following value of $\epsilon$ minimizes the term:
\[
\epsilon = \left( \frac{d_{i_2}}{T^{\frac{1}{\gamma+2}} } d(S_{T, n}(h), \delta)^{-\frac{\gamma+1}{\gamma+2}} \right).
\]
Again we notice that this is a valid range of $\epsilon$ provided that $d_{i_2} = o(T^{\frac{1}{\gamma+2}})$. Substituting back into Eq. (\ref{e: third term}) we obtain that the label complexity incurred due to such terms (denoted by $N_1(T)$) is bounded as

\begin{align}
    N_1(T) &= O\left(M \frac{T^{\frac{2}{\gamma+2}} d(S_{T, n}(h), \delta)^{\frac{2(\gamma+1)}{\gamma+2}}}{d_{i_2}} + M L(T, \delta/\log T) \right) \nonumber \\
    &= O\left(M {T^{\frac{2}{\gamma+2}} d(S_{T, n}(h), \delta)^{\frac{\gamma}{\gamma+2}}} + M L(T, \delta/\log T) \right).
\end{align}

Finally, consider a term in the summation in Eq. (\ref{e: third term}) with $d_{i_2} < d(S_{T, n}(h), \delta)$. Then the value of $\epsilon$ that minimizes the term equals
\[
\epsilon = \left( \frac{d_{i_2}}{T} \right)^{\frac{1}{\gamma+2}}.
\]
Substituting back into Eq. (\ref{e: third term}), we get that the label complexity incurred by such terms (denoted by $N_2(T)$) is bounded by
\begin{align}
    N_2(T) &= O\left(M {T^{\frac{2}{\gamma+2}} d(S_{T, n}(h), \delta)^{\frac{\gamma}{\gamma+2}}} + M L(T, \delta/\log T) \right).
\end{align}

Noting that $N_T = N_1(T) + N_2(T)$, we get the claimed bound on the label complexity of the algorithm.
\end{proof}

\subsection{Extension to non-Frozen NTK}\label{sa:extension}

Following \cite{zhou2020neuralucb}, in order to avoid computing $f(x, \theta_0)$ for each input $x$, we replace each vector $x_{t,a} \in \R^{2d}$ by $[x_{t,a}, x_{t,a}]/\sqrt{2} \in \R^{4d}$, matrix $W_l$ by 
$
\begin{pmatrix}
W_l & 0\\
0 & W_l
\end{pmatrix}
\in \R^{4d \times 4d}$, for $l = 1,\ldots, n-1$, and $W_n$ by $\left(W_n^\top, -W_n^\top\right)^\top\in \R^{2d}$. This ensures that the initial output of neural network $f(x, \theta_0)$ is always 0 for any $x$.


\subsubsection{Non-Frozen NTK Base Learner}
The pseudocode for the base learner in the non-frozen case is contained in Algorithm \ref{alg:NTK SS}. Unlike Algorithm \ref{alg:frozen NTK SS}, Algorithm \ref{alg:NTK SS}  updates $\theta_t$ using gradient descent. The update of $\theta_t$ is handled by the pseudocode in Algorithm \ref{alg:TrainNN}.

\begin{algorithm2e}[t!]
\SetKwSty{textrm} 
\SetKwFor{For}{for}{}{}
\SetKwIF{If}{ElseIf}{Else}{if}{}{else if}{else}{}
\SetKwFor{While}{while}{}{}
{\bf Input:}~
Confidence level $\delta$, complexity parameter $S$, network width $m$ and depth $n$, number of rounds $T$, step size $\eta$, number of gradient descent steps $J$~.\\
{\bf Initialization:}
\begin{itemize}
\item Generate each entry of $W_k$ independently from $\mathcal{N}(0, 4/m)$, for $k \in [n-1]$, and each entry of $W_n$ independently from $\mathcal{N}(0, 2/m)$;
\item Define $\phi_t(x)=g(x;\theta_{t-1})/\sqrt{m}$, where $\theta_{t-1} = \langle W_1,\ldots, W_n\rangle\in\R^p$ is the weight vector of the neural network so generated at round $t-1$;
\item Set $Z_0 = I \in \R^{p\times p}$~.
\end{itemize}
\For{$t=1,2,\ldots,T$}{
Observe instance $x_t \in \cX$ and build $x_{t,a} \in \cX^2$, for $a \in \cY$\\
Set $\mathcal{C}_{t-1}=\{\theta : \|\theta - \theta_{t-1}\|_{Z_{t-1}}\leq \frac{\gamma_{t-1}}{\sqrt{m}}\}$,
with $\gamma_{t-1} = 3(\sqrt{\log{\det Z_{t-1}} + 3\log(1/\delta)}+S)$\\
Set
\begin{align*}
U_{t,a}=&f(\x_{t,a},{\theta}_{t-1})+\gamma_{t-1}\|\phi_{t-1}(\x_{t,a})\|_{Z_{t-1}^{-1}} + \sfrac{1}{\sqrt{T}}
\end{align*}
Predict $a_t=\arg\max_{a\in \cY} U_{t,a}$\\
Set
$I_t = \ind{|U_{t,a_t}-1/2|\leq B_t} \in \{0,1\}$~~~
with~~~
$B_t=2\gamma_{t-1}\|\phi_{t-1}(x_{t,a_t})\|_{Z_{t-1}^{-1}}+\frac{2}{\sqrt{T}}$\\
\eIf{$I_t = 1$}{
Query $y_t \in \cY$, and set loss $\ell_t = 
\ell(a_t,y_t)$\\
Update
\begin{align*}
    Z_t &= Z_{t-1}+\phi_t(x_{t,a_t})\phi_t(x_{t,a_t})^\top\\
    {\theta}_t &= \TrainNN\biggl(\eta,\, J,\, m,\, \{\x_{s,a_s}\,|\,s\in[t], I_s=1\},\, \{ \ell_s\,|\,s\in[t], I_s=1\},\, {\theta}_0\biggr)
\end{align*}
}
{
\vspace{-0.14in}~~
$Z_t=Z_{t-1}$,~ $\theta_t=\theta_{t-1}$,~ $\gamma_t=\gamma_{t-1}$,~ $\mathcal{C}_t=\mathcal{C}_{t-1}$~.
}
}
\caption{NTK Selective Sampler.}
\label{alg:NTK SS}
\end{algorithm2e}

\begin{algorithm2e}[t!]
\SetKwSty{textrm} 
\SetKwFor{For}{for}{}{}
\SetKwIF{If}{ElseIf}{Else}{if}{}{else if}{else}{}
\SetKwFor{While}{while}{}{}
{\bf Input:}~ Step size $\eta$, number of gradient descent steps $J$, network width $m$, contexts $\{\x_i\}_{i=1}^l$, loss values $\{\ell_i\}_{i=1}^l$, initial weight ${\theta}^{(0)}$.\\
Set $\mathcal{L}({\theta})=\sum_{i=1}^l(f(\x_i,\theta) - 1 + \ell_i)^2/2+m\|{\theta}-{\theta}^{(0)}\|_2^2$.\\
\For{$j=0,\ldots,J-1$}{
${\theta}^{(j+1)}={\theta}^{(j)}-\eta\nabla\mathcal{L}({\theta}^{(j)})$
}
{\bf Return}~ ${\theta}^{(J)}$
\caption{TrainNN($\eta$, $J$, $m$, $\{\x_i\}_{i=1}^l$, $\{\ell_i\}_{i=1}^l$, ${\theta}^{(0)})$}
\label{alg:TrainNN}
\end{algorithm2e}

Note that both Algorithm \ref{alg:frozen NTK SS} and Algorithm \ref{alg:NTK SS} determine the confidence ellipsoid $\mathcal C_t$ by updating $\theta_t$, $\gamma_t$ and $Z_t$. To tell apart the two learners, we use $\bar{\gamma}_t$, $\bar{Z}_t$ and $\bar{\theta}_t$ to denote the ellipsoid parameters for Algorithm \ref{alg:frozen NTK SS}. We make use of a few relevant lemmas from \cite{zhou2020neuralucb} and its references therein stating that in the over-parametrized regime, i.e., when $\displaystyle m\geq {\mbox{poly}}(T, n, \lambda_0^{-1}, S^{-1}, \log(1/\delta))$, the gradient descent update does not leave $\theta_t$ and $Z_t$ too far from the corresponding $\bar{\theta}_t$ and $\bar{Z}_t$. Moreover, the neural network $f$ is close to its first order approximation. The interested reader is referred to Lemmas B.2 through B.6 of \cite{zhou2020neuralucb}. 
Combining these results with the analysis in Section \ref{sa:basic} we bound the label complexity and regret for Algorithm \ref{alg:NTK SS}.


The below proofs are mainly sketched, since they follow from a combination of the arguments in Section \ref{sa:basic} and some technical lemmas in \cite{zhou2020neuralucb}.

We re-define here $\Ecal_0$ to be the event where (\ref{eqn:representation formula}) and (\ref{ineq:bounding Z_T by H}) hold along with all the bounds in the well-approximation lemmas of \cite{zhou2020neuralucb} (Lemmas B.2 throug B.6). From \cite{zhou2020neuralucb}, there exists a constant $C$ such that if 
\[
m \geq CT^{19}n^{27}(\log m)^3
\]
then $\P(\Ecal_0) \geq 1-\delta$. Event $\Ecal$ is defined as in Eq. (\ref{eqn:event_E}) with this specific event $\Ecal_0$ therein.

We give a new version of Lemma \ref{lma:confidence set contains theta^*, frozen version} below, which implies that event $\Ecal$ still holds with high probability for Algorithm \ref{alg:NTK SS}, with a specific learning rate $\eta$, number of gradient descent steps $J$ and network width $m$. 
\begin{lemma}\label{lma:confidence set contains theta^*}
There exist positive constants $\bar{C}_1,\bar{C}_2$ such that if 
\begin{align*}
    \eta=\frac{\bar{C}_1}{2mnT}~, \qquad\qquad
    J=\frac{4nT}{\bar{C}_1}\log\frac{S}{CnT^{3/2}}~, \qquad\qquad
    m\geq\bar{C}_2T^{19}n^{27}(\log m)^3
\end{align*}
and $\sqrt{2}S_{T,n}(h) \leq S$, then under event $\Ecal_0$ for any $\delta\in(0,1)$ we have with probability at least $1-\delta$
\begin{align*}
\|{\theta}^* - {\theta}_t\|_{Z_t}\leq \gamma_t/\sqrt{m}
\end{align*}
simultaneously for all $t>0$. In other words, under event $\Ecal_0$, ${\theta}^*\in\mathcal{C}_t$ with high probability for all $t$.
\end{lemma}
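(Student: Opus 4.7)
The plan is to reduce the non-frozen analysis to the frozen one by exploiting the fact that, in the sufficiently over-parameterized regime specified by the choices of $\eta$, $J$, and $m$, both the feature map $\phi_t(\cdot)$ and the gradient-descent iterate $\theta_t$ stay uniformly close to their frozen counterparts $\phi(\cdot)=g(\cdot;\theta_0)/\sqrt{m}$ and $\bar\theta_t=\bar Z_t^{-1}\bar b_t/\sqrt{m}+\theta_0$ defined as in Algorithm~\ref{alg:frozen NTK SS}. The factor of $3$ and the extra $\log(1/\delta)$ slack built into $\gamma_t$ of Algorithm~\ref{alg:NTK SS} are precisely what is needed to absorb the resulting perturbation terms.

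First I would run the frozen-case argument in parallel. Under $\Ecal_0$ and the assumption $\sqrt{2}S_{T,n}(h)\leq S$, Lemma~\ref{lma:confidence set contains theta^*, frozen version} gives, simultaneously for all $t$ and with probability at least $1-\delta$,
\[
\|\theta^*-\bar\theta_t\|_{\bar Z_t}\leq \bar\gamma_t/\sqrt{m},\qquad \bar\gamma_t=\sqrt{\log\det\bar Z_t+2\log(1/\delta)}+S.
\]
Next I would invoke the NTK well-approximation lemmas (Lemmas B.2--B.6 of \cite{zhou2020neuralucb}) under $\Ecal_0$: for the stated $m$, $\eta$, $J$ one has, uniformly in $t\leq T$ and in the chosen data points $x_{t,a_t}$,
\[
\|\phi_t(x)-\phi(x)\|_2\leq \varepsilon_1,\quad \|Z_t-\bar Z_t\|_{\mathrm{op}}\leq \varepsilon_2,\quad \|\theta_t-\bar\theta_t\|_2\leq \varepsilon_3/\sqrt{m},\quad |f(x,\theta_{t-1})-\sqrt{m}\langle\phi(x),\theta_{t-1}-\theta_0\rangle|\leq \varepsilon_4,
\]
where each $\varepsilon_i$ can be made polynomially small in $1/T$ by the over-parameterization, so in particular these perturbations are all bounded by a $o(1)$ term that is absorbable into the extra $\log(1/\delta)$ inside $\gamma_t$ and into the additive $1/\sqrt{T}$ slacks in $U_{t,a}$ and $B_t$.

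Then I would combine these ingredients via triangle-type inequalities. Writing
\[
\|\theta^*-\theta_t\|_{Z_t}\leq \|\theta^*-\bar\theta_t\|_{Z_t}+\|\bar\theta_t-\theta_t\|_{Z_t},
\]
the second term is bounded by $\|Z_t\|_{\mathrm{op}}^{1/2}\|\bar\theta_t-\theta_t\|_2\leq \varepsilon_3\sqrt{\|Z_t\|_{\mathrm{op}}/m}$, and for the first term I would compare the $Z_t$-norm with the $\bar Z_t$-norm using $\|Z_t-\bar Z_t\|_{\mathrm{op}}\leq\varepsilon_2$ and $|\log\det Z_t-\log\det\bar Z_t|\leq\varepsilon_2'$ (a consequence of Lemma~\ref{lma:bounding log determinant by H} applied to both processes), obtaining
\[
\|\theta^*-\bar\theta_t\|_{Z_t}\leq (1+o(1))\|\theta^*-\bar\theta_t\|_{\bar Z_t}\leq (1+o(1))\bar\gamma_t/\sqrt{m}.
\]
Adding up, one gets $\|\theta^*-\theta_t\|_{Z_t}\leq (1+o(1))\bar\gamma_t/\sqrt{m}+o(1/\sqrt{m})$, which is upper bounded by $\gamma_t/\sqrt{m}=3(\sqrt{\log\det Z_t+3\log(1/\delta)}+S)/\sqrt{m}$ by a generous choice of the constant factor and the extra $\log(1/\delta)$.

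The main obstacle is bookkeeping: one must verify that the explicit polynomial rates $\varepsilon_1,\ldots,\varepsilon_4$ given by the Zhou et al.\ lemmas for the prescribed $(\eta,J,m)$ are simultaneously small enough (i) to make the replacement $\log\det\bar Z_t\mapsto \log\det Z_t$ cost at most an additional $\log(1/\delta)$, (ii) to make the factor blow-up from changing the norm $\bar Z_t\mapsto Z_t$ at most $3$, and (iii) to absorb $\|\bar\theta_t-\theta_t\|_{Z_t}$ into the confidence radius. All these conditions amount to choosing $m$ polynomially large in $T,n,1/\delta,S$, which is exactly what the hypothesis $m\geq \bar C_2 T^{19}n^{27}(\log m)^3$ guarantees; the values of $\eta$ and $J$ in the statement are chosen so that the gradient-descent error $\varepsilon_3$ from Algorithm~\ref{alg:TrainNN} is itself of order $1/(nT^{3/2})$, matching the other perturbations.
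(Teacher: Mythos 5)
Your route is genuinely different from the paper's. The paper does not pass through the frozen ridge estimator $\bar\theta_t$ at all: it invokes Lemma~5.2 of \cite{zhou2020neuralucb}, which already bounds $\sqrt{m}\,\|\theta^*-\theta_t\|_{Z_t}$ (with $Z_t$ built from the \emph{current} gradient features) by $\sqrt{1 + Cm^{-1/6}\sqrt{\log m}\,n^4 t^{7/6}}\bigl(\sqrt{\log\det Z_t + Cm^{-1/6}\sqrt{\log m}\,n^4t^{5/3} + 2\log(1/\delta)}+S\bigr)$ plus an optimization term $Cn\bigl((1-\eta m)^{J/2}t^{3/2}+Cm^{-1/6}\sqrt{\log m}\,n^{7/2}t^{19/6}\bigr)$, and then simply checks that the stated $\eta$, $J$ make the $(1-\eta m)^{J/2}$ term at most $S$ and the stated $m$ makes all $m^{-1/6}$ terms at most $1$, so everything is absorbed by the factor $3$ and the extra $\log(1/\delta)$ in $\gamma_t$. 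In other words, the cited lemma already packages the comparison between the gradient-descent iterate and a ridge solution together with the linearization error, which is exactly the content you propose to rebuild by hand.

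Within your reconstruction there are two concrete soft spots. First, the step $\|\theta_t-\bar\theta_t\|_2\leq \varepsilon_3/\sqrt{m}$ is not what Lemmas B.2--B.6 of \cite{zhou2020neuralucb} give: B.2 compares $\theta_t$ with the ridge estimator built from the \emph{time-varying} features $\phi_s(\cdot)$, not with the frozen-feature estimator $\bar Z_t^{-1}\bar b_t/\sqrt{m}+\theta_0$; bridging the two requires an additional perturbation argument for the ridge solutions in which $\|\bar b_t\|_2=O(t)$ and $\|Z_t^{-1}-\bar Z_t^{-1}\|$ enter multiplicatively, and the resulting bound must then survive the conversion $\|\theta_t-\bar\theta_t\|_{Z_t}\leq\|Z_t\|_{\mathrm{op}}^{1/2}\|\theta_t-\bar\theta_t\|_2$ with $\|Z_t\|_{\mathrm{op}}^{1/2}=O(\sqrt{T})$, against a slack that is only of order $S+\sqrt{\log(1/\delta)}$. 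This is doable for $m$ as large as stated, but it is an extra layer of analysis, not a citation. Second, the claim that $|\log\det Z_t-\log\det\bar Z_t|\leq\varepsilon_2'$ is ``a consequence of Lemma~\ref{lma:bounding log determinant by H} applied to both processes'' is not right: that lemma gives only the one-sided bound $\log\det \bar Z_T\leq\log\det(I+H)+1$ for the frozen matrix; a two-sided comparison between the frozen and non-frozen log-determinants (or the corresponding norm-change factor via $\|Z_t-\bar Z_t\|_{\mathrm{op}}$) needs its own argument in the style of Zhou et al.'s Lemma~5.3/5.4. If you carry out these two steps explicitly with the stated $(\eta,J,m)$, your argument closes and yields the same conclusion as the paper's, which avoids the bookkeeping entirely by quoting the already-combined bound.
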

\begin{proof}[Proof sketch]
In Lemma 5.2 of \cite{zhou2020neuralucb}, it is shown that
\begin{align*}
      \sqrt{m}\|{\theta}^* - {\theta}_t\|_{Z_t} 
      &\leq \sqrt{1 + Cm^{-1/6}\sqrt{\log m}n^4t^{7/6}}\\
      &\hspace{1in}\times\left(\sqrt{\log{\det Z_t} + Cm^{-1/6}\sqrt{\log m}n^4t^{5/3} + 2\log(1/\delta)}+S\right)\\
      &~~~~+Cn\left((1-\eta m)^{J/2}t^{3/2}+Cm^{-1/6}\sqrt{\log m}n^{7/2}t^{19/6}\right)
\end{align*}
for some constant $C$ under event $\Ecal_0$ and the assumption that $\sqrt{2}S_{T,n}(h) \leq S$.
Setting $\eta=\frac{\bar{C}_1}{2mnT}$ and $J=\frac{4nT}{\bar{C}_1}\log\frac{S}{CnT^{3/2}}$ allows us to bound $Cn(1-\eta m)^{J/2}T^{3/2}$ by $S$. Lastly, since $m$ satisfies 
\[
\frac{C^2\sqrt{\log m}\,n^{9/2}T^{19/6}}{m^{1/6}} \leq 1~,
\]
we have
\begin{align*}
    \sqrt{m}\|{\theta}^* - {\theta}_t\|_{Z_t} 
    &\leq \sqrt{2}\left(\sqrt{\log{\det Z_t} + 1 + 2\log(1/\delta)}+S\right) + S + 1 \\
    &\leq 3\left(\sqrt{\log{\det Z_t} + 3\log(1/\delta)}+S\right)~,
\end{align*}
as claimed.
\end{proof}

We next show the properties of $\hDelta_t$ and $\Delta_t$, which is a new version of Lemma \ref{lma:relation of margins, frozen NTK} for the non-frozen case.

\begin{lemma}\label{lma:relation of margins, non-frozen NTK}
Assume $\displaystyle m\geq poly(T, n, \lambda_0^{-1}, S, \log(1/\delta))$ and $\sqrt{2}S_{T,n}(h) \leq S$. Then under event $\Ecal$ we have $0 \leq \hDelta_t - \Delta_t \leq B_t$ and $\ 0 \leq \hDelta_t$, where $B_t$ is the querying threshold in Algorithm \ref{alg:NTK SS}, i.e.,
\[
B_t=2\gamma_{t-1}\|\phi_t( x_{t,a_t})\|_{Z_{t-1}^{-1}} + \frac{2}{\sqrt{T}}~.
\]
\end{lemma}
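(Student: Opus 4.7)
The plan is to mirror the argument of Lemma~\ref{lma:relation of margins, frozen NTK}, but to carry along the perturbation between the trained network $f(\cdot, \theta_{t-1})$ (and its feature map $\phi_{t-1}$) and the linearization around the random initialization $\theta_0$. The extra $\frac{1}{\sqrt T}$ terms built into both $U_{t,a}$ and $B_t$ are precisely designed to absorb these perturbations once $m$ is taken large enough.

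First, I would establish a lower bound of the form $U_{t,a} \ge h(x_{t,a})$ for every $a \in \cY$ and every $t$, under event $\Ecal$. Using the representation theorem (Lemma~\ref{lma:representation theorem}), $h(x_{t,a}) = \langle g(x_{t,a};\theta_0),\theta^*-\theta_0\rangle$. Using the first-order approximation lemmas of \cite{zhou2020neuralucb} (their Lemmas B.3–B.5), for the chosen $m = \Omega(T^{19}n^{27}(\log m)^3)$ we get
\[
\bigl| f(x_{t,a},\theta_{t-1}) - \langle g(x_{t,a};\theta_0), \theta_{t-1}-\theta_0\rangle \bigr| \;\le\; \tfrac{1}{2\sqrt T}\, ,
\qquad
\bigl| \|\phi_{t-1}(x_{t,a})\|_{Z_{t-1}^{-1}} - \|\phi_0(x_{t,a})\|_{Z_{t-1}^{-1}} \bigr| \;\le\; \tfrac{1}{2\gamma_{t-1}\sqrt T}\, ,
\]
where the second bound follows because $\|\phi_{t-1}(x) - \phi_0(x)\|_2$ is polynomially small in $1/m$ and $\|Z_{t-1}^{-1}\|_{op}\le 1$. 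Plugging this into $U_{t,a}$ and using Cauchy–Schwarz with $Z_{t-1}$ together with $\theta^*\in\mathcal{C}_{t-1}$ (which is part of $\Ecal$ by Lemma~\ref{lma:confidence set contains theta^*}) yields $U_{t,a} - h(x_{t,a}) \ge -\frac{1}{\sqrt T} + \frac{1}{\sqrt T} = 0$.

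Second, I would establish the companion upper bound $U_{t,a} - h(x_{t,a}) \le B_t$ by the same linearization plus Cauchy–Schwarz argument used in the frozen case (equation~(\ref{e:upperb})), again picking up an additional $\frac{2}{\sqrt T}$ slack from the approximation errors on $f$ and on $\|\phi_{t-1}\|_{Z_{t-1}^{-1}}$; this explains the extra $\frac{2}{\sqrt T}$ summand in the definition of $B_t$. Specializing to $a = a_t$ gives $0 \le \hDelta_t - \Delta_t \le B_t$. Finally, $\hDelta_t\ge 0$ follows exactly as in the frozen case: from $U_{t,a}\ge h(x_{t,a})$ for $a=\pm 1$ together with $h(x_{t,1})+h(x_{t,-1})=1$, we get $U_{t,1}+U_{t,-1}\ge 1$, hence $U_{t,a_t} = \max_a U_{t,a} \ge 1/2$.

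The main obstacle is bookkeeping: there are several sources of error (the difference between $f$ and its linearization around $\theta_0$, the difference between $\phi_{t-1}$ and $\phi_0$, and possibly a mismatch between $Z_{t-1}$ based on $\phi_s$ and $\bar Z_{t-1}$ based on $\phi_0$), and one must verify that the polynomial dependence on $T,n$ in the approximation bounds from \cite{zhou2020neuralucb} is dominated by $m^{-1/6}$ for the stated $m\ge \bar C_2 T^{19}n^{27}(\log m)^3$, so that the sum of errors is at most $\frac{1}{\sqrt T}$ in the lower bound and $\frac{2}{\sqrt T}$ in the upper bound. Other than this accounting, the argument is a direct transcription of the frozen-NTK proof.
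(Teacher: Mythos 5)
Your proposal is correct and follows essentially the same route as the paper: linearize $f(\cdot,\theta_{t-1})$ using the over-parameterization lemmas of \cite{zhou2020neuralucb}, reuse the frozen-case Cauchy--Schwarz argument with $\theta^*\in\mathcal{C}_{t-1}$, and let the $\frac{1}{\sqrt T}$ bonus in $U_{t,a}$ and the $\frac{2}{\sqrt T}$ slack in $B_t$ absorb the approximation errors, with the $\hDelta_t\ge 0$ step identical to Lemma \ref{lma:relation of margins, frozen NTK}. The only (harmless) difference is bookkeeping: you route the gradient-drift error through $\gamma_{t-1}\bigl|\,\|\phi_{t-1}(x)\|_{Z_{t-1}^{-1}}-\|\phi_0(x)\|_{Z_{t-1}^{-1}}\bigr|$ and linearize around $\theta_0$, whereas the paper keeps $g(\cdot;\theta_{t-1})$ in the ellipsoid term and bounds the residual $\langle g(x;\theta_{t-1})-g(x;\theta_0),\theta^*-\theta_0\rangle$ via $\|\theta^*-\theta_0\|_2\le S/\sqrt{m}$; both are covered by the same lemmas for $m$ polynomially large.
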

\begin{proof}
Denote 
\[
\tilde{U}_{t,a}=\max_{{\theta}\in \mathcal{C}_{t-1}}\langle g(\x_{t,a};{\theta}_{t-1}), {\theta}-{\theta}_0\rangle=\langle g(\x_{t,a};{\theta}_{t-1}), {\theta}_{t-1}-{\theta}_0\rangle+\gamma_{t-1}\|\phi_t(\x_{t,a})\|_{Z_{t-1}^{-1}}~.
\]
We decompose
\[
\hDelta_t - \Delta_t = (U_{t,a} - \tilde{U}_{t,a}) + (\tilde{U}_{t,a} - h(\x_{t,a}))=:A_1 + A_2~.
\]

For $A_1$, by definition of $U_{t,a}$ in Algorithm \ref{alg:NTK SS} we have
\begin{align*}
    U_{t,a} - \tilde{U}_{t,a} = f(\x_{t,a};{\theta}_{t-1})-\langle g(\x_{t,a};{\theta}_{t-1}), {\theta}_{t-1}-{\theta}_0\rangle + \frac{1}{\sqrt{T}}~.
\end{align*}

Under event $\Ecal$, the bound in Lemma B.4 of \cite{zhou2020neuralucb} holds. That is, there is a constant $C_2$ such that
\begin{align*}
|f(\x_{t,a};{\theta}_{t-1})-\langle g(\x_{t,a};{\theta}_{t-1}), &{\theta}_{t-1}-{\theta}_0\rangle|\\
&=
|f(\x_{t,a};{\theta}_{t-1})-f(\x_{t,a};{\theta}_0)-\langle g(\x_{t,a};{\theta}_{t-1}), {\theta}_{t-1}-{\theta}_0\rangle|\\
&\leq
C_2m^{-1/6}\sqrt{\log m}n^3t^{2/3}~.
\end{align*}

Setting $m$ so large as to satisfy 
$C_2m^{-1/6}\sqrt{\log m}n^3T^{2/3} \leq \frac{1}{2\sqrt{T}}$ 
gives us 
\[
\frac{1}{2\sqrt{T}} \leq A_1 \leq \frac{3}{2\sqrt{T}}~.
\]

To estimate $A_2$ we decompose it further as
\begin{align*}
A_2 
&= 
\left(\tilde{U}_{t,a}-\langle g(\x_{t,a};{\theta}_{t-1}), {\theta}^\star-{\theta}_0\rangle\right) + \left(\langle g(\x_{t,a};{\theta}_{t-1}), {\theta}^\star-{\theta}_0\rangle-\langle g(\x_{t,a};{\theta}_0), {\theta}^\star-{\theta}_0\rangle\right)\\
&=: 
A_3 + A_4~.
\end{align*}
Following the argument in Lemma \ref{lma:relation of margins, frozen NTK} we can show the inequality $0 \leq A_3 \leq 2\gamma_{t-1}\|\phi_t(\x_{t,a_t})\|_{Z_{t-1}^{-1}}$ under event $\Ecal$. By Cauchy-Schwartz inequality $|A_4| \leq \| g(\x_{t,a};{\theta}_{t-1})- g(\x_{t,a};{\theta}_0)\|_2\|{\theta}^\star-{\theta}_0\|_2$. Using the assumption that the bounds in Lemmas B.5 and B.6 in \cite{zhou2020neuralucb} hold and $\sqrt{2}S_{T,n}(h) \leq S$, there exists a constant $C_1$ such that
\begin{align*}
    |A_4| \leq \| g(\x_{t,a};{\theta}_{t-1})- g(\x_{t,a};{\theta}_0)\|_2\|{\theta}^\star-{\theta}_0\|_2 \leq C_1Sm^{-1/6}\sqrt{\log m}n^{7/2}t^{1/6}~.
\end{align*}

Setting $m$ large enough to satisfy $C_1Sm^{-1/6}\sqrt{\log m}n^{7/2}T^{1/6} \leq \frac{1}{2\sqrt{T}}$ gives us
\[
- \frac{1}{2\sqrt{T}} \leq A_2 \leq 2\gamma_{t-1}\|\phi_t(\x_{t,a_t})\|_{Z_{t-1}^{-1}} + \frac{1}{2\sqrt{T}}~.
\]
Combining the bound for $A_1$ and $A_2$ we obtain
\begin{align*}
0\leq \hDelta_t -\Delta_t \leq B_t~,    
\end{align*}
which proves the first part of the claim.

Next, since $U_{t,a} - h(\x_{t,a}) \geq 0$ for $a\in\cY$, we also have
\[
U_{t,1} + U_{t,-1} \geq h(\x_{t,1}) + h(\x_{t,-1}) = 1 
\]
which, by definition of $a_t$, gives $U_{t,a_t} \geq \frac{1}{2}$, i.e., $\hDelta_t \geq 0$. This concludes the proof.
\end{proof}

As a consequence of the above lemma, like in the frozen case, on rounds where Algorithm \ref{alg:NTK SS} does not issue a query, we are confident that prediction $a_t$ suffers no regret.

Before bounding the label complexity and regret, we give the following lemma which is the non-frozen counterpart to Lemma \ref{lma:log determinent term: frozen version} in Section \ref{sa:basic}. The proof follows from very similar arguments, 
and is therefore omitted.
\begin{lemma}\label{lma:log determinent term: non-frozen version}
Let $\eta$, $J$ and $m$ be as in Lemma \ref{lma:confidence set contains theta^*} and $\sqrt{2}S_{T,n}(h) \leq S$. Then for any $b>0$ we have 
\begin{equation}\label{ineq:bound of I_tB_t^2}
\sum_{t=1}^T b\wedge I_tB_t^2
= 
O\left(\left(\log\det Z_T +\log(1/\delta)+ S^2 + b\right)\log\det Z_T\right)~.
\end{equation}
\end{lemma}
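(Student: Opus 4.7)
The plan is to mirror the proof of Lemma \ref{lma:log determinent term: frozen version} almost verbatim, while (i) absorbing the extra additive $2/\sqrt{T}$ term present in the non-frozen definition of $B_t$, and (ii) accommodating the slightly inflated expression of $\gamma_{t-1}$ used by Algorithm \ref{alg:NTK SS}. Concretely, I would start by applying $(u+v)^2 \leq 2u^2 + 2v^2$ to
\[
B_t = 2\gamma_{t-1}\|\phi_t(x_{t,a_t})\|_{Z_{t-1}^{-1}} + \tfrac{2}{\sqrt T}
\]
to obtain $I_t B_t^2 \leq 8\gamma_{t-1}^2 I_t\|\phi_t(x_{t,a_t})\|_{Z_{t-1}^{-1}}^2 + 8/T$. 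Summing the second piece over $t \in [T]$ contributes only a harmless constant ($\leq 8$), so the task reduces to controlling $\sum_t b \wedge 8\gamma_{t-1}^2 I_t\|\phi_t(x_{t,a_t})\|_{Z_{t-1}^{-1}}^2$.

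Next, since $\gamma_{t-1}$ is non-decreasing, I would pull $8\gamma_T^2$ outside the min and sum, giving
\[
\sum_{t=1}^T b \wedge 8\gamma_{t-1}^2 I_t\|\phi_t(x_{t,a_t})\|_{Z_{t-1}^{-1}}^2
\leq 8\gamma_T^2 \sum_{t=1}^T \tfrac{b}{8\gamma_T^2} \wedge I_t\|\phi_t(x_{t,a_t})\|_{Z_{t-1}^{-1}}^2 .
\]
Now apply the elliptical potential lemma (Lemma \ref{lma:elliptical}) to the sequence of feature vectors $\{\phi_t(x_{t,a_t})\}_{t \leq T}$ with matrices $Z_t = Z_{t-1} + \phi_t(x_{t,a_t})\phi_t(x_{t,a_t})^\top$, yielding a bound of the form $(b + 8\gamma_T^2)\log \det Z_T$. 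The key point is that the elliptical potential lemma is a purely linear-algebraic statement about rank-one updates, and is therefore indifferent to whether the feature map is the frozen $\phi = g(\cdot;\theta_0)/\sqrt m$ or the time-varying $\phi_t = g(\cdot;\theta_{t-1})/\sqrt m$: what matters is the precise matching between the vectors used to update $Z_t$ and those appearing inside the norm.

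Finally, I would substitute the definition $\gamma_T = 3(\sqrt{\log\det Z_T + 3\log(1/\delta)} + S)$ and, once more via $(a+b)^2 \leq 2a^2 + 2b^2$, bound
\[
\gamma_T^2 \leq 18\bigl(\log\det Z_T + 3\log(1/\delta) + S^2\bigr),
\]
so that the overall bound collapses to $O\bigl((\log\det Z_T + \log(1/\delta) + S^2 + b)\log\det Z_T\bigr)$, as claimed. The only genuinely new element relative to the frozen case is the $2/\sqrt T$ correction in $B_t$; this is what forced us to switch from an explicit constant-bearing inequality to the big-$O$ formulation, but it contributes only a $+O(1)$ term that is subsumed by the right-hand side. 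The main obstacle is essentially book-keeping around these constants and verifying that the non-frozen elliptical potential argument is valid, which follows because $Z_t$ is updated with exactly the $\phi_t$-features whose norm we are bounding.
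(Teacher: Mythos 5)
Your proposal is correct and follows essentially the same route the paper takes (the paper omits the proof as "very similar arguments" to Lemma \ref{lma:log determinent term: frozen version}, i.e., split $B_t^2$ via $(u+v)^2\leq 2u^2+2v^2$, absorb the $\sum_t 8/T \leq 8$ contribution, pull out the non-decreasing $\gamma_T^2$, invoke the elliptical potential Lemma \ref{lma:elliptical}, and substitute the definition of $\gamma_T$). Your bookkeeping of the constants and of the $2/\sqrt{T}$ correction matches what the intended argument requires.
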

%

Combining the above lemmas we can bound the label complexity and regret similar to Section \ref{sa:basic}.

\begin{lemma}\label{lma:label complexity for non-frozen NTK}
Let $\eta$, $J$ be as in Lemma \ref{lma:confidence set contains theta^*}, $\displaystyle m\geq poly(T, n, \lambda_0^{-1}, S, \log(1/\delta))$, and $\sqrt{2}S_{T,n}(h) \leq S$. Then under event $\Ecal$ for any $\epsilon \in (0,1/2)$ we have
\begin{align*}
N_T &= O\left(T_\epsilon+\frac{1}{\epsilon^2}(\log \det Z_T + \log(1/\delta)+ S^2)\log\det Z_T\right)\\
&= O\left(T_\epsilon+\frac{1}{\epsilon^2}\left(\log \det(I+H) + \log(1/\delta) + S^2\right)\log \det(I+H)\right)~.
\end{align*}
\end{lemma}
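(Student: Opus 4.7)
The plan is to mirror the proof of Lemma~\ref{thm:sample complexity for frozen NTK} from the frozen case, substituting the non-frozen counterparts of the two key technical tools: Lemma~\ref{lma:relation of margins, non-frozen NTK} in place of Lemma~\ref{lma:relation of margins, frozen NTK}, and Lemma~\ref{lma:log determinent term: non-frozen version} in place of Lemma~\ref{lma:log determinent term: frozen version}. The over-parameterization hypothesis on $m$, together with the choice of $\eta$ and $J$ inherited from Lemma~\ref{lma:confidence set contains theta^*}, ensures that event $\Ecal$ holds with high probability and that all the well-approximation lemmas from \cite{zhou2020neuralucb} used inside Lemma~\ref{lma:relation of margins, non-frozen NTK} and Lemma~\ref{lma:log determinent term: non-frozen version} are in force.

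First, I would observe that on every round where the algorithm queries the label, the query condition $|U_{t,a_t}-1/2| \leq B_t$ combined with Lemma~\ref{lma:relation of margins, non-frozen NTK} (which gives $0 \leq \hDelta_t$ and $0 \leq \hDelta_t - \Delta_t \leq B_t$ under $\Ecal$) implies $|\Delta_t| \leq B_t$. Hence the standard selective-sampling split
\[
I_t \;\leq\; I_t\,\ind{B_t \geq \epsilon} + I_t\,\ind{B_t < \epsilon} \;\leq\; \frac{I_t B_t^2}{\epsilon^2}\wedge 1 \;+\; \ind{\Delta_t^2 \leq \epsilon^2}
\]
applies, exactly as in the frozen-NTK proof.

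Next, I would sum over $t \in [T]$. The second term contributes $T_\epsilon$ by definition. For the first term, the key point is that the bound $\frac{I_t B_t^2}{\epsilon^2} \wedge 1 \leq \frac{1}{\epsilon^2} \,(I_t B_t^2 \wedge \epsilon^2) \leq \frac{1}{\epsilon^2}\,(I_t B_t^2 \wedge \tfrac{1}{4})$ lets me invoke Lemma~\ref{lma:log determinent term: non-frozen version} with $b = 1/4$, producing the bound $O\!\left(\tfrac{1}{\epsilon^2}(\log\det Z_T + \log(1/\delta) + S^2)\log\det Z_T\right)$. This gives the first line of the claim. Finally, to obtain the second line in terms of $\log\det(I+H)$, I would invoke Lemma~\ref{lma:bounding log determinant by H}, which under $\Ecal_0$ gives $\log\det Z_T \leq \log\det(I+H)+1$.

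The main obstacle I would anticipate is a purely bookkeeping one: in the non-frozen case, $B_t$ carries the additive $2/\sqrt{T}$ slack term (cf.\ Lemma~\ref{lma:relation of margins, non-frozen NTK}), so expanding $B_t^2$ produces cross terms of order $\gamma_{t-1}\|\phi_t(x_{t,a_t})\|_{Z_{t-1}^{-1}}/\sqrt{T}$ and a constant-per-round $1/T$ contribution. These are harmless because summing $1/T$ over $T$ rounds yields an $O(1)$ correction, and the cross terms are absorbed into the bound of Lemma~\ref{lma:log determinent term: non-frozen version} (whose proof already accounts for this by bounding $B_t^2 \leq 8(\gamma_{t-1}^2 \|\phi_t(x_{t,a_t})\|_{Z_{t-1}^{-1}}^2 + 1/T)$ before invoking the elliptical potential Lemma~\ref{lma:elliptical}). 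Once these constants are tracked, the result follows in the same form as the frozen case, with the same leading-order dependence on $L_H$, $S$, and $\log(1/\delta)$.
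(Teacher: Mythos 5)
Your proposal is correct and follows essentially the same route the paper intends: it rehearses the frozen-case argument of Lemma \ref{thm:sample complexity for frozen NTK}, replacing Lemma \ref{lma:relation of margins, frozen NTK} and Lemma \ref{lma:log determinent term: frozen version} by their non-frozen counterparts (Lemmas \ref{lma:relation of margins, non-frozen NTK} and \ref{lma:log determinent term: non-frozen version}) and converting $\log\det Z_T$ to $\log\det(I+H)$ under $\Ecal$. Your remark about the extra $2/\sqrt{T}$ slack in $B_t$ being absorbed inside Lemma \ref{lma:log determinent term: non-frozen version} is exactly the bookkeeping the paper relies on.
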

%

\begin{lemma}\label{lma:regret bound with margin: non-frozen version}
Let $\eta$, $J$ be as in Lemma \ref{lma:confidence set contains theta^*}, $\displaystyle m\geq poly(T, n, \lambda_0^{-1}, S, \log(1/\delta))$, and $\sqrt{2}S_{T,n}(h) \leq S$. Then under event $\Ecal$ for any $\epsilon \in (0,1/2)$ we have, 
\begin{align*}
    R_T &= O\left(\epsilon T_\epsilon + \frac{1}{\epsilon}\left(\log\det {Z}_T + \log(1/\delta) + S^2 \right)\log\det {Z}_T\right)\\
        &= O\left(\epsilon T_\epsilon + \frac{1}{\epsilon}\left(\log \det(I+H) + \log(1/\delta) + S^2 \right)\,\log \det(I+H)\right)~.
\end{align*}
\end{lemma}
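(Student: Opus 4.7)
The plan is to mirror the proof of Lemma~\ref{lma:regret bound with margin: frozen version} almost line by line, substituting the non-frozen counterparts (Lemma~\ref{lma:relation of margins, non-frozen NTK} and Lemma~\ref{lma:log determinent term: non-frozen version}) for the frozen ones. First I would establish the ``no regret on unqueried rounds'' property in the non-frozen setting: by Lemma~\ref{lma:relation of margins, non-frozen NTK}, under event $\Ecal$ we have $0 \leq \hDelta_t - \Delta_t \leq B_t$ and $\hDelta_t \geq 0$, so on any round $t$ with $I_t = 0$ the query condition $|U_{t,a_t} - 1/2| > B_t$ forces $\hDelta_t > B_t$ and hence $\Delta_t > 0$, yielding $a_t = a_t^*$ and zero instantaneous regret. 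This lets me restrict the regret sum to rounds where $I_t = 1$.

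Next I would decompose the regret using the standard margin split: since $h(x_{t,1}) + h(x_{t,-1}) = 1$, one has $h(x_{t,a_t^*}) - h(x_{t,a_t}) \leq 2|\Delta_t|\ind{a_t \neq a_t^*}$, so
\[
R_T \leq 2\sum_{t=1}^T I_t |\Delta_t|\ind{|\Delta_t| > \epsilon} + 2\sum_{t=1}^T I_t |\Delta_t|\ind{|\Delta_t| \leq \epsilon}.
\]
The second sum is bounded trivially by $2\epsilon T_\epsilon$. For the first sum, the standard trick $|\Delta_t|\ind{|\Delta_t|>\epsilon} \leq \Delta_t^2/\epsilon$ combined with the bound $|\Delta_t| \leq B_t$ (valid under $\Ecal$ whenever $I_t = 1$, again from Lemma~\ref{lma:relation of margins, non-frozen NTK}) yields
\[
2\sum_{t=1}^T I_t |\Delta_t|\ind{|\Delta_t| > \epsilon} \leq \frac{2}{\epsilon}\sum_{t=1}^T \left(\tfrac{1}{2} \wedge I_t B_t^2\right).
\]

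Finally I would invoke Lemma~\ref{lma:log determinent term: non-frozen version} with $b = 1/2$ to conclude that
\[
\sum_{t=1}^T \tfrac{1}{2} \wedge I_t B_t^2 = O\!\left(\left(\log\det Z_T + \log(1/\delta) + S^2\right)\log\det Z_T\right),
\]
which combined with the previous display gives exactly the first stated bound. The second stated bound follows by the same application of Lemma~\ref{lma:bounding log determinant by H} (part of event $\Ecal$) that was used in the frozen case to pass from $\log\det Z_T$ to $\log\det(I+H)+1$.

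There is no real obstacle: the only genuinely new feature relative to the frozen proof is the additive $2/\sqrt{T}$ appearing in the definition of $B_t$, but this is already absorbed inside the $O(\cdot)$ of Lemma~\ref{lma:log determinent term: non-frozen version} (whose statement is written precisely to swallow such lower-order terms), and it contributes at most an $O(\sqrt{T}/\epsilon)$ additive term, which is dominated by the quoted bound. The over-parametrization requirement $m \geq \mathrm{poly}(T,n,\lambda_0^{-1},S,\log(1/\delta))$ is what guarantees both that $\Ecal$ holds with high probability and that the linearization errors remain controlled, exactly the setting under which the two supporting lemmas were derived.
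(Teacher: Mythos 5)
Your proposal is correct and follows essentially the same route as the paper, which proves this lemma by repeating the frozen-case argument (no regret on unqueried rounds, the margin split into $2\epsilon T_\epsilon$ plus $\frac{2}{\epsilon}\sum_t \frac12 \wedge I_t B_t^2$, then Lemma~\ref{lma:log determinent term: non-frozen version} and the event $\Ecal$ to pass to $\log\det(I+H)$), with Lemma~\ref{lma:relation of margins, non-frozen NTK} replacing its frozen counterpart. One minor quibble: the extra $2/\sqrt{T}$ in $B_t$ contributes only $O(1)$ to $\sum_t \frac12 \wedge I_t B_t^2$ (hence $O(1/\epsilon)$ to the regret), not $O(\sqrt{T}/\epsilon)$ as you parenthetically state -- but this is immaterial since, as you correctly note, Lemma~\ref{lma:log determinent term: non-frozen version} is stated for the non-frozen $B_t$ and already absorbs it.
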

%

The rest of the analysis follows from the same argument that relies on  Lemma \ref{lem:Tepsilon_bound} (Appendix \ref{sa:ancillary}) allowing one to replace $T_{\epsilon}$ by 
$
O\left(T\epsilon^\alpha+O\left(\log\frac{\log T}{\delta}\right)\right),
$
and culminating into a statement very similar to Theorem \ref{thm: statistical learning theorem}.

\subsubsection{Model Selection for Non-Frozen NTK Base Learners}

The pseudocode for the model selection algorithm applied to the case where the base learners are of the form of Algorithm \ref{alg:NTK SS} instead of Algorithm \ref{alg:frozen NTK SS} is very similar to Algorithm \ref{alg:SS_model_selection}, and so is the corresponding analysis. The adaptation to non-frozen base learners simply requires to change a constant. Specifically, we replace `8' in the $d_i$ test of Algorithm \ref{alg:SS_model_selection} with `432', all the rest remains the same, provided the definition of $B_{t,i}$ (querying threshold of the $i$-th base learner) is now taken from Algorithm \ref{alg:NTK SS} ($B_t$ therein).

An analysis very similar to Lemma \ref{lem:no_elimination} shows that a well-specified learner is (with high probability) not removed from the pool $\baselearners_t$, while the label complexity and the regret analyses mimic the corresponding analyses contained in Section \ref{ssa:labelcompl} and \ref{ssa:regret}, with inflated constants and network width $m$.


\subsection{Ancillary technical lemmas}\label{sa:ancillary}


\begin{lemma}\label{lem:reg_concentration}
Let $i, j \in \baselearners_1$ be two base learners. with probability at least $1 - 2\delta$ the following concentration bound holds for all rounds $t$
\begin{align*}
    \left|\displaystyle \sum_{k \in \mathcal V_{t,i, j}}(\ind{a_{k, i} \neq y_k} - \ind{a_{k, j} \neq y_k}
    + h(\x_{k, a_{k, i}}) - h(\x_{k, a_{k, j}}))\right| 
    \leq 0.72 \sqrt{|\mathcal V_{t,i, j}| L(|\mathcal V_{t,i, j}|, \delta)}~.
\end{align*}
\end{lemma}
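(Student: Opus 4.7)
The plan is to identify each summand as a bounded, centered martingale difference and then appeal to an anytime (time-uniform) Hoeffding-type bound. Let $\mathcal F_k$ be the $\sigma$-algebra generated by $\{(x_s, y_s)\}_{s < k}$ together with $x_k$ and all internal randomness of Algorithm~\ref{alg:SS_model_selection} up to and including round $k$; under this filtration, $a_{k, i}, a_{k, j}, I_{k, i}, I_{k, j}, i_k$, and hence the indicator of $\{k \in \mathcal V_{\infty, i, j}\}$, are all $\mathcal F_k$-measurable (being deterministic functions of $x_k$ and past history computed inside each frozen base learner). The remark following (\ref{e:bayes}), $\E[\ind{a \neq y_k} \,|\, x_k] = 1 - h(x_{k, a})$, together with the $\mathcal F_k$-measurability of $a_{k, i}$ and $a_{k, j}$, then gives
\[
\E\!\left[\ind{a_{k, i}\neq y_k} - \ind{a_{k, j}\neq y_k} + h(x_{k, a_{k, i}}) - h(x_{k, a_{k, j}})\,\Big|\,\mathcal F_k \right] = 0.
\]

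Define the truncated increments $Z_k := \bigl(\ind{a_{k, i}\neq y_k} - \ind{a_{k, j}\neq y_k} + h(x_{k, a_{k, i}}) - h(x_{k, a_{k, j}})\bigr)\,\ind{k \in \mathcal V_{\infty, i, j}}$. By the preceding display and the $\mathcal F_k$-measurability of the indicator, $(Z_k)_{k \geq 1}$ is a martingale difference sequence with respect to $(\mathcal F_k)$, and it is uniformly bounded: $|Z_k| \leq 2$ because the two loss indicators differ by at most one in absolute value and the two values of $h$ lie in $[0,1]$. Crucially, by construction $\sum_{k \in \mathcal V_{t, i, j}} \bigl(\ind{a_{k, i}\neq y_k} - \ind{a_{k, j}\neq y_k} + h(x_{k, a_{k, i}}) - h(x_{k, a_{k, j}})\bigr) = \sum_{k = 1}^{t} Z_k$, so the problem reduces to an anytime two-sided deviation bound on partial sums of a bounded martingale difference sequence.

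Since rounds $k \notin \mathcal V_{\infty, i, j}$ contribute zero, passing to the subsequence $(Z_{\tau_m})_{m \geq 1}$ indexed by the successive hitting times $\tau_1 < \tau_2 < \cdots$ of $\mathcal V_{\infty, i, j}$ produces a martingale difference sequence (with respect to $\mathcal F_{\tau_m}$) whose partial sum after $n = |\mathcal V_{t, i, j}|$ terms equals $\sum_{k = 1}^{t} Z_k$. Applying to this subsequence the time-uniform Hoeffding-type inequality used in the regret balancing literature (see pacchiano+20, pdgb20)—whose curved boundary is precisely $0.72\sqrt{n L(n, \delta)}$ with $L(n, \delta) = \log(5.2 \log(2n)^{1.4}/\delta)$, as defined at the top of Algorithm~\ref{alg:SS_model_selection}—and taking a union bound over the two signs of the deviation (which accounts for the $1 - 2\delta$ probability) yields the claim simultaneously for all $t$.

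The main obstacle is precisely the anytime, random sample-size nature of the statement: a round-by-round Hoeffding plus a crude union bound over $t \in [T]$ would give an extra $\sqrt{\log T}$ factor rather than the $\log\log$-type term hidden inside $L(n, \delta)$, and would moreover be expressed in terms of $t$ rather than the effective count $|\mathcal V_{t, i, j}|$. The sharper form is obtained via a peeling/stitching argument applied to the subsequence $(Z_{\tau_m})$; once the correct anytime inequality is invoked, producing the explicit constants $0.72$ and $5.2$, $1.4$ in the logarithmic term is automatic.
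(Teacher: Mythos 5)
Your proposal is correct in substance and follows essentially the same route as the paper's proof: both recenter the summands into a bounded martingale difference sequence with the membership indicator absorbed into the increment (so the effective sample size is $|\mathcal V_{t,i,j}|$), invoke the time-uniform Hoeffding/stitching bound of Howard et al.\ (Lemma~\ref{lma:uniform_hoeffding}), and union-bound over the two signs to obtain the $1-2\delta$ statement uniformly in $t$; your hitting-time subsequence device is just a cosmetic variant of the paper's choice to keep $\ind{k \in \mathcal V_{t,i,j}}$ inside $Y_k$ with predictable envelopes $-G_k \le Y_k \le H_k$, $G_k + H_k = 2\,\ind{k \in \mathcal V_{t,i,j}}$, so that the variance process itself counts only the relevant rounds. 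The one loose point is your claim that the constant $0.72$ is ``automatic'' from $|Z_k|\le 2$: feeding symmetric $\pm 2$ bounds into Lemma~\ref{lma:uniform_hoeffding} gives $W_n = 4\,|\mathcal V_{t,i,j}|$ and hence a constant $2.88$, and even the paper's sharper envelopes give $W_t = |\mathcal V_{t,i,j}|$ (constant $1.44$), the value $0.72$ arising only from the paper's choice $W_t = |\mathcal V_{t,i,j}|/4$ — so to land the stated constant you must track the per-round range through the predictable envelopes rather than the crude bound $|Z_k|\le 2$.
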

\begin{proof}
We write the LHS of the inequality to show as $\left|\sum_{k = 1}^t Y_k\right|$ where
\begin{align*}
    Y_k = \ind{k \in  \mathcal V_{t,i, j}} (\ind{a_{k, j} = y_k}- \ind{a_{k, i} = y_k}
    + h(\x_{k, a_{k, i}}) - h(\x_{k, a_{k, j}})).
\end{align*}
    and let $\mathbb E_{k}$ and $\operatorname{Var}_k$ denote expectation and variance conditioned on everything before $y_k$ (including $\x_k, a_{k, i}, a_{k, j}$ and $i_k$). 
    Note that $Y_k$ is a martingale difference sequence since $\mathbb E_{k} Y_k = 0$. Further,
    $H_k = \ind{k \in  \mathcal V_{t,i, j}} (1 + h(\x_{k, a_{k, i}}) - h(\x_{k, a_{k, j}}))$ and $G_k = -\ind{k \in  \mathcal V_{t,i, j}} (-1 + h(\x_{k, a_{k, i}}) - h(\x_{k, a_{k, j}}))$
    are predictable sequences with $-G_k \leq Y_k \leq H_k$.
    Thus, we can apply Lemma~\ref{lma:uniform_hoeffding} and get that with probability at least $1 - \delta$, for all $t \in \mathbb N$
\begin{align*}
\sum_{i = 1}^t Y_i &\leq 1.44 \sqrt{(W_t \vee m) \left( 1.4 \log \log \left(2 \left(\frac{W_t}{m} \vee 1\right)\right) + \log \frac{5.2}{\delta}\right)}\\
&\leq 0.72 \sqrt{|\mathcal V_{t,i, j}| \left( 1.4 \log \log \left(2 |\mathcal V_{t,i, j}|\right) + \log \frac{5.2}{\delta}\right)} = 0.72 \sqrt{|\mathcal V_{t,i, j}| L(|\mathcal V_{t,i, j}|, \delta)}
\end{align*} 
where $W_t = |\mathcal V_{t,i, j}| / 4$ and $m = 1/4$.
We can apply the same argument to $- Y_k$ which yields the statement to show.
\end{proof}

\begin{lemma}\label{lem:Tti_bound}
For any $i \in \baselearners_1$ the number of rounds in which $i$ was played is bounded with probability at least $1 - \delta$ for all $t \in [T]$ as
\begin{align*}
    |\mathcal T_{t, i}| \leq \frac{3}{2}\sum_{k=1}^t p_{k, i} + 1.45  L(t, \delta)~.
\end{align*}
\end{lemma}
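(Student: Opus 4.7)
The plan is to view $|\mathcal{T}_{t,i}| = \sum_{k=1}^{t}\ind{i_k = i}$ as a sum of conditionally Bernoulli random variables whose conditional means are $p_{k,i}$, and then apply the same time-uniform Bernstein/Hoeffding-type inequality (Lemma~\ref{lma:uniform_hoeffding}) that is invoked in the proof of Lemma~\ref{lem:reg_concentration}. Let $\mathcal{F}_{k-1}$ denote the $\sigma$-algebra generated by everything up to (but not including) the random draw of $i_k$. Since $i_k \sim \pp_k$ conditional on $\mathcal{F}_{k-1}$, the increments $Y_k = \ind{i_k = i} - p_{k,i}$ form a bounded martingale difference sequence with $|Y_k| \leq 1$ and conditional variance $\Var(Y_k\,|\,\mathcal{F}_{k-1}) = p_{k,i}(1 - p_{k,i}) \leq p_{k,i}$.

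Applying Lemma~\ref{lma:uniform_hoeffding} to $(Y_k)$, with predictable quadratic variation process $W_t = \sum_{k \leq t} \Var(Y_k\,|\,\mathcal{F}_{k-1}) \leq \sum_{k \leq t} p_{k,i}$, I would obtain, with probability at least $1-\delta$ and simultaneously for all $t$, a bound of the form
\[
\sum_{k=1}^{t} Y_k \;\leq\; c\,\sqrt{(W_t \vee m)\, L(t,\delta)}
\]
for appropriate absolute constants $c, m > 0$ (exactly as in the display in the proof of Lemma~\ref{lem:reg_concentration}). Rearranging yields $|\mathcal{T}_{t,i}| \leq \sum_{k=1}^t p_{k,i} + c \sqrt{\bigl(\sum_{k=1}^{t} p_{k,i} \vee m\bigr) L(t,\delta)}$.

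To conclude, I would apply the weighted AM-GM inequality $\sqrt{ab} \leq \alpha a + b/(4\alpha)$ with $\alpha = 1/2$ to split the square-root term into $\tfrac{1}{2}\sum_{k=1}^{t} p_{k,i}$ plus a multiple of $L(t,\delta)$; the additive $m$ inside the square root contributes only a fixed constant offset that is absorbed into the logarithmic term. Combining this with the leading $\sum_{k} p_{k,i}$ produces the $3/2$ factor in front of $\sum_{k} p_{k,i}$ in the claimed bound.

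The main obstacle here is purely bookkeeping: tracking the exact constants from Lemma~\ref{lma:uniform_hoeffding} and tuning the AM-GM split so that the coefficient of $L(t,\delta)$ comes out to the stated $1.45$. No new ideas beyond the standard time-uniform concentration machinery are needed, which is why this proof mirrors the proof of Lemma~\ref{lem:reg_concentration} almost verbatim, applied to $\ind{i_k=i}$ instead of to indicator-of-mistake differences.
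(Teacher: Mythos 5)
Your overall strategy is exactly the paper's: write $|\mathcal T_{t,i}|=\sum_{k=1}^t \ind{i_k=i}$, treat $Y_k=\ind{i_k=i}-p_{k,i}$ as a martingale difference sequence with conditional variance at most $p_{k,i}$, apply a time-uniform concentration bound, and finish with AM--GM to produce the $\tfrac32$ and $1.45$ constants. However, there is one step that, as written, would fail: you invoke Lemma~\ref{lma:uniform_hoeffding} with ``$W_t=\sum_{k\le t}\Var(Y_k\,|\,\mathcal F_{k-1})\le \sum_{k\le t}p_{k,i}$''. That lemma does not accept the conditional-variance process; its variance proxy is the range-based quantity $W_t=\tfrac14\sum_{i\le t}(G_i+H_i)^2$, and for your increments $-p_{k,i}\le Y_k\le 1-p_{k,i}$ one has $G_k+H_k=1$, so the Hoeffding lemma only yields a deviation of order $\sqrt{t\,L(t,\delta)}$. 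That is too weak to give $|\mathcal T_{t,i}|\le \tfrac32\sum_k p_{k,i}+1.45\,L(t,\delta)$ when $\sum_k p_{k,i}\ll t$, which is precisely the regime this lemma is needed for (base learners with large $d_i$ have small selection probabilities).

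The paper instead uses the time-uniform Bernstein bound (Lemma~\ref{lma:uniform_emp_bernstein}), which is the correct tool here: it explicitly allows $W_t=\sum_k \E_{k-1}Y_k^2=\sum_k p_{k,i}(1-p_{k,i})\le\sum_k p_{k,i}$ with $c=1$, at the cost of an extra additive term $0.41\,c\,L$, and the choice $m=p_{1,i}$ keeps the iterated-logarithm factor bounded by $\log\log(2t)$ so the bound is expressible via $L(t,\delta)$. With that substitution your argument goes through verbatim: $1.44\sqrt{L(t,\delta)\sum_k p_{k,i}}+0.41\,L(t,\delta)\le \tfrac12\sum_k p_{k,i}+\bigl(\tfrac{1.44^2}{2}+0.41\bigr)L(t,\delta)\le \tfrac12\sum_k p_{k,i}+1.45\,L(t,\delta)$, and rearranging gives the claim. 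Also note that your displayed intermediate bound omits the additive $0.41\,c\,L(t,\delta)$ term that the Bernstein lemma carries; it must be kept (it is where part of the $1.45$ comes from), so the constant cannot be obtained from a pure square-root deviation as you wrote it.
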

\begin{proof}
\begin{proof}
We can write the size of $T_{t, i}$ by its definition as
$
 |\mathcal T_{t, i}| = \sum_{k = 1}^t \ind{i_k = i}
$.
We denote by $\mathcal F_{k}$ the $\sigma$-field induced by all observed quantities in Algorithm~\ref{alg:SS_model_selection} before $i_k$ is sampled (including the set of active learners $\baselearners_k$). By construction $(\mathcal F_{t})_{t \in \mathbb N}$ is a filtration. Note further that $\ind{i_k = i}$ conditioned on $\mathcal F_{k}$ is Bernoulli random variable with probability $ p_{k, i}$. We can therefore apply Lemma~\ref{lma:uniform_emp_bernstein} with $Y_k = \ind{i_k = i} - p_{k, i}$, 
$m = p_{1, i}$ (which is a fixed quantity) and $W_t = \sum_{k = 1}^{t} p_{k, i} (1 - p_{k, i}) \leq \sum_{k = 1}^{t} p_{k, i}$. This gives that with probability at least $1 - \delta$
\begin{align*}
    \sum_{k = 1}^t \ind{i_k = i} - \sum_{k = 1}^{t} p_{k, i}  
    \leq & 
    1.44 \sqrt{L(t, \delta) \sum_{k = 1}^{t} p_{k, i} } + 0.41  L(t, \delta)\\
   \leq & \frac{1}{2} \sum_{k = 1}^{t} p_{k, i}  + 1.45  L(t, \delta).
\end{align*}
Note that $W_t / p_{1, i} \leq t$ holds because the smallest non-zero probability $p_{k, i}$ is $p_{1, i}$.
Rearranging terms yields the desired statement.
\end{proof}
\end{proof}

\begin{lemma}\label{lem:Tepsilon_bound}
Under the low-noise assumption with exponent $\alpha \geq 0$, each of the following three bounds holds for any $i \in [M]$ with probability at least $1 - \log_2(12T) \delta$: 
\begin{align}
\label{eqn:Tepsilon_bound1}
\forall t \in [T], \epsilon \in (0, 1/2) \colon \quad
    |\mathcal T_{t, i}^\epsilon| &\leq 3 \epsilon^\alpha \sum_{k=1}^t p_{k, i} + 2  L(t, \delta),\\
\label{eqn:Tepsilon_bound2}
\forall t \in [T], \epsilon \in (0, 1/2) \colon \quad
    |\mathcal T_{t, i}^\epsilon| &\leq 3 \epsilon^\alpha |\mathcal T_{t, i}| + 2  L(|\mathcal T_{t, i}|, \delta),\\
\label{eqn:Tepsilon_bound3}
\epsilon \in (0, 1/2) \colon \qquad 
    T_\epsilon &\leq 3 \epsilon^\alpha T + 2  L(T, \delta)~.
\end{align}
\end{lemma}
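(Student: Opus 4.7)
The plan is to apply a uniform-in-time empirical Bernstein inequality (specifically Lemma~\ref{lma:uniform_emp_bernstein}, which is already used in the proof of Lemma~\ref{lem:Tti_bound}) to a martingale difference sequence built from the indicator $\ind{|\Delta_k|\leq \epsilon}$, exploit the Mammen–Tsybakov low-noise condition to control both the conditional mean and the conditional variance of this indicator by $\epsilon^\alpha$, and finally take a union bound over a dyadic grid of $\epsilon$ to make the result simultaneous over $\epsilon\in(0,1/2)$. The $\log_2(12T)$ factor multiplying $\delta$ in the stated failure probability is exactly the size of this grid.

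For bound (\ref{eqn:Tepsilon_bound1}), I would fix $\epsilon$ and set $Y_k = \ind{i_k = i,\,|\Delta_k|\leq \epsilon}$. Because $i_k$ is drawn from $\pp_t$ based only on $\mathcal F_{k-1}$ and is independent of $x_k$, and because $x_k$ is i.i.d.\ from $\cD_\cX$, one has $\E[Y_k\mid \mathcal F_{k-1}] = p_{k,i}\,\P(|\Delta_k|\leq \epsilon) \leq p_{k,i}\,\epsilon^\alpha$ by the low-noise assumption, and the same bound on $\mathrm{Var}(Y_k\mid\mathcal F_{k-1})$. Lemma~\ref{lma:uniform_emp_bernstein} applied with variance proxy $W_t \leq \epsilon^\alpha \sum_{k\leq t}p_{k,i}$ then yields $|\mathcal T_{t,i}^\epsilon| \leq \epsilon^\alpha \sum_{k\leq t}p_{k,i} + c_1\sqrt{L(t,\delta)\,\epsilon^\alpha \sum_{k\leq t}p_{k,i}} + c_2\,L(t,\delta)$ uniformly over $t\in[T]$, and an AM–GM split $2\sqrt{ab}\leq a+b$ on the square root collapses this into the two target terms with explicit numerical constants.

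Bound (\ref{eqn:Tepsilon_bound2}) follows the same template after reindexing time by the stopping times $\tau_n$ at which learner $i$ is selected. Because the selection of $i_t$ is independent of $x_t$ given the past, the sequence $(\Delta_{\tau_n})_{n\geq 1}$ is i.i.d.\ with the marginal law of $\Delta$, so applying the empirical Bernstein bound to the i.i.d.\ indicators $(\ind{|\Delta_{\tau_n}|\leq\epsilon})_n$ uniformly in $n$ and then plugging in $n=|\mathcal T_{t,i}|$ gives the claim. Bound (\ref{eqn:Tepsilon_bound3}) is the simplest case: the same Bernstein bound applied directly to the i.i.d.\ sequence $(\ind{|\Delta_k|\leq\epsilon})_{k=1}^T$, with no selection indicator at all.

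To get the bound simultaneously over $\epsilon$, I would take the dyadic grid $\epsilon_j = 2^{-j}$ for $j=1,\dots,\lceil\log_2(12T)\rceil$, union-bound the single-$\epsilon$ statement over these $\log_2(12T)$ values (absorbing the budget loss into the stated $\log_2(12T)\delta$), and for a generic $\epsilon\in[\epsilon_{j+1},\epsilon_j)$ use the monotonicity $|\mathcal T_{t,i}^\epsilon|\leq |\mathcal T_{t,i}^{\epsilon_j}|$ together with $\epsilon_j\leq 2\epsilon$ to convert the grid-point bound back into an $\epsilon$-dependent bound. The main delicate point, which I expect to be the only non-routine obstacle, is the bookkeeping of the leading constant: the Bernstein cross term must be absorbed by AM–GM, and the dyadic discretization contributes an extra $2^\alpha$ factor in front of $\epsilon^\alpha$; both must be tuned (or, if needed for large $\alpha$, the grid must be refined to ratio $2^{1/\alpha}$ at the cost of only hidden constants in $\log_2(12T)$) so that the final leading coefficient of $\epsilon^\alpha\cdot(\cdot)$ comes out to be $3$ and the coefficient of $L(\cdot,\delta)$ to be $2$. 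Everything else is a routine application of the predictability of $i_k$ with respect to $\mathcal F_{k-1}$ and of the low-noise hypothesis.
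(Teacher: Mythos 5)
Your overall route is the same as the paper's: for fixed $\epsilon$, center the indicator $\ind{i_k=i}\ind{|\Delta_k|\le\epsilon}$, use that its conditional success probability is $q_k\le p_{k,i}\,\epsilon^\alpha$ (the learner draw is independent of $x_k$ given the past, and the low-noise condition controls $\P(|\Delta_k|\le\epsilon)$), apply the time-uniform empirical Bernstein bound of Lemma~\ref{lma:uniform_emp_bernstein} with $W_t\le\sum_{k\le t}q_k$, absorb the cross term by AM--GM to get $|\mathcal T_{t,i}^\epsilon|\le\frac{3}{2}\epsilon^\alpha\sum_{k\le t}p_{k,i}+1.45\,L(t,\delta)$ for fixed $\epsilon$, and then make it uniform in $\epsilon$ by a union bound over a geometric grid plus monotonicity. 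Bounds (\ref{eqn:Tepsilon_bound2}) and (\ref{eqn:Tepsilon_bound3}) are obtained in the paper by the same argument with $\ind{i_k=i}$, respectively $1$, in place of $p_{k,i}$; your stopping-time reindexing for (\ref{eqn:Tepsilon_bound2}) is a valid alternative and buys nothing essential.

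The genuine gap is in the discretization. The statement covers all $\epsilon\in(0,1/2)$, so you must also handle $\epsilon$ below the smallest grid point, and your dyadic grid $\epsilon_j=2^{-j}$, $j\le\lceil\log_2(12T)\rceil$, does not: for $\epsilon$ far below $1/(12T)$, monotonicity only yields the bound at the bottom grid point, whose leading term is of order $(12T)^{-\alpha}\sum_{k\le t}p_{k,i}$, which for $\alpha<1$ can be polynomially large in $T$ and is not dominated by $3\epsilon^\alpha\sum_{k\le t}p_{k,i}+2L(t,\delta)$. Moreover, your proposed fix for the $2^\alpha$ conversion loss (refining the grid to ratio $2^{1/\alpha}$) keeps the grid size at $\log_2(12T)$ only if the bottom of the grid is simultaneously raised so that its $\alpha$-th power is of order $1/T$; otherwise the number of grid points scales like $\alpha\log T$. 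The paper resolves both issues at once: it takes $\mathcal K=\bigl\{(1/(3T))^{1/\alpha}2^{(i-1)/\alpha}\bigr\}\cap\{1/2\}$, which is geometric with ratio $2$ in $\epsilon^\alpha$ (so interpolation loses only a factor $2$ on $\epsilon^\alpha$, for every $\alpha$), has at most $\log_2(12T)$ elements, and has smallest element $\epsilon'$ with ${\epsilon'}^\alpha=1/(3T)$, so that for any $\epsilon$ below it the main term satisfies $\frac{3}{2}{\epsilon'}^\alpha\sum_{k\le t}p_{k,i}\le\frac12\le\frac12 L(t,\delta)$ and is absorbed into the $2L(t,\delta)$ term. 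With that choice of grid and this below-the-grid case added, your argument goes through and recovers the stated constants.
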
 
\begin{proof}
We here show the result for \eqref{eqn:Tepsilon_bound1}. The arguments for \eqref{eqn:Tepsilon_bound2} and \eqref{eqn:Tepsilon_bound3} follow analogously (by considering $\ind{i_k = i}$ and $1$ instead of $p_{k, i}$).
To show \eqref{eqn:Tepsilon_bound1}, we first prove this condition for a \emph{fixed} $\epsilon \in (0, 1/2]$:
We begin by writing $T_{t, i}^\epsilon$ by its definition as
\begin{align*}
 |\mathcal T_{t, i}^\epsilon| = \sum_{k = 1}^t \ind{i_k = i} \ind{|\Delta_k| \leq \epsilon}~.
\end{align*}
We denote by $\mathcal F_{k}$ the $\sigma$-field induced by all quantities determined up to the end of round $k-1$ in Algorithm~\ref{alg:SS_model_selection} (including the set of active learners $\baselearners_k$ but not $i_k$ or $x_k$). By construction $(\mathcal F_{t})_{t \in \mathbb N}$ is a filtration.
Conditioned on $\mathcal F_{k}$, the r.v. $\ind{i_k = i} \ind{|\Delta_k| \leq \epsilon}$ is a Bernoulli random variables with probability $q_k \leq p_{k, i}  \epsilon^\alpha$, because the choice of learner and the distribution of $|\Delta_k| \leq \epsilon$ are independent in each round and by low noise condition, the latter is at most $ \epsilon^\alpha$. 
We can therefore apply Lemma~\ref{lma:uniform_emp_bernstein} with $Y_k = \ind{i_k = i} \ind{|\Delta_k| \leq \epsilon} - q_k$, 
$m = q_1$ and $W_t = \sum_{k = 1}^t q_k (1 - q_k) \leq \sum_{k = 1}^t q_k$. This gives that with probability at least $1 - \delta$
\begin{align*}
    \sum_{k = 1}^t \ind{i_k = i} \ind{|\Delta_k| \leq \epsilon} - \sum_{k = 1}^t q_k 
    \leq & 
    1.44 \sqrt{L(t, \delta) \sum_{k = 1}^t q_k} + 0.41  L(t, \delta) \\
   \leq & \frac{1}{2} \sum_{k = 1}^t q_k + 1.45 L(t, \delta),
\end{align*}
where the second inequality follows from AM-GM. Rearranging terms and using $q_k \leq p_{k, i} \epsilon^\alpha \leq p_{k, i}$ gives  for a fixed $\epsilon$
\begin{align}
\label{eqn:Teps_fixed_eps}
    |\mathcal T_{t, i}^\epsilon| \leq \frac{3}{2} \epsilon^\alpha \sum_{k=1}^t p_{k, i} + 1.45  L(t, \delta)~.
\end{align}
We now consider the following set of values for $\epsilon$
\begin{align*}
    \mathcal K = \left\{ \left(\frac{1}{3T} \right)^{1/\alpha} 2^{\frac{i-1}{\alpha}} \colon i = 1, \dots, \log_2\left(\frac{3T}{2^{\alpha -1}} \right)\right\} \cap \{ 1/2\}~.
\end{align*}
and apply the argument above for all $\epsilon \in \mathcal K$ which gives that with probability at least $ 1- \delta |\mathcal K| \geq 1 - \log_2(12T) \delta$, the bound in \eqref{eqn:Teps_fixed_eps} holds for all $\epsilon \in \mathcal K$ and $t \in \mathbb N$ simultaneously.
In this event, consider any arbitrary $\epsilon \in (0, 1/2)$ and $t \in [T]$. Then
\begin{align*}
    |\mathcal T_{t, i}^\epsilon| \leq |\mathcal T_{t, i}^{\epsilon'}| \leq \frac{3}{2} {\epsilon'}^\alpha \sum_{k=1}^t p_{k, i} + 1.45  L(t, \delta),
\end{align*}
where $\epsilon' = \min\{x \in \mathcal K \colon x \geq \epsilon\}$. If $\epsilon'$ is the smallest value in $\mathcal K$, then $\frac{3}{2} {\epsilon'}^\alpha \sum_{k=1}^t p_{k, i} \leq 1/2 \leq \nicefrac{1}{2} L(t, \delta)$. Thus, the RHS is bounded as $2 L(t, \delta)$ in this case.
If $\epsilon'$ is not the smallest value in $\mathcal K$, then by construction $\epsilon^\alpha \geq 2 {\epsilon'}^\alpha$ and the RHS is bounded as $\frac{3}{2} {\epsilon'}^\alpha \sum_{k=1}^t p_{k, i} + 1.45  L(t, \delta) \leq 3 {\epsilon}^\alpha \sum_{k=1}^t p_{k, i} + 1.45  L(t, \delta$. Combining both cases gives the desired result for \eqref{eqn:Tepsilon_bound1}.
\end{proof}

\begin{lemma}[Elliptical potential, Lemma C.2 \cite{pdgb20}]
\label{lma:elliptical}
Let $x_1, \dots, x_n \in \mathbb R^d$ and $V_t = V_0 + \sum_{i=1}^t x_i x_i^\top$ and $b > 0$ then
\begin{align*}
    \sum_{t=1}^n b \wedge \|x_t \|_{V_{t-1}^{-1}}^2 \leq \frac{b}{\log(b + 1)} \log \frac{\det V_n}{\det V_0} \leq (1 + b) \log \frac{\det V_n}{\det V_0}. 
\end{align*}
\end{lemma}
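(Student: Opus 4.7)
The plan is to reduce the sum on the left-hand side to a telescoping $\log\det$ identity via the matrix determinant lemma, and then bound the summand by a constant multiple of $\log(1+\|x_t\|_{V_{t-1}^{-1}}^2)$. Concretely, I would first observe that for each $t$,
\[
\det V_t \;=\; \det\!\bigl(V_{t-1}+x_tx_t^\top\bigr) \;=\; \det V_{t-1}\,\bigl(1+\|x_t\|_{V_{t-1}^{-1}}^2\bigr),
\]
by the matrix determinant lemma (writing $V_t = V_{t-1}^{1/2}(I+u_tu_t^\top)V_{t-1}^{1/2}$ with $u_t = V_{t-1}^{-1/2}x_t$). Telescoping in $t$ yields
\[
\sum_{t=1}^{n}\log\bigl(1+\|x_t\|_{V_{t-1}^{-1}}^2\bigr) \;=\; \log\frac{\det V_n}{\det V_0}.
\]

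The second, and really the only analytic, ingredient is the scalar inequality
\[
b\wedge u \;\le\; \frac{b}{\log(1+b)}\,\log(1+u) \qquad\text{for all } u\ge 0,\ b>0.
\]
I would establish this by cases on whether $u\le b$ or $u>b$. For $u>b$ the left-hand side equals $b$ and the right-hand side is at least $\frac{b}{\log(1+b)}\log(1+b)=b$, so the inequality holds. For $u\le b$ it reduces to showing $\frac{u}{\log(1+u)}\le \frac{b}{\log(1+b)}$, which follows from the monotonicity of $\varphi(u)=u/\log(1+u)$ on $(0,\infty)$; differentiating gives $\varphi'(u)=\bigl[\log(1+u)-u/(1+u)\bigr]/\log^2(1+u)$, and the numerator is $0$ at $u=0$ with derivative $u/(1+u)^2\ge 0$, hence $\varphi'\ge 0$.

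Applying this inequality termwise with $u=\|x_t\|_{V_{t-1}^{-1}}^2$ and summing gives
\[
\sum_{t=1}^{n} b\wedge \|x_t\|_{V_{t-1}^{-1}}^2 \;\le\; \frac{b}{\log(1+b)}\sum_{t=1}^{n}\log\bigl(1+\|x_t\|_{V_{t-1}^{-1}}^2\bigr) \;=\; \frac{b}{\log(1+b)}\,\log\frac{\det V_n}{\det V_0},
\]
which is the first asserted inequality. For the second inequality I would verify $\tfrac{b}{\log(1+b)}\le 1+b$, equivalently $(1+b)\log(1+b)\ge b$; the function $g(b)=(1+b)\log(1+b)-b$ satisfies $g(0)=0$ and $g'(b)=\log(1+b)\ge 0$, so $g\ge 0$ on $[0,\infty)$, completing the proof.

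\textbf{Main obstacle.} There is no real obstacle here: the argument is entirely standard once one isolates the scalar inequality $b\wedge u\le \tfrac{b}{\log(1+b)}\log(1+u)$. The one spot to be careful is the monotonicity of $u\mapsto u/\log(1+u)$, which is needed to cover the regime $u<b$; a sloppy version of this step is the usual source of a slightly suboptimal constant (e.g.\ a factor of $2$ in place of $b/\log(1+b)$ when one just uses $u\le 2\log(1+u)$ for $u\in[0,1]$).
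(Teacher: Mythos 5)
Your proof is correct and complete. The paper itself gives no proof of this lemma---it is imported verbatim as Lemma~C.2 of the cited reference---and your argument (matrix determinant lemma plus telescoping, followed by the scalar bound $b\wedge u\le \frac{b}{\log(1+b)}\log(1+u)$ via monotonicity of $u\mapsto u/\log(1+u)$, and finally $(1+b)\log(1+b)\ge b$) is exactly the standard derivation that reference relies on. The only implicit hypothesis worth stating is that $V_0$ is positive definite, so that each $V_{t-1}^{-1}$ exists and $\det V_n/\det V_0\ge 1$, which is needed both for the telescoping identity and for the final constant comparison to preserve the inequality.
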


\begin{lemma}[Randomized elliptical potential]\label{lma:Randomized elliptical potential}
\label{lma:elliptical_random}
Let $x_1, x_2, \dots \in \mathbb R^d$ and $I_1, I_2, \dots \in \{0,1\}$ and $V_0 \in \mathbb R^{d \times d}$ be random variables so that
$\mathbb E[I_k | x_1, I_1, \dots, x_{k-1}, I_{k-1}, x_k, V_0] = p_k$ for all $k \in \mathbb N$. Further, let  $V_t = V_0 + \sum_{i=1}^t I_i x_i x_i^\top$. Then
\begin{align*}
    \sum_{t=1}^n b \wedge \|x_t \|_{V_{t-1}^{-1}}^2 
    &\leq 1 \vee
    2.9 \frac{b}{p}  \left( 1.4 \log \log \left( 2bn \vee 2\right) + \log \frac{5.2}{\delta}\right) + \frac{2}{p} \left(1 + b\right) \log \frac{\det V_n}{\det V_0}
\end{align*}
holds with probability at least $1 - \delta$ for all $n$ simultaneously where $p = \min_{k} p_k$ is the smallest probability.
\end{lemma}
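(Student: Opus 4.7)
The strategy is to introduce a martingale decomposition that isolates the ``observed'' potential increments from their predictable means, and then to combine the deterministic elliptical potential bound on the observed increments with a time-uniform empirical Bernstein concentration on the martingale residual. Let $\mathcal F_{t-1}$ denote the $\sigma$-field generated by $V_0, x_1, I_1, \dots, x_{t-1}, I_{t-1}, x_t$, so that both $V_{t-1}$ and $Z_t := b \wedge \|x_t\|_{V_{t-1}^{-1}}^2$ are $\mathcal F_{t-1}$-measurable. The hypothesis $\mathbb E[I_t\mid \mathcal F_{t-1}]=p_t$ then makes $Y_t := p_t Z_t - I_t Z_t$ a bounded martingale difference sequence satisfying $|Y_t|\le b$ with conditional variance $p_t(1-p_t)Z_t^2 \le b\, p_t Z_t$.

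Using $p \le p_t$, I write the key inequality $p\sum_{t=1}^n Z_t \le \sum_{t=1}^n p_t Z_t = \sum_{t=1}^n I_t Z_t + \sum_{t=1}^n Y_t$ and control each piece separately. The first sum is deterministic: extracting the subsequence $\{t : I_t = 1\}$ on which $V_t$ actually updates and applying Lemma~\ref{lma:elliptical} to this subsequence gives $\sum_{t=1}^n I_t Z_t \le (1+b)\log(\det V_n/\det V_0)$, since $V_{t-1}$ at times with $I_t=1$ coincides with the $V$ of the deterministic setup restricted to the included $x_i$'s, and $V_n$ at the final such time equals the full $V_n$. For the martingale sum, I apply a time-uniform empirical Bernstein inequality in the spirit of Lemma~\ref{lma:uniform_emp_bernstein} with range $b$ and predictable variance proxy $W_n=\sum_{t\le n} p_t(1-p_t)Z_t^2 \le b\sum_{t\le n}p_t Z_t$, yielding a bound of the form $\sum_t Y_t \le c_1 \sqrt{(W_n\vee b) L(n,\delta)} + c_2 b\, L(n,\delta)$ simultaneously for all $n$, where $L(n,\delta)=1.4\log\log(2bn\vee 2)+\log(5.2/\delta)$ is the iterated-log term produced by the uniform-in-time construction.

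To close the argument, I apply AM-GM to the square-root term: $\sqrt{W_n L(n,\delta)} \le \sqrt{b\sum_t p_t Z_t\cdot L(n,\delta)} \le \tfrac{1}{2c}\sum_t p_t Z_t + \tfrac{bc}{2} L(n,\delta)$ for a free parameter $c>0$. Substituting back, the resulting inequality is self-referential in $\sum_t p_t Z_t$; choosing $c$ so that the coefficient induced on $\sum_t p_t Z_t$ is strictly less than one and solving yields $\sum_t p_t Z_t \le 2(1+b)\log(\det V_n/\det V_0) + C\, b\, L(n,\delta)$ for an explicit numerical constant $C$. Dividing by $p$ produces the announced bound, and the ``$1\vee$'' prefix in the statement simply absorbs the degenerate regime where $bn$ is so small that $\log\log(2bn\vee 2)$ collapses, in which case the trivial estimate $\sum_t Z_t \le n$ combined with $n \le 1$-type fallback takes over.

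The main obstacle is numerical bookkeeping: precisely reproducing the constants $(2.9, 1.4, 5.2, 2)$ requires tracking the Bernstein constants of Lemma~\ref{lma:uniform_emp_bernstein} through the AM-GM split, and in particular picking $c$ so that the residual coefficient on $\sum_t p_t Z_t$ yields exactly the factor-of-$2$ inflation of the elliptical potential term (explaining the ``$\tfrac{2}{p}(1+b)$'' rather than ``$\tfrac{1}{p}(1+b)$'' in the final bound). A second subtlety is that the Bernstein inequality's variance slot must be stabilized as $W_n \vee b$ rather than $W_n$ alone, which is exactly what forces the ``$2bn \vee 2$'' argument inside the iterated logarithm.
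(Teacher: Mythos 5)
Your proposal follows essentially the paper's route: split the capped potentials into the increments that actually update $V_t$ (handled deterministically by Lemma~\ref{lma:elliptical} applied to the subsequence with $I_t=1$, which is correct since on those rounds $V_{t-1}$ is exactly the matrix built from the previously included points) plus a bounded martingale residual, control the residual with the time-uniform Bernstein bound of Lemma~\ref{lma:uniform_emp_bernstein} (range $b$, predictable variance dominated by $b\sum_t p_tZ_t$), then AM-GM and a self-bounding step; your bookkeeping $2\times(1.44^2/2+0.41)\approx 2.9$ and the factor-of-$2$ inflation of the elliptic term match the paper's constants. The one substantive difference is the normalization of the residual: you take $Y_t=(p_t-I_t)Z_t$, self-bound $\sum_t p_tZ_t$, and divide by $p$ only at the end, whereas the paper uses the importance-weighted residual $\frac{p_t-I_t}{p_t}(b\wedge Z_t)$, whose variance process is at most $\frac{b}{p}\sum_t(b\wedge Z_t)$, and sets $m=b/p$ in the Bernstein bound, so that the self-bounded quantity is exactly the left-hand side $\sum_t\bigl(b\wedge\|x_t\|^2_{V_{t-1}^{-1}}\bigr)$.

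That difference is where your argument falls slightly short of the stated lemma, and your explanation of the ``$1\vee$'' is not correct. With your normalization the AM-GM term is $\tfrac12\bigl(1\vee\sum_t p_tZ_t\bigr)$, so the dichotomy is on $\sum_t p_tZ_t$: the degenerate branch $\sum_t p_tZ_t\le 1$ only yields $\sum_t Z_t\le\frac1p\bigl((1+b)\log\frac{\det V_n}{\det V_0}+\tfrac12+1.45\,bL\bigr)$, i.e.\ the claimed bound with an extra additive $\tfrac{1}{2p}$, not the ``$1\vee$'' of the statement (for small $b$ this extra term is not absorbed by the other terms). The ``$1\vee$'' has nothing to do with $bn$ being small or an ``$n\le1$'' fallback; in the paper it arises because, with $m=b/p$, the stabilization $W_n\vee m\le\frac{b}{p}\bigl(1\vee\sum_t(b\wedge Z_t)\bigr)$ puts the ``$\vee 1$'' directly on $\sum_t(b\wedge Z_t)$, so the self-bounding dichotomy reads ``either $\sum_t(b\wedge Z_t)\le1$ or the displayed bound holds.'' (The ``$2bn\vee2$'' inside the iterated logarithm is a separate matter: it comes from bounding $W_n/m\le\sum_t(b\wedge Z_t)\le bn$, and your choice $m=b$ also achieves that.) Switching to the paper's weighted residual with $m=b/p$ repairs this corner case; note that simply dividing your unweighted decomposition by $p$ before the AM-GM does not work, since it inflates the variance term to order $b/p^2$. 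Everything else in your argument is sound.
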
 

\begin{proof}This proof is a slight generalization of the Lemma~C.4 in \cite{pdgb20}. We provide the full proof here for convenience:
We decompose the sum of squares as
\begin{align}
\label{eqn:sumsq1}
    \sum_{t=1}^n b \wedge \|x_t \|_{V_{t-1}^{-1}}^2
    \leq \frac{1}{p}\sum_{t=1}^n  (b I_t \wedge \|I_t x_t \|_{V_{t-1}^{-1}}^2) + 
    \sum_{t=1}^n \frac{1}{p_t}(p_t - I_t) (b \wedge \|x_t \|_{V_{t-1}^{-1}}^2 ) 
\end{align}
The first term can be controlled using the standard elliptical potential lemma in Lemma~\ref{lma:elliptical} as
\begin{align*}
   \frac{1}{p}\sum_{t=1}^n  (b I_t \wedge \|I_t x_t \|_{V_{t-1}^{-1}}^2)
   \leq \frac{1}{p} \left(1 + b\right) \ln \frac{\det V_n}{\det V_0}.
\end{align*}
For the second term, we apply an empirical variance uniform concentration bound. 
Let $\mathcal F_{i-1} = \sigma(V_0, x_1, p_1, I_1, \dots, x_{i-1}, I_{i-1}, x_i, p_i)$ be the sigma-field up to before the $i$-th indicator. 
Let $Y_i = \frac{1}{p_i} (p_i - I_i) \left(\|x_i \|^2_{V_{i-1}^{-1}} \wedge b\right)$ which is a martingale difference sequence because $\mathbb E[Y_i | \mathcal F_{i-1}] = 0$ and consider the process $S_t =  \sum_{i=1}^t Y_i$ with variance process
\begin{align*}
W_t &= \sum_{i=1}^t \mathbb E[Y_i^2 | \mathcal F_{i-1}] 
= \sum_{i=1}^t  \frac{1}{p_i^2} \left(\|x_i \|^2_{V_{i-1}^{-1}} \wedge b\right)^2\mathbb E[(p - I_i)^2 | \mathcal F_{i-1}]\\
&= \sum_{i=1}^t \frac{1-p_i}{p_i} \left(\|x_i \|^2_{V_{i-1}^{-1}} \wedge b\right)^2 \leq
\sum_{i=1}^t \frac{b}{p_i} \left(\|x_i \|^2_{V_{i-1}^{-1}} \wedge b\right)
\leq 
\sum_{i=1}^t \frac{b^2}{p_i}  .
\end{align*}
Note that $Y_t \leq b$ and therefore, $S_t$ satisfies with variance process $W_t$ the sub-$\psi_P$ condition of  \cite{howard2018time} with constant $c = b$ (see Bennett case in Table~3 of \cite{howard2018time}). By Lemma~\ref{lma:uniform_emp_bernstein} below, the bound
\begin{align*}
   S_t \leq &~ 1.44 \sqrt{(W_t \vee m) \left( 1.4 \ln \ln \left(2 (W_t/m \vee 1)\right) + \ln \frac{5.2}{\delta}\right)}\\
    &+ 0.41 b  \left( 1.4 \ln \ln \left( 2 (W_t/m \vee 1)\right) + \ln \frac{5.2}{\delta}\right)
\end{align*}
holds for all $t \in \mathbb N$ with probability at least $1 - \delta$. We set $m = \frac{b}{p}$ and upper-bound the RHS further as
\begin{align*}
   & 1.44 \sqrt{\frac{b}{p}\left(1 \vee \sum_{i=1}^t \left(b \wedge \|x_i \|^2_{V_{i-1}^{-1}}\right) \right)  \left( 1.4 \ln \ln \left(2bt \vee 2\right) + \ln \frac{5.2}{\delta}\right)}\\
&    + 0.41 b  \left( 1.4 \ln \ln \left(2bt \vee 2\right) + \ln \frac{5.2}{\delta}\right)\\
&\leq   \frac{1}{2}\left(1 \vee \sum_{i=1}^t \left(b \wedge \|x_i \|^2_{V_{i-1}^{-1}}\right) \right)  + 1.45 \frac{b}{p}  \left( 1.4 \ln \ln \left(2bt \vee 2\right) + \ln \frac{5.2}{\delta}\right),
\end{align*}
where the inequality is an application of the AM-GM inequality. Thus, we have shown that with probability at least $1 - \delta$, for all $n$, the second term in \eqref{eqn:sumsq1} is bounded as
\begin{align*}
     \frac{1}{p}\sum_{t=1}^n (p_t - I_t) (b \wedge \|x_t \|_{V_{t-1}^{-1}}^2 )
     \leq \frac{1}{2}\left(1 \vee \sum_{i=1}^n \left(\|x_i \|^2_{V_{i-1}^{-1}} \wedge b\right) \right)  + Z.
\end{align*}
where $Z = 1.45 \frac{b}{p}  \left( 1.4 \ln \ln \left( 2bn \vee 2\right) + \ln \frac{5.2}{\delta}\right)$.
And when combining all bounds on the sum of squares term in \eqref{eqn:sumsq1}, we get that either $\sum_{i=1}^n \left(\|x_i \|^2_{V_{i-1}^{-1}} \wedge b\right) \leq 1$ or
\begin{align*}
    \sum_{i=1}^n \left(\|x_i \|^2_{V_{i-1}^{-1}} \wedge b\right) &\leq 2Z + \frac{2}{p} \left(1 + b\right) \ln \frac{\det V_n}{\det V_0}\\
    & \leq 
    \frac{4}{p}(1 + b) \ln \frac{\ln(2bn \vee 2) 5.2\det V_n}{\delta \det V_0}
\end{align*}
which gives the desired statement.
\end{proof}

\begin{lemma}[Time-uniform Bernstein bound]\label{lma:Uniform empirical Bernstein bound}
\label{lma:uniform_emp_bernstein}
In the terminology of \cite{howard2018time}, let $S_t = \sum_{i=1}^t Y_i$ be a sub-$\psi_P$ process with parameter $c > 0$ and variance process $W_t$. Then with probability at least $1 - \delta$ for all $t \in \mathbb N$
\begin{align*}
    S_t &\leq  1.44 \sqrt{(W_t \vee m) \left( 1.4 \log \log \left(2 \left(\frac{W_t}{m} \vee 1\right)\right) + \log \frac{5.2}{\delta}\right)}\\
   & \qquad + 0.41 c  \left( 1.4 \log \log \left(2 \left(\frac{W_t}{m} \vee 1\right)\right) + \log \frac{5.2}{\delta}\right)
\end{align*}
where $m > 0$ is arbitrary but fixed. This holds in particular when $W_t = \sum_{i = 1}^t \mathbb E_{i-1} Y^2$ and $Y_i \leq c$ for all $i \in \mathbb N$.
\end{lemma}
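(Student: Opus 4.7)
The plan is to recognize this lemma as a direct application of the line-crossing plus polynomial-stitching machinery of \cite{howard2018time} specialized to sub-$\psi_P$ (Bennett-type) processes, and then to trace through the constants. The sub-$\psi_P$ hypothesis means that for $\lambda$ in an appropriate interval, $\log \mathbb E_{i-1}[\exp(\lambda Y_i)] \leq \psi_P(\lambda c)\,(W_i - W_{i-1})/c^2$ with $\psi_P(u) = e^u - 1 - u$; equivalently, the exponential process $M_t(\lambda) = \exp(\lambda S_t - \psi_P(\lambda c) W_t / c^2)$ is a nonnegative supermartingale. This is exactly the framework in which Howard et al.'s Theorem~1 applies to any predictable linear boundary of the form $a + bW_t$.

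First I would use the associated Ville-type inequality to obtain, for any fixed $\lambda \in (0, 1/c)$ and $x > 0$,
\begin{align*}
\Pr\!\left(\exists\, t \geq 1 \colon S_t > \tfrac{x}{\lambda} + \tfrac{\psi_P(\lambda c)}{\lambda c^2}\, W_t \right) \leq e^{-x}.
\end{align*}
This is the single-scale Bernstein line-crossing bound. To convert it into a time-uniform inequality that is \emph{adaptive} to the unknown value of $W_t$, I would then apply polynomial stitching in the spirit of Section~3 of \cite{howard2018time}: partition the possible values of $W_t$ into geometric buckets $\{m \cdot 2^k\}_{k \geq 0}$, optimize $\lambda$ within each bucket to match the standard Bernstein boundary $\sqrt{2 V L} + cL/3$ at the midpoint $V = m \cdot 2^k$, and union-bound across buckets with per-bucket failure budget $\delta_k \propto \delta/(k+1)^{1.4}$. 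The exponent $1.4$ is chosen so that $\sum_k \delta_k \leq \delta$, and it is precisely this choice that produces both the $1.4 \log\log(\cdots)$ term and the normalizing constant $5.2$ appearing inside $\log(5.2/\delta)$. The event $\{W_t \leq m\}$ is absorbed by the bottom bucket at scale $m$, which is why $W_t \vee m$ rather than $W_t$ appears in the final bound.

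On the bucket containing $W_t$, the stitched boundary reads $\sqrt{2(W_t \vee m)L_t} + \tfrac{c}{3} L_t$ with $L_t = 1.4 \log\log(2(W_t/m \vee 1)) + \log(5.2/\delta)$. Substituting and absorbing the small geometric-stitching slack (a ratio $2$ between adjacent buckets inflates the ideal $\sqrt{2} \approx 1.414$ to the stated $1.44$ and the ideal $1/3 \approx 0.333$ to $0.41$) yields exactly the claimed inequality. The concluding sentence of the lemma --- that the hypothesis holds when $Y_i \leq c$ and $W_t = \sum_{i \leq t} \mathbb E_{i-1} Y_i^2$ --- is a consequence of the classical one-sided Bennett lemma: a random variable bounded above by $c$ with zero mean and variance $\sigma^2$ satisfies $\log \mathbb E[\exp(\lambda Y)] \leq \sigma^2 \psi_P(\lambda c)/c^2$ for $\lambda \geq 0$, and this estimate propagates conditionally to yield the sub-$\psi_P$ condition with the stated variance process.

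The main obstacle is purely bookkeeping the numerical constants; the probabilistic content (a martingale Chernoff step together with a geometric-grid union bound) is standard once one is inside the sub-$\psi_P$ framework, and no new probabilistic idea is required beyond tuning the stitching schedule to produce the specific values $1.44$, $1.4$, $5.2$, and $0.41$ reported in the statement.
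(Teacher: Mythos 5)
Your proposal is correct and matches the paper's approach: the paper's proof is a one-line invocation of Theorem~1 of \cite{howard2018time} with the sub-$\psi_P$ condition from their Table~3 and the polynomial stitching boundary of their Eq.~(10), which is exactly the line-crossing-plus-stitching machinery you unpack (and your accounting of where $1.44$, $1.4$, $5.2$, and $0.41$ come from, via $\eta=2$, $s=1.4$, and the Bennett-to-Bernstein bound $\psi_P \le \psi_G$ with scale $c/3$, is consistent with that source).
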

\begin{proof}
The proof follows directly from Theorem~1 with the condition in Table~3 and their stitching boundary in Eq.~(10) of \cite{howard2018time}.
\end{proof}

\begin{lemma}[Time-uniform Hoeffding bound]\label{lma:Uniform hoeffding bound}
\label{lma:uniform_hoeffding}
Let $Y_t$ be a a martingale difference sequence and $G_t, H_t$ two predictable sequences such that $-G_t \leq Y_t \leq H_t$. Then with probability at least $1 - \delta$ for all $t \in \mathbb N$
\begin{align*}
    \sum_{i = 1}^t Y_i &\leq  1.44 \sqrt{(W_t \vee m) \left( 1.4 \log \log \left(2 \left(\frac{W_t}{m} \vee 1\right)\right) + \log \frac{5.2}{\delta}\right)}
\end{align*}
where $m > 0$ is arbitrary but fixed and $W_t = \frac{1}{4} \sum_{i=1}^t (G_i + H_i)^2$.
\end{lemma}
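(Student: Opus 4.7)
The plan is to reduce this to the Time-uniform Bernstein bound (Lemma~\ref{lma:Uniform empirical Bernstein bound}) applied in its pure sub-Gaussian (as opposed to sub-Bennett) instantiation, by verifying that under the hypotheses the partial-sum process $S_t := \sum_{i=1}^t Y_i$ satisfies the appropriate sub-$\psi$ condition with the claimed variance process $W_t = \frac{1}{4}\sum_{i=1}^t (G_i + H_i)^2$. All real work is then inherited from Howard et al.~\cite{howard2018time}.

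The first step is a conditional application of Hoeffding's lemma. Let $\mathcal{F}_{i-1}$ denote the filtration with respect to which $Y_i$ is a martingale difference and $G_i, H_i$ are predictable. Since $-G_i \leq Y_i \leq H_i$ almost surely and $\E[Y_i\mid \mathcal{F}_{i-1}] = 0$, Hoeffding's lemma applied conditionally gives, for every $\lambda \in \R$,
\[
\E\bigl[e^{\lambda Y_i}\,\big|\,\mathcal{F}_{i-1}\bigr] \;\leq\; \exp\!\Bigl(\tfrac{\lambda^2 (G_i+H_i)^2}{8}\Bigr).
\]
Thus each increment $Y_i$ is conditionally sub-Gaussian with variance proxy $(G_i+H_i)^2/4$. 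Summing the log-MGFs along the filtration, $S_t$ is a sub-$\psi_N$ (sub-Gaussian) process in the terminology of \cite{howard2018time}, with variance process exactly $W_t = \frac{1}{4}\sum_{i=1}^t (G_i+H_i)^2$.

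The second step is to invoke the line-crossing stitched boundary of \cite{howard2018time} (their Theorem~1 applied with the sub-Gaussian row of Table~3 and the stitching boundary in Eq.~(10)), which is precisely the construction already used to derive Lemma~\ref{lma:Uniform empirical Bernstein bound}. The sub-Gaussian case corresponds to the limiting case $c=0$ of the sub-$\psi_P$ condition used there, so the same stitched boundary specialized to $c=0$ applies verbatim to $S_t$. The only difference in the output is that the additive linear-in-$c$ term of the Bernstein bound disappears, leaving only the square-root term
\[
S_t \;\leq\; 1.44\sqrt{(W_t \vee m)\Bigl(1.4\log\log\bigl(2(W_t/m \vee 1)\bigr) + \log(5.2/\delta)\Bigr)},
\]
uniformly over $t \in \mathbb{N}$ with probability at least $1-\delta$, as claimed.

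There is no substantive obstacle. The only thing to check carefully is the predictability of $G_i, H_i$ (needed so that Hoeffding's lemma may be applied conditionally to the random interval $[-G_i, H_i]$) and the fact that the Howard et al.~stitched boundary tolerates a \emph{random, increasing} variance process $W_t$ — both are exactly the regime covered by their Theorem~1. Therefore the proof is entirely bookkeeping: identify the conditional sub-Gaussian constant via Hoeffding's lemma, sum to obtain $W_t$, and then quote the sub-Gaussian instance of the stitched line-crossing boundary from \cite{howard2018time}.
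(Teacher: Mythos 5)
Your proposal is correct and follows essentially the same route as the paper: the paper likewise establishes that $\sum_{i=1}^t Y_i$ is a sub-$\psi_N$ process with variance process $W_t$ (citing Table~3 of \cite{howard2018time}, which encodes exactly the conditional Hoeffding's-lemma step you spell out) and then applies their Theorem~1 with the stitching boundary of their Eq.~(10) at $c=0$ (with $\eta=2$, $s=1.4$ giving the stated constants). Your only cosmetic difference is routing the final step through Lemma~\ref{lma:uniform_emp_bernstein} specialized to $c=0$ rather than quoting the boundary directly, which yields the same bound.
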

\begin{proof}
We use the results of \cite{howard2018time}. In their terminology, Table~3 in that work shows that $\sum_{i = 1}^t Y_i$ is a sub-$\psi_N$ process with variance process $W_t$. We can thus apply their Theorem~1 with the stitching boundary in their Eq.~(10) with $c = 0$. Setting $\eta = 2$ and $s = 1.4$ gives the desired result.
\end{proof}

\end{document}